\documentclass{article}

\usepackage[preprint]{neurips_2026}
\usepackage[utf8]{inputenc}
\usepackage[T1]{fontenc}
\usepackage{xcolor}
\definecolor{citeblue}{HTML}{1B4F72}      
\definecolor{linkblue}{HTML}{21618C}      
\definecolor{urlblue}{HTML}{2874A6}       
\definecolor{scmoarow}{HTML}{EBF5FB}      
\definecolor{sampfill}{HTML}{E8F6F3}
\definecolor{sampbord}{HTML}{73C6B6}
\definecolor{outfill}{HTML}{FDEBD0}
\definecolor{outbord}{HTML}{DC7633}
\definecolor{arrowcol}{HTML}{7F8C8D}
\definecolor{scmoabord}{HTML}{2874A6}
\definecolor{scmoafill}{HTML}{EBF5FB}
\definecolor{stepfill}{HTML}{FFF9C4}
\definecolor{stepbord}{HTML}{F9A825}
\definecolor{phasefill}{HTML}{E3F2FD}
\definecolor{phasebord}{HTML}{1565C0}
\definecolor{goafill}{HTML}{F3E5F5}
\definecolor{goabord}{HTML}{7B1FA2}
\definecolor{metafill}{HTML}{EDE7F6}
\definecolor{loopcolor}{HTML}{AB47BC}

\usepackage[
  colorlinks=true,
  citecolor=citeblue,
  linkcolor=linkblue,
  urlcolor=urlblue,
]{hyperref}
\usepackage{url}
\usepackage{booktabs}
\usepackage{amsfonts}
\usepackage{amsmath}
\usepackage{amssymb}
\usepackage{amsthm}
\usepackage{nicefrac}
\usepackage{microtype}
\usepackage{graphicx}
\usepackage{placeins}
\usepackage{float}
\usepackage{algorithm}
\usepackage{algpseudocode}
\usepackage{multirow}
\usepackage{colortbl}
\usepackage{enumitem}
\usepackage{wrapfig}
\usepackage{tikz}
\usetikzlibrary{arrows.meta, positioning, calc, fit, backgrounds, shapes.geometric, decorations.pathreplacing}

\newcommand{\PP}{\mathbb{P}}             
\newcommand{\EE}{\mathbb{E}}             
\newcommand{\ind}{\mathbf{1}}            
\newcommand{\rhobar}{\bar\rho}           
\newcommand{\pmin}{p_{\min}}             
\newcommand{\kmax}{k_{\max}}             
\DeclareMathOperator*{\argmax}{arg\,max}

\theoremstyle{plain}
\newtheorem{theorem}{Theorem}
\newtheorem{proposition}[theorem]{Proposition}
\newtheorem{corollary}[theorem]{Corollary}

\theoremstyle{definition}

\newtheorem{remark}{Remark}

\title{Beyond Consensus: Trace-Level Synthesis in\\Mixture of Agents}

\author{
  Shreyas Fadnavis \\
  Bioscope AI \\
  \texttt{shreyas.fadnavis@bioscope.ai} \\
  \And
  Praitayini Kanakaraj \\
  Bioscope AI \\
  \texttt{praitayini.kanakaraj@bioscope.ai} \\
  \And
  Felix Wyss \\
  Bioscope AI \\
  \texttt{felix.wyss@bioscope.ai}
}

\begin{document}
\raggedbottom

\maketitle


\begin{abstract}
When multiple LLM agents solve the same problem, standard practice
compresses each agent's reasoning into a majority vote or layered
synthesis, treating agreement as the finish line.
We show this is unnecessarily lossy: an LLM aggregator that reads
complete reasoning traces recovers correct solutions even when
agents unanimously agree, with beneficial corrections consistently
outweighing harmful ones---the \emph{aggregation paradox}.
Majority voting has a ceiling that perturbation diversity does not
raise (error correlations are identical); the aggregator's gain
comes from trace-level complementarity, assembling correct
intermediate steps from minority chains that voting discards.
These findings motivate Self-Consistent Mixture of
Agents which generates trace diversity through semantic-preserving
input perturbations, safeguards the majority via anchored refinement
with provable non-degradation guarantees, and always
synthesizes---never gates on consensus.
A single model with perturbation-induced trace variation outperforms
heterogeneous model pools across structured reasoning, PhD-level
science, competition mathematics, and competitive programming.
The unit of aggregation should be the reasoning trace, not the answer.
\end{abstract}

\section{Introduction}
\label{sec:intro}

Inference-time reasoning in large language models increasingly relies
on generating multiple candidate solutions and combining them.
Self-consistency~\citep{wang2023selfconsistency} votes over
chain-of-thought answers, Mixture of
Agents~\citep{wang2025moa} synthesizes outputs across model layers,
multi-agent debate~\citep{du2024debate} lets agents revise after
reading each other's reasoning, and verification-guided
selection~\citep{zhang2025thinkprm,rocnreroll2026} picks the
best candidate.  Despite their differences, all reduce a set of
reasoning traces to a single answer---by vote, convergence, or
selection (see Appendix~\ref{app:architecture-comparison} for an
architectural comparison).

This reduction discards most of the information that reasoning
produces.  A chain-of-thought trace records proof strategies,
intermediate calculations, dead ends, and implicit assumptions---yet
at the point of aggregation, each trace is compressed to a single
answer token (in voting) or an output passage (in layered synthesis).
When agents converge, the standard heuristic is to accept the
consensus and move on~\citep{cats2026}, setting aside minority chains
that may contain correct intermediate steps the majority missed.

We find that this information loss has a measurable cost.  When we
replace majority voting with an LLM aggregator that reads the complete
reasoning traces, it recovers correct solutions even when agents
unanimously agree on the wrong answer.  Across benchmarks spanning
structured reasoning, PhD-level science, competition mathematics, and
competitive programming, beneficial corrections consistently outweigh
harmful ones (\S\ref{sec:paradox}).  We call this the \emph{aggregation
paradox}: synthesis over traces improves accuracy precisely where
voting indicates no improvement is needed.
Figure~\ref{fig:conceptual} illustrates the full pipeline on a
GPQA-Diamond problem where five perturbations produce five different
proof strategies, yet trace-level synthesis assembles evidence that
no single trace contains (Appendix~\ref{app:case-studies}). The mechanism goes beyond what existing frameworks can explain.
The voting ceiling is not raised by perturbation diversity---error
correlations are statistically identical
(\S\ref{sec:finding2})---and a single model with
input perturbations matches or exceeds heterogeneous model
pools (\S\ref{sec:scaling}).  The improvement comes from trace-level
complementarity---different chains contain different correct intermediate
steps, and the aggregator assembles solutions that no individual chain
produced---with cheap surface variation sufficing to induce
it~(\S\ref{sec:finding3};~\citealt{modc2026,kruszewski2026filtering}). These findings motivate \textbf{SC-MoA} (Self-Consistent Mixture of
Agents; \S\ref{sec:method}, Figure~\ref{fig:pipeline}):
perturbation-based trace diversity that sidesteps mode collapse,
anchored refinement that converts the debate
martingale~\citep{choi2025debate} into a submartingale with provable
non-degradation guarantees, and synthesis that never gates
on consensus.

\paragraph{Contributions.}
\begin{itemize}[leftmargin=1.2em,itemsep=1pt,topsep=1pt,parsep=0pt]
\item The \emph{aggregation paradox}: trace-level synthesis
  outperforms voting even at unanimous consensus, with beneficial
  flips outweighing harmful ones
  (\S\ref{sec:finding1}; Proposition~\ref{prop:synthesis},
  Corollary~\ref{cor:dominance}).
\item Perturbation diversity does not decorrelate errors
  ($\rhobar$ is identical for diverse vs.\ i.i.d.\ proposals),
  yet synthesis exceeds the voting ceiling via trace-level
  complementarity (\S\ref{sec:finding2}); cheap surface
  perturbation suffices (\S\ref{sec:finding3}).
\item SC-MoA (\S\ref{sec:method}, Figure~\ref{fig:pipeline}):
  perturbation diversity, anchored refinement with submartingale
  guarantees (Proposition~\ref{prop:anchoring}), and universal
  aggregation.  Highest accuracy on all five benchmarks under
  our evaluation protocol (greedy decoding, matched compute;
  Table~\ref{tab:main}); a single model outperforms
  heterogeneous pools (\S\ref{sec:scaling}).
\end{itemize}


\begin{figure*}[t]
\centering
\includegraphics[width=\textwidth]{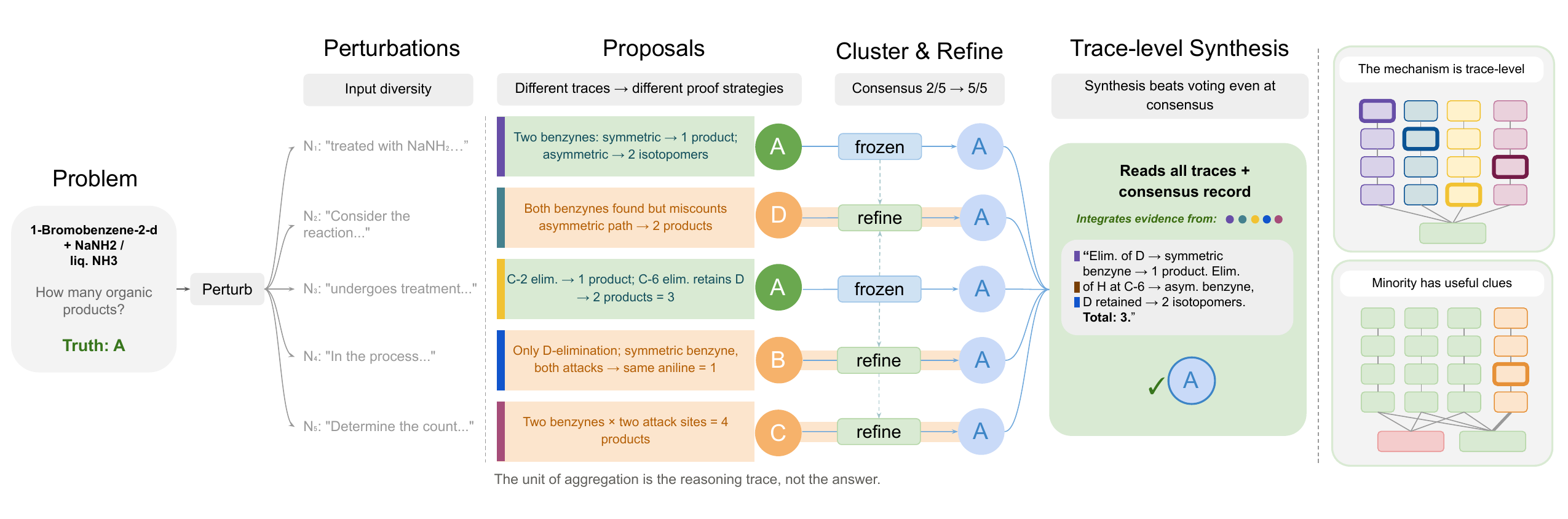}
\vspace{-4mm}
\caption{\textbf{SC-MoA on a GPQA-Diamond chemistry problem.}
Five semantic perturbations of the same question produce structurally
different reasoning traces: two identify the correct answer~(A) via
different proof strategies, while three reach wrong answers through
plausible but incomplete arguments.
Anchored refinement freezes the majority answer and revises
minorities, raising consensus from 2/5 to 5/5.
Trace-level synthesis then reads \emph{all} traces---including the
now-corrected minorities---and integrates complementary evidence
(e.g., the symmetric-benzyne argument from one trace and the
D-retention isotopomer count from another) into a single coherent
explanation.
Majority voting would return the same answer~(A) but discard all
reasoning; synthesis recovers a richer, more reliable justification
by treating the trace, not the answer, as the unit of aggregation.}
\label{fig:conceptual}
\end{figure*}

\section{Related Work}
\label{sec:related}

\paragraph{Self-consistency and the voting ceiling.}
Self-consistency~\citep{wang2023selfconsistency} samples multiple
chains and takes a plurality vote.  Subsequent work refines voting:
ranked~\citep{rvsc2025}, confidence-weighted~\citep{cisc2025},
inconsistency-aware~\citep{mirror2025}, adaptive
budgets~\citep{blendasc2025,cats2026}, certified
stopping~\citep{certifiedsc2025}, and improved
calibration~\citep{dinco2026}.  All improve \emph{how} votes are
counted, but the ballot remains an answer
token~\citep{modc2026,kruszewski2026filtering}.
\citet{hamidieh2026crossmodel} address diversity collapse across
models; we suggest reading the traces voting discards.

\paragraph{Multi-agent debate and layered collaboration.}
Multi-agent debate~\citep{du2024debate,smit2024mad} and Mixture of
Agents~\citep{wang2025moa} let agents read and respond to full
outputs, but interaction without structure can fail:
\citet{choi2025debate} proved unconstrained debate is a martingale,
\citet{khan2024debate} showed debate helps only under asymmetric
knowledge, and \citet{rizvimartel2026communication} characterize
when communication helps at all.
GoA~\citep{goa2026} and
AgentAuditor~\citep{agentauditor2026} address specific failure modes
but require model diversity or additional training.
SC-MoA's anchored refinement~(\S\ref{sec:anchoring}) constrains
what can change rather than structuring the interaction
(Figure~\ref{fig:arch-comparison}).

\paragraph{Verification, selection, and the synthesis ceiling.}
Test-time compute scaling~\citep{snell2025scaling,chen2025provable}
improves accuracy through sampling and selection;
ThinkPRM~\citep{zhang2025thinkprm} scores verification chains for
best-of-$N$, and \citet{rocnreroll2026} characterize the
best-of-$N$ ceiling.
Critique--revise
methods~\citep{madaan2023selfrefine,shinn2023reflexion,yuksekgonul2025textgrad}
iterate on a single solution but lack convergence
guarantees~\citep{semanticbp2024,huang2024selfcorrect}.
All share a bound: accuracy cannot exceed the best candidate.
Trace-level synthesis exceeds this by integrating evidence across
chains~(\S\ref{sec:finding1}).

\paragraph{Perturbation diversity and traces.}
DiVeRSe~\citep{diverse2022} combines diverse prompts with step-aware
verification; SPUQ~\citep{gao2024spuq} introduces
semantic-preserving perturbations for uncertainty quantification;
BayesPE~\citep{tonolini2024bayespe} formalizes prompt ensembles in
a Bayesian framework; boosted
ensembles~\citep{pitis2023boosted,freund1997adaboost} adapt
reweighting to prompt selection;
Polyjuice~\citep{wu2021polyjuice} and
Tailor~\citep{ross2022tailor} compose perturbation primitives with
meaning-\emph{changing} edits.
All use perturbation to improve diversity or estimate uncertainty,
reducing traces to votes or confidence intervals.
We extend perturbation to reasoning synthesis and find that
perturbation \emph{content} is not load-bearing within the tested
families~(\S\ref{sec:finding3}).
\citet{kambhampati2025stop} argued that traces are computational
scaffolding, and \citet{valmeekam2025beyond} showed that
semantically corrupted traces preserve accuracy, suggesting trace
content serves as computational scaffolding.
Our findings are complementary inference-time observations
(\S\ref{sec:finding2},~\ref{sec:finding3}): trace-level
complementarity is a statistical property of the output
distribution, not a claim about internal cognition.

\section{The Aggregation Paradox}
\label{sec:paradox}

We now test the claim that trace-level complementarity carries
exploitable information that standard aggregation discards.
Three observations together constitute the \emph{aggregation paradox}:
synthesis beats voting even at consensus
(\S\ref{sec:finding1}), the voting ceiling is binding yet synthesis exceeds it
(\S\ref{sec:finding2}), and perturbation
content is not load-bearing---cheap surface variation suffices
(\S\ref{sec:finding3}).

\paragraph{Controlled setting.}
To isolate the aggregation mechanism from confounds---model
heterogeneity, prompt engineering, temperature sampling---we fix a
single controlled protocol throughout this section.
$N{=}5$ agents use the same model (\texttt{gpt-oss-120b}) with
greedy decoding (temperature~0).  Semantic-preserving input
perturbations (SPUQ;~\citealt{gao2024spuq}) generate structurally
different reasoning traces from each agent.  We compare majority
voting against LLM synthesis applied to the \emph{same} set of
proposals, so that the sole variable is the aggregation mechanism.
Evaluation spans five benchmarks
(BBH-3, MMLU-ML, GPQA-Diamond, AIME, LCB-Hard;
details in \S\ref{sec:setup}).

\paragraph{Synthesis beats voting even at consensus.}
\label{sec:finding1}
Standard practice gates aggregation on consensus: if agents agree,
accept the majority answer and save
compute~\citep{cats2026}.  The logic is sound---why run an expensive
synthesis call when the answer is already determined?

\textit{But this logic is wrong.} In a controlled pre-refinement
setting on GPQA-Diamond ($n{=}198$), trace-level synthesis produces
12~beneficial flips versus 5~harmful ($2.4{\times}$; $+3.5$\,pp;
McNemar $p{=}0.09$; Table~\ref{tab:aggregation-value},
Appendix~\ref{app:aggregation-value}).  On BBH-3 ($n{=}296$),
the same pre-refinement analysis yields 16~beneficial
versus 5~harmful ($3.2{\times}$; $+3.7$\,pp;
$p{=}0.016$; Appendix~\ref{app:aggregation-value}).
After anchored refinement, the harmful flips are eliminated:
across all four QA benchmarks in the full pipeline
(Table~\ref{tab:synthesis-decomp}), BBH and AIME show zero harmful
flips---refinement converts the five pre-refinement harmful flips
into correct answers, an important finding in its own right. The following decomposition formalizes this asymmetry.

\begin{proposition}[Synthesis advantage decomposition]
\label{prop:synthesis}
Let \textsc{Vote} and \textsc{Synth} be two aggregation procedures
applied to the same $N$ proposals.  Define the \emph{synthesis
fidelity} $F_s = \PP(\textsc{Synth}\text{ correct} \mid
\textsc{Vote}\text{ correct})$ and the \emph{recovery rate}
$R_s = \PP(\textsc{Synth}\text{ correct} \mid \textsc{Vote}\text{
wrong})$.  Then:
\begin{equation}
\underbrace{\PP(\textsc{Synth}\text{ correct}) - \PP(\textsc{Vote}\text{ correct})}_{\text{synthesis advantage}} \;=\;
\underbrace{P_w \cdot R_s}_{\text{beneficial flips}} \;-\;
\underbrace{P_c \cdot (1 - F_s)}_{\text{harmful flips}}
\label{eq:synthesis}
\end{equation}
where $P_c = \PP(\textsc{Vote}\text{ correct})$ and $P_w = 1 - P_c$.
\end{proposition}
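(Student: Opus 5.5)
The decomposition follows from the law of total probability applied to $\PP(\textsc{Synth}\text{ correct})$, conditioning on whether \textsc{Vote} is correct. Both procedures act on the same $N$ proposals, so we work on a single probability space: the joint draw of problem, perturbations, traces, and aggregator outputs. On this space the events $V = \{\textsc{Vote}\text{ correct}\}$ and $S = \{\textsc{Synth}\text{ correct}\}$ are both defined. This shared coupling is the only structural assumption needed.

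The first step is to partition on $V$ and its complement $V^c$, writing
\begin{equation*}
\PP(S) = \PP(S \mid V)\,\PP(V) + \PP(S \mid V^c)\,\PP(V^c) = P_c F_s + P_w R_s .
\end{equation*}
The second step is to subtract $\PP(V) = P_c$. This gives
\begin{equation*}
\PP(S) - \PP(V) = P_w R_s + P_c F_s - P_c = P_w R_s - P_c(1 - F_s),
\end{equation*}
which is exactly \eqref{eq:synthesis}.

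The third step justifies the labels under the braces. We have $P_w R_s = \PP(S \cap V^c)$, the probability that voting is wrong and synthesis is right (a beneficial flip). We also have $P_c(1-F_s) = \PP(S^c \cap V)$, the probability that voting is right and synthesis is wrong (a harmful flip). The identity therefore reduces to $\PP(S) - \PP(V) = \PP(S\cap V^c) - \PP(S^c\cap V)$. This also follows directly by removing the common term $\PP(S\cap V)$ from both sides. It connects the statement to the empirical flip counts and their McNemar comparison in Table~\ref{tab:aggregation-value}.

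There is no substantive obstacle. The one care point is degenerate conditioning: if $P_c = 0$ or $P_w = 0$, then $F_s$ or $R_s$ is undefined. In that case the corresponding product is taken as $0$, which is consistent with the intersection form. With this convention the identity holds unconditionally.
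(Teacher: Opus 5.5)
Your proof is correct and matches the paper's argument: both apply the law of total probability to $\PP(\textsc{Synth}\text{ correct})$, conditioning on whether \textsc{Vote} is correct, and then subtract $P_c$ and rearrange. Your extra points are sound additions the paper omits: the intersection form $\PP(S\cap V^c)-\PP(S^c\cap V)$, which justifies the flip labels, and the convention for the degenerate cases $P_c=0$ or $P_w=0$.
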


\noindent
The decomposition follows from the law of total probability
(Appendix~\ref{app:proof-synthesis}).  Its value is not as a bound
but as a diagnostic: it decomposes the net synthesis effect into two
independently measurable quantities.

\begin{corollary}[Synthesis dominance condition]
\label{cor:dominance}
Synthesis dominates voting whenever the recovery-to-corruption
ratio exceeds the correctness odds:
\begin{equation}
\frac{R_s}{1 - F_s} \;>\; \frac{P_c}{P_w}.
\label{eq:dominance}
\end{equation}
\end{corollary}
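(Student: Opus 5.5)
We derive the corollary directly from Proposition~\ref{prop:synthesis}. Synthesis \emph{dominates} voting means the synthesis advantage is strictly positive. By the decomposition~\eqref{eq:synthesis}, this is equivalent to
\begin{equation*}
P_w \cdot R_s \;-\; P_c\,(1-F_s) \;>\; 0,
\qquad\text{i.e.}\qquad
P_w\, R_s \;>\; P_c\,(1-F_s).
\end{equation*}
The plan is to rearrange this inequality into the ratio form~\eqref{eq:dominance} by dividing both sides by the positive quantity $P_w(1-F_s)$. Multiplying back by the same positive quantity recovers the product inequality, so the two conditions are equivalent and each implies a positive synthesis advantage.

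The only real obstacle is the degenerate cases where this division is not legitimate, so those need to be handled separately.

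First, we need $0<P_c<1$ so that $P_w>0$ and the conditional probabilities $F_s$ and $R_s$ are defined. If $P_w=0$, voting is always correct. Then $R_s$ is undefined and synthesis cannot strictly dominate, so the ratio condition is vacuous or meaningless, consistent with the statement.

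Second, if $F_s=1$, synthesis never corrupts a correct vote. The left side of~\eqref{eq:dominance} is then read as $+\infty$ whenever $R_s>0$. Under that convention the inequality holds, and by~\eqref{eq:synthesis} the advantage $P_w R_s$ is indeed positive. If $R_s=0$ as well, the advantage is zero and no strict dominance holds.

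We will state these conventions explicitly, namely $0<P_c<1$ and $R_s/0:=+\infty$ for $R_s>0$. With them, the corollary reads as an exact iff characterization, in keeping with the paper's diagnostic use of the decomposition.
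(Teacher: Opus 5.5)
Your proposal is correct and follows the paper's route: the paper treats the corollary as an immediate rearrangement of the decomposition in Proposition~\ref{prop:synthesis}, with dominance equivalent to $P_w R_s > P_c(1-F_s)$, which is exactly what you do by dividing through by $P_w(1-F_s)$. Your explicit handling of the degenerate cases $P_w = 0$ and $F_s = 1$ is a harmless tightening that the paper leaves implicit.
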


\noindent
The dominance condition is satisfied on both GPQA-Diamond
($2.4{\times}$) and BBH-3 ($3.2{\times}$;
Appendix~\ref{app:qa-nondeg}).
On the full BBH (2{,}561 problems, Appendix~\ref{app:bbh-tasks}),
SC-MoA wins 91 problems where SC fails and loses 48---a 65.5\%
head-to-head win rate, concentrating on disambiguation, temporal
reasoning, and state tracking.
At weak aggregators the dominance condition fails; the score-based
override provides the safety net (Appendix~\ref{app:override}). Consensus gating forfeits this recoverable accuracy---but what
explains \emph{why} synthesis succeeds where voting fails?

\paragraph{Beyond the voting ceiling.}
\label{sec:finding2}
The natural hypothesis is that perturbation diversity helps by
decorrelating errors, as predicted by the Condorcet jury
theorem~\citep{condorcet1785,ladha1992}(Appendix~\ref{app:proof-jury}).  We test this directly by
measuring the mean pairwise error correlation $\rhobar$.
Perturbation-diverse and i.i.d.\ proposals yield nearly identical
$\rhobar$: $0.633$ vs.\ $0.603$ on GPQA-Diamond, $0.571$
vs.\ $0.607$ on LCB-Hard, with fully overlapping 95\% confidence
intervals (Appendix~\ref{app:persona-diversity}).
Error decorrelation is not the mechanism. Yet on the same proposals, trace-level synthesis outperforms majority
voting by $+3.5$\,pp on GPQA; the full pipeline extends this to
$+5.3$\,pp on LCB-Hard (Table~\ref{tab:main}).
The explanation is structural:
any vote-based aggregator $A_V\!: V \to \hat{y}$ satisfies
$I(y^*;\hat{y}) \leq I(y^*; \mathbf{V})$ by the data processing
inequality, so information present in traces but absent from votes
is inaccessible by construction.  The trace ablation confirms
this empirically: collapsing traces to answer labels reduces
accuracy to the vote baseline
(Table~\ref{tab:trace-ablation}).

\textbf{What predicts recovery?}  We measure trace diversity $D_t$
as mean pairwise cosine distance across reasoning traces in TF-IDF
space (5000 features).
Beneficial flips exhibit significantly higher $D_t$:
$0.585$ vs.\ $0.460$ on GPQA-Diamond ($p{=}0.015$;
Figure~\ref{fig:mechanism}a; Appendix~\ref{app:trace-diversity}).
The effect strengthens under dense pre-trained embeddings
(1024-dimensional; $p{=}0.0005$, Cohen's $d{=}0.92$),
confirming it reflects structural reasoning differences, not
lexical surface variation (Appendix~\ref{app:embedding-diversity}).
The nontrivial question is whether
the aggregator can extract this surplus.  We decompose the recovery
rate into two factors
(Proposition~\ref{prop:extraction},
Appendix~\ref{app:extraction}): the probability that minority
traces contain recoverable content ($p_{\mathrm{sub}}$) and the
probability that the aggregator identifies it ($q$).  Synthesis
dominates voting when $F_s$ exceeds a threshold determined by these
quantities (Corollary~\ref{cor:dominance}). A targeted trace ablation tests each factor
(Table~\ref{tab:trace-ablation},
Appendix~\ref{app:trace-ablation}): answer-only and majority-only
inputs collapse to the vote baseline ($q \approx 0$ when traces are
absent), while minority-only proposals recover the full benefit
(73.2\% vs.\ 72.2\% full-trace; $p_{\mathrm{sub}}$ is high).
Permuting steps within traces preserves accuracy---the aggregator
treats traces as a bag of evidence.  The active ingredient is
minority-trace content. One question remains: \textit{does the method by which trace diversity is
induced matter?}

\paragraph{Perturbation content is not load-bearing.}
\label{sec:finding3}
Within the family of meaning-preserving perturbations, we find no
evidence that perturbation content affects accuracy.
Hand-crafted personas, SPUQ paraphrases, and GPT-generated
strategies are statistically indistinguishable
(McNemar $p{>}0.25$; 72.7\%, 72.7\%, 70.2\% on GPQA-Diamond;
Appendices~\ref{app:qa-personas},~\ref{app:code-personas}).
Controlled perturbations from a Polyjuice/Tailor
taxonomy~\citep{wu2021polyjuice,ross2022tailor} yield 76.0\%
vs.\ 74.0\% for free-form SPUQ ($p{=}1.0$;
Appendix~\ref{app:controlled-perturbation}).
\citet{valmeekam2025beyond} showed that even semantically
\emph{corrupted} traces preserve accuracy, suggesting LLMs are
robust to trace-level noise.
Our finding is complementary but distinct: semantic-\emph{preserving}
perturbations that change only surface form suffice to induce diverse
reasoning strategies, consistent with the mode-concentration
hypothesis~\citep{modc2026,kruszewski2026filtering}.
SC-MoA therefore requires no prompt engineering or domain expertise
within semantic-preserving perturbation families.
We have not tested adversarial or meaning-changing perturbations;
the claim should not be extrapolated beyond surface variation that
preserves semantics.


\section{SC-MoA: From Paradox to Algorithm}
\label{sec:method}

\begin{wrapfigure}{R}{0.55\textwidth}
\vspace{-14pt}
\centering
\resizebox{\linewidth}{!}{

\begin{tikzpicture}[
  >=Stealth,
  every node/.style={font=\small},
  qbox/.style={
    draw=outbord, fill=outfill, rounded corners=3pt,
    minimum height=0.50cm, minimum width=1.2cm, font=\small,
  },
  sbox/.style={
    draw=sampbord, fill=sampfill, rounded corners=3pt,
    minimum height=0.36cm, minimum width=0.48cm, font=\scriptsize,
  },
  pbox/.style={
    draw=stepbord, fill=stepfill, rounded corners=3pt,
    minimum height=0.36cm, minimum width=0.58cm, font=\scriptsize,
  },
  stepbox/.style={
    draw=phasebord, fill=phasefill, rounded corners=3pt,
    minimum height=0.50cm, font=\small, align=center,
    inner xsep=6pt, inner ysep=4pt,
  },
  refinebox/.style={
    draw=phasebord, fill=phasefill, rounded corners=3pt,
    minimum height=0.50cm, font=\small, align=center,
    inner xsep=6pt, inner ysep=4pt,
  },
  arr/.style={->, arrowcol, thick},
  darr/.style={->, arrowcol, thick, densely dashed},
  annot/.style={font=\scriptsize\itshape, text=black!55},
  plabel/.style={font=\footnotesize\bfseries\sffamily},
  connector/.style={->, arrowcol!70, very thick, rounded corners=4pt},
]


\node[qbox] (q) at (1.5, 1.2) {Problem $q$};


\node[stepbox, minimum width=2.4cm] (perturb) at (1.5, 0.0)
  {Semantic perturbations};

\node[pbox] (x1) at (0.1, -0.80) {$x_1$};
\node[pbox] (x2) at (1.3, -0.80) {$x_2$};
\node[font=\scriptsize] at (2.05, -0.80) {$\cdots$};
\node[pbox] (xN) at (2.7, -0.80) {$x_N$};

\draw[arr] (perturb.south -| x1) -- (x1);
\draw[arr] (perturb.south -| x2) -- (x2);
\draw[arr] (perturb.south -| xN) -- (xN);

\node[sbox] (r1a) at (-0.2, -1.48) {$r_1$};
\node[sbox] (r1k) at (0.4, -1.48) {$r_k$};
\node[font=\tiny] at (0.1, -1.48) {$\cdots$};
\node[font=\scriptsize] at (2.05, -1.48) {$\cdots$};
\node[sbox] (rNa) at (2.4, -1.48) {$r_1$};
\node[sbox] (rNk) at (3.0, -1.48) {$r_k$};
\node[font=\tiny] at (2.7, -1.48) {$\cdots$};

\draw[arr] (x1) -- (r1a);  \draw[arr] (x1) -- (r1k);
\draw[arr] (xN) -- (rNa);  \draw[arr] (xN) -- (rNk);

\node[pbox] (p1) at (0.1, -2.25) {$p_1$};
\node[font=\scriptsize] at (2.05, -2.25) {$\cdots$};
\node[pbox] (pN) at (2.7, -2.25) {$p_N$};

\draw[arr] (r1a) -- (p1);  \draw[arr] (r1k) -- (p1);
\draw[arr] (rNa) -- (pN);  \draw[arr] (rNk) -- (pN);

\node[annot, anchor=west] at (3.25, -0.80) {${\times}\,N$};
\node[annot, anchor=west] at (3.25, -1.48) {${\times}\,k$};
\node[annot, anchor=west] at (3.25, -2.25) {$\alpha_i$};

\begin{scope}[on background layer]
  \node[draw=phasebord!40, rounded corners=5pt, fill=phasefill!8,
        inner sep=0pt, line width=0.5pt,
        minimum width=5.4cm, minimum height=4.0cm]
        (ph1) at (1.5, -1.2) {};
\end{scope}
\node[plabel, text=phasebord, anchor=north, fill=phasefill!8,
      inner sep=1.5pt]
  at ([yshift=-2pt]ph1.north)
  {Phase 1: Proposal generation};

\node[draw=arrowcol!45, fill=white, rounded corners=3pt,
      font=\scriptsize, align=left, inner sep=5pt]
  (altsrc) at (-2.6, -1.52)
  {i.i.d.\\[1pt]MoA/GoA\\[1pt]RAG/tools\\[1pt]Human};
\node[plabel, text=arrowcol!55, align=center, anchor=south,
      font=\tiny\bfseries\sffamily] at (altsrc.north) {Alt.\ sources};
\draw[darr, arrowcol!50, rounded corners=5pt]
  (altsrc.east) -- (-1.2, -1.52 |- altsrc.east) -- (-1.2, -2.25) -- (p1.west);
\node[font=\tiny\bfseries, text=arrowcol!50] at (-1.35, -1.95) {or};


\node[stepbox] (cluster) at (9.2, -0.05)
  {Cluster by answer equivalence};

\node[refinebox] (refine) at (9.2, -1.20)
  {Anchored refinement\\[-2pt]{\scriptsize freeze $a^*$; refine minorities w.r.t.\ $p^*$}};

\node[stepbox, draw=phasebord!70, fill=phasefill!50] (cerec) at (9.2, -2.40)
  {Consensus-evolution record\\[-2pt]{\scriptsize clusters, shifts, labels, $\alpha_i$ scores}};

\draw[arr] (cluster) -- (refine);
\draw[arr] (refine) -- (cerec);

\begin{scope}[on background layer]
  \node[draw=phasebord!30, rounded corners=5pt, fill=phasefill!8,
        inner sep=0pt, line width=0.5pt,
        minimum width=6.4cm, minimum height=4.0cm]
        (ph2) at (9.2, -1.2) {};
\end{scope}
\node[plabel, text=phasebord, anchor=north, fill=phasefill!8,
      inner sep=1.5pt]
  at ([yshift=-2pt]ph2.north)
  {Phase 2: Clustering \& refinement};


\begin{scope}[on background layer]
  \node[draw=scmoabord!35, rounded corners=5pt, fill=scmoafill!8,
        inner sep=0pt, line width=0.5pt,
        minimum width=5.8cm, minimum height=3.2cm]
        (ph3) at (5.35, -5.90) {};
\end{scope}
\node[plabel, text=scmoabord, anchor=north, fill=scmoafill!8,
      inner sep=1.5pt]
  at ([yshift=-2pt]ph3.north)
  {Phase 3: Universal aggregation};

\node[stepbox, draw=scmoabord, fill=scmoafill!50, line width=0.8pt]
  (aggregator) at (5.35, -5.35)
  {Trace-level aggregator\\[-2pt]{\scriptsize all $N$ traces $+$ consensus record}};

\node[qbox] (synth) at (5.35, -6.65) {Synthesized output};
\draw[arr] (aggregator) -- (synth);

\node[annot, text=phasebord!75, anchor=east, align=right]
  at ([xshift=-6pt]ph3.west |- aggregator)
  {\textbf{No early exit}\\[-1pt]{\tiny even at unanimous}\\[-1pt]{\tiny consensus}};

\node[font=\tiny\itshape, text=outbord!80!black, anchor=east]
  at ([yshift=-8pt, xshift=-4pt]synth.south) {score-based override};

\node[qbox, draw=sampbord, fill=sampfill, line width=1.0pt,
      minimum height=0.60cm, minimum width=1.6cm,
      font=\small\bfseries] (final) at (5.35, -7.95) {Final output $\hat{a}$};

\draw[arr, line width=0.9pt] (synth) -- (final);


\draw[connector]
  (ph1.east) --
  node[above, font=\scriptsize, text=arrowcol!80]
    {proposals}
  (ph2.west);

\draw[connector, sampbord, densely dashed, rounded corners=5pt]
  ([xshift=0.4cm]ph1.south) -- ++(0,-0.35) --
  node[above, font=\scriptsize, text=sampbord!80!black, pos=0.5]
    {traces $T_1\!\ldots\!T_N$}
  ([xshift=-0.5cm]ph3.north |- {0,-3.75}) --
  ([xshift=-0.5cm]ph3.north);

\draw[connector, rounded corners=5pt]
  ([xshift=-0.4cm]ph2.south) -- ++(0,-0.35) --
  node[above, font=\scriptsize, text=arrowcol!80, pos=0.5]
    {consensus record}
  ([xshift=0.5cm]ph3.north |- {0,-3.75}) --
  ([xshift=0.5cm]ph3.north);

\end{tikzpicture}}
\vspace{-4pt}
\caption{\textbf{SC-MoA pipeline.}
Phase~1 generates $N$ proposals via semantic perturbations and inner self-consistency.
Phase~2 clusters by answer equivalence and refines minority traces with the majority answer frozen.
Phase~3 synthesizes all $N$ traces and the consensus record into a single output, with no early exit even at unanimous consensus.}
\label{fig:pipeline}
\vspace{-10pt}
\end{wrapfigure}%
Section~\ref{sec:paradox} imposes four design constraints:
cheap perturbation suffices for diversity (\S\ref{sec:finding3}),
the aggregator must read full traces since vote-based methods
cannot access trace-level information by
construction~(\S\ref{sec:finding2}), aggregation must never gate on
consensus~(\S\ref{sec:finding1}), and refinement must protect the
majority against the debate
martingale~\citep{choi2025debate}.
SC-MoA addresses these in three phases
(Algorithm~\ref{alg:scmoa}, Figure~\ref{fig:pipeline}):
perturbation-based diversity, anchored refinement, and universal
synthesis with verification.

\paragraph{Phase~1: Diversity without degradation.}
\label{sec:diversity}

RL-trained language models concentrate on a small number of reasoning
modes~\citep{modc2026,kruszewski2026filtering}; repeated sampling
at a fixed prompt explores within a mode rather than across modes.
Since perturbation content is not
load-bearing~(\S\ref{sec:finding3}), SC-MoA uses the cheapest
mechanism: a single LLM call produces $N$ semantic-preserving
rephrasings $x_1, \ldots, x_N$
(SPUQ;~\citealt{gao2024spuq}), with a validation step ensuring
protected tokens appear verbatim
(Appendix~\ref{app:spuq}; controlled-perturbation variant in
Appendix~\ref{app:controlled-perturbation}). Each $x_i$ is fed to the base model with a generic CoT prompt.
For $k > 1$, a neutral variant tag
(Appendix~\ref{app:variant-tag}) creates diversity at greedy
decoding; the best-of-$k$ is selected by quality score, with
intra-agreement
$\alpha_i = |\{j : a_{ij} = a_{i,\text{best}}\}|/k$ as
confidence signal ($N{\cdot}k$ calls total).

\paragraph{Phase~2: Anchored refinement.}
\label{sec:anchoring}
To avoid the debate martingale~\citep{choi2025debate}, SC-MoA
freezes the majority and revises only the minorities.

Proposals are clustered by answer equivalence: string-match for QA
(deterministic, faithful); test-pass signatures for code
(stochastic, potentially unfaithful; \S\ref{sec:aggregation-safety}).
The consensus $C = k_{\max}/N$ is the majority cluster fraction.
All minorities are refined while the majority remains
frozen (Proposition~\ref{prop:anchoring}): each minority receives a single call with the majority trace as     
reference and $\alpha_i$ as confidence, yielding a revised proposal that either converges to the majority solution, integrates complementary reasoning from both traces, or retains its original answer with justification (${\leq},N - k_{\max}$ calls).

\begin{proposition}[Anchoring invariant and refinement submartingale]
\label{prop:anchoring}
Let $a^*_0$ be the initial majority answer with support $k_0$.  SC-MoA
returns $a^*_0$ unless all $N - k_0$ refined minority proposals converge
to the same $a' \neq a^*_0$ that wins the post-refinement vote or
aggregation.  For $k_0 \geq \lceil N/2 \rceil$ (i.e., a strict
majority), override is impossible.  Moreover, the majority-vote
accuracy after anchored refinement is non-decreasing:
\begin{equation}
\PP\bigl(\text{post-refinement vote correct}\bigr)
\;\geq\; \PP\bigl(\text{pre-refinement vote correct}\bigr).
\label{eq:submartingale}
\end{equation}
\end{proposition}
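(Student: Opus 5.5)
The argument is deterministic and pathwise: it fixes a realization of the $N$ proposals and tracks what anchored refinement can do to the vote counts. The key structural fact comes straight from the mechanism in Phase~2. The $k_0$ majority proposals are frozen, so after refinement the cluster of $a^*_0$ has support $k_1 \geq k_0$. Every minority proposal either converges to $a^*_0$, and so adds to $k_1$, or keeps or changes its answer to something other than $a^*_0$. Hence any rival answer $a' \neq a^*_0$ has post-refinement support at most $N - k_0$, with equality exactly when all $N-k_0$ refined minorities land on the same $a'$. This gives the first claim. If the returned answer is anything other than $a^*_0$, some $a'$ must beat $a^*_0$ in the post-refinement vote, or be chosen by aggregation/override, which I take to require $a'$ to lead the vote. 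That in turn needs support at least $k_1 \geq k_0$, which forces every minority onto a single $a'$.

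For the strict-majority claim, suppose $k_0 \geq \lceil N/2\rceil$ with a strict majority, so $k_0 > N/2$. Then the best any rival can achieve is $N - k_0 < N/2 < k_0 \leq k_1$. No $a'$ can win the post-refinement vote, and override is impossible. I will state the tie convention explicitly: ties are broken in favour of the incumbent $a^*_0$. This also covers the boundary case $k_0 = N/2$ for even $N$, where $\lceil N/2 \rceil$ does not by itself give strictness.

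For \eqref{eq:submartingale}, I will show the stronger pathwise statement $\{\text{pre-refinement vote correct}\} \subseteq \{\text{post-refinement vote correct}\}$ and then take probabilities. On the event that $a^*_0$ is correct, the post-refinement vote still returns $a^*_0$. In the strict-majority case this follows from the previous step. When $a^*_0$ is only a plurality, I use the fact that every other cluster can only lose members or absorb former minorities, while $a^*_0$ never loses any. The main obstacle is exactly this plurality case. A fragmented minority could consolidate onto one wrong $a'$ and overtake a plurality-only $a^*_0$, which is the override scenario in the first claim. I would handle it by invoking the refinement rule: each minority receives the majority trace as reference and either converges, integrates, or \emph{retains} its original answer. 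I would assume that a minority does not jump to a different non-majority answer. Under this assumption, minority clusters can only shrink, $a^*_0$ remains the plurality, and inclusion holds. If that assumption is unavailable, I would instead restrict the inequality to the strict-majority regime and state it as such.

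Finally, I would note that the inclusion makes the correctness indicator of the vote a submartingale across the single refinement step: it can flip from wrong to right but never from right to wrong. This contrasts with the martingale of unconstrained debate in \citet{choi2025debate}, and taking expectations yields \eqref{eq:submartingale}.
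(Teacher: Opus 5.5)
Your core argument is the same as the paper's proof. The frozen majority gives $k_1 \geq k_0$, every rival is capped at $N-k_0$, and for $k_0 > N/2$ the post-refinement vote is locked at $a^*_0$, so on those instances the vote's correctness indicator cannot drop. The paper's submartingale step likewise reaches only the conditional inequality $\EE[\text{correct after} \mid k_0 \geq 3] \geq \EE[\text{correct before} \mid k_0 \geq 3]$ for $N=5$.

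The steps beyond that core do not hold. In the first claim you infer that a rival with support at least $k_1 \geq k_0$ must have absorbed every minority. That follows only when $k_0 \geq N-k_0$, or $k_0 \geq (N-1)/2$ under your incumbent tie-break. Take $N=5$ with five distinct answers, so $k_0=1$. One minority switching onto another minority's answer gives $a'$ support $2 > k_1 = 1$, while the other two minorities stay put. So the ``all $N-k_0$'' clause is not established for small $k_0$. It is in fact false there unless minorities are barred from moving between non-majority answers.

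That bar is exactly your no-jump assumption, and the mechanism does not supply it. Option (b) of the refinement prompt, IMPROVE, lets a minority arrive at a new answer, and the ``integrates complementary reasoning'' outcome need not land on either of the two answers involved. Without the assumption, \eqref{eq:submartingale} can fail. Take $N=5$ with clusters $2/2/1$, where $a^*_0$ is correct and selected by tie-break. Let the singleton improve onto the other pair's wrong answer. A correct pre-refinement vote then becomes a $3$--$2$ wrong post-refinement vote.

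Your fallback to strict majorities is therefore the provable content. It gives only the conditional inequality, not the unconditional one in the statement, because the plurality stratum can move in either direction.

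Separately, taking aggregation to return $a'$ only when $a'$ leads the vote is not a reading of SC-MoA but an extra hypothesis. The aggregator is explicitly permitted to depart from the post-refinement majority when it cites a defended minority, and those departures are exactly the flips the paper measures. The paper instead routes this path through the score-based override. That argument is only heuristic: the proof says the override ``likely'' triggers, and on QA it never fires. So neither argument establishes the ``returns $a^*_0$'' claim for the pipeline's actual output; yours establishes it only for the post-refinement vote.
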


\noindent On QA tasks the score-based override does not fire (quality
score is always~1.0); non-degradation instead follows from high
synthesis fidelity alone.

\begin{corollary}[QA non-degradation]
\label{cor:qa-nondeg}
On tasks where the override does not fire, synthesis is non-degrading
whenever $F_s \geq P_c\,/\,(P_c + P_w \cdot R_s)$.
\end{corollary}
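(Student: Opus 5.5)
The plan is to reduce the claim to the sign of the synthesis advantage in Proposition~\ref{prop:synthesis}, and then show that the stated threshold on $F_s$ is a (slightly conservative) sufficient condition for that sign to be nonnegative.

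\emph{Step 1: the final output is the synthesized output.} On tasks where the score-based override does not fire (e.g.\ QA, where the quality score is always $1.0$), the final output $\hat a$ of SC-MoA equals \textsc{Synth}'s output. Here \textsc{Vote} is majority voting over the same (post-refinement) proposals. Hence ``non-degrading'' means exactly $\PP(\textsc{Synth}\text{ correct}) \ge \PP(\textsc{Vote}\text{ correct})$. By equation~\eqref{eq:synthesis}, this is equivalent to
\[
P_w R_s - P_c(1-F_s) \;\ge\; 0 .
\]

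\emph{Step 2: rearrange into an exact threshold.} Set $x := P_w R_s \ge 0$. If $P_c = 0$, the advantage equals $x \ge 0$ and there is nothing to prove. Otherwise, dividing by $P_c > 0$ shows the advantage is nonnegative if and only if
\[
F_s \;\ge\; \frac{P_c - x}{P_c}.
\]
This is the exact threshold; it is equivalent to the dominance condition of Corollary~\ref{cor:dominance} with non-strict inequality.

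\emph{Step 3: the stated threshold implies the exact one.} It remains to show that
\[
\frac{P_c}{P_c + x} \;\ge\; \frac{P_c - x}{P_c}.
\]
Both denominators are positive, so I cross-multiply. This gives $P_c^2 \ge (P_c - x)(P_c + x) = P_c^2 - x^2$, which holds because $x^2 \ge 0$. Therefore $F_s \ge P_c/(P_c + P_w R_s)$ implies $F_s \ge (P_c - P_w R_s)/P_c$. By Step~2, the synthesis advantage is then nonnegative.

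The only step needing care is Step~1. There I must make explicit that the absence of an override means the pipeline's final answer coincides with \textsc{Synth}, so that Proposition~\ref{prop:synthesis} applies verbatim. The algebra in Steps~2 and~3 is routine once the degenerate case $P_c = 0$ is set aside. I would also remark that the stated bound is sufficient but not tight: the gap between the two thresholds is $x^2/\bigl(P_c(P_c+x)\bigr)$.
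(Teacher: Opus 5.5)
Your proof is correct and takes the same route as the paper: you reduce non-degradation to the sign of $P_w R_s - P_c(1-F_s)$ from Proposition~\ref{prop:synthesis} and then rearrange. Your Steps~2--3 are, however, more careful than the paper's argument.

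The paper asserts that $P_w R_s - P_c(1-F_s) \ge 0$ is \emph{equivalent} to $F_s \ge P_c/(P_c + P_w R_s)$, and calls this the dominance condition restated. That biconditional is an algebra slip. The correct rearrangement is your $F_s \ge 1 - P_w R_s/P_c$, which is also what Corollary~\ref{cor:dominance} actually says. The stated threshold exceeds it by exactly $x^2/\bigl(P_c(P_c+x)\bigr)$ with $x = P_w R_s$, as you compute. For example, $P_c = P_w = R_s = \tfrac12$ and $F_s = 0.6$ give a positive advantage of $0.05$, yet $F_s < \tfrac23$. So only the sufficient direction holds.

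The corollary claims only sufficiency (``whenever''), so its conclusion survives, and your cross-multiplication step supplies the missing justification. One side effect is worth noting. On the appendix contingency counts, the exact thresholds are about $91.3\%$ (GPQA-Diamond) and $93.5\%$ (BBH-3), not the reported $92.0\%$ and $93.9\%$. The paper's reported margins are therefore conservative rather than inflated.
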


\noindent Empirical $F_s$ values exceed the non-degradation
thresholds on both benchmarks (Appendix~\ref{app:qa-nondeg}).

\paragraph{Phase~3: Universal aggregation with verification.}
\label{sec:aggregation}
The aggregation paradox (\S\ref{sec:finding1}) rules out consensus
gating~\citep{cats2026}; SC-MoA always runs the aggregator.
A single call synthesizes across all $N$ post-refinement proposals
with a consensus-evolution narrative (cluster shifts,
action labels, $\alpha_i$ scores; prompts in
Appendices~\ref{app:refine-prompt},~\ref{app:agg-prompt}).
Any consensus threshold $\theta < \infty$ for early exit
\emph{hurts} accuracy (\S\ref{sec:aggregation-safety}).

\begin{wrapfigure}[17]{R}{0.45\textwidth}
\vspace{-12pt}
\begingroup
\algtext*{EndFor}
\algtext*{EndIf}
\footnotesize
\refstepcounter{algorithm}
\label{alg:scmoa}
\noindent\rule{\linewidth}{0.4pt}\\[1pt]
\textbf{Algorithm~\thealgorithm}\ Self-Consistent Mixture of Agents\\[-5pt]
\rule{\linewidth}{0.4pt}
\vspace{-4pt}
\begin{algorithmic}[1]
\footnotesize
\Require $q$, perturbation count $N$, samples/pert.\ $k$
\Ensure Final answer $\hat{a}$
\Statex \textcolor{citeblue}{\rule[0.4ex]{1.2em}{0.4pt}\ \textsc{Perturbation \& Proposal} (\S\ref{sec:diversity})\ \rule[0.4ex]{1.2em}{0.4pt}}
\State $(x_1, \ldots, x_N) \gets \textsc{Perturb}(q)$
\For{$i \in [N]$}
    \State Draw $\{r_{ij}\}_{j=1}^{k} \sim \textsc{Generate}(x_i)$
    \State $p_i \gets \argmax_j \textsc{Score}(r_{ij})$
    \State $\alpha_i \gets |\{j : \textsc{Ans}(r_{ij}) {=} \textsc{Ans}(p_i)\}|/k$
\EndFor
\Statex \textcolor{citeblue}{\rule[0.4ex]{1.2em}{0.4pt}\ \textsc{Anchored Refinement} (\S\ref{sec:anchoring})\ \rule[0.4ex]{1.2em}{0.4pt}}
\State $\mathcal{C},\, a^*,\, \kmax \gets \textsc{Cluster}(\{p_i\})$
\For{$i \in [N]$ \textbf{s.t.} $\textsc{Ans}(p_i) \neq a^*$}
    \State $p_i \gets \textsc{Refine}(p_i,\; p^*,\; \alpha_i)$
\EndFor
\Statex \textcolor{citeblue}{\rule[0.4ex]{1.2em}{0.4pt}\ \textsc{Aggregation \& Verification} (\S\ref{sec:aggregation})\ \rule[0.4ex]{1.2em}{0.4pt}}
\State $\hat{a} \gets \textsc{Aggregate}(\{p_1, \ldots, p_N\},\; \mathcal{C})$
\If{$\textsc{Score}(\hat{a}) < \max_i \textsc{Score}(p_i)$}
    \State $\hat{a} \gets \argmax_i \textsc{Score}(p_i)$
\EndIf
\State \Return $\hat{a}$
\end{algorithmic}
\rule{\linewidth}{0.4pt}
\endgroup
\end{wrapfigure}%
A score-based override reverts to the best proposal when the
aggregator underperforms (code: public test pass rate; QA: always
1.0, so override never fires;
Appendix~\ref{app:override}).
We call clustering $\Phi$ \emph{faithful} if cluster agreement
implies answer equivalence; for code, test-pass clustering is
\emph{unfaithful} (Appendix~\ref{app:fidelity-gap}).  This QA/code
asymmetry pervades the pipeline (\S\ref{sec:calibration},
\S\ref{sec:aggregation-safety}).
\section{Experiments}
\label{sec:experiments}
\paragraph{Setup.}
\label{sec:setup}
All methods use \texttt{gpt-oss-120b} with greedy decoding
(temperature~0).  Ground truth never enters a prompt.  For
$k > 1$, a neutral variant tag creates distinct LLM calls; all
baselines use the same variant-tag protocol.
Sampling protocol and token-budget comparisons
(Appendices~\ref{app:sampling-protocol},~\ref{app:token-budget})
confirm greedy+tag is representative and gains are not explained
by token expenditure.
Full details in Appendix~\ref{app:implementation}.
SC-MoA uses two operating points.  Main results
(Table~\ref{tab:main}): $N{=}5$, $k{=}2$ (11~calls).  Mechanistic analyses
(Figures~\ref{fig:mechanism}--\ref{fig:scaling-crossmodel}):
$N{=}4$, $k{=}5$ (21~calls), matching MM-MoA's peak
operating point on LCB-Hard and the intra-perturbation
saturation point (Appendix~\ref{app:k-sweep}).

\paragraph{Benchmarks.}
  We evaluate on five benchmarks spanning question answering, mathematics, and code (selection rationale in Appendix~\ref{app:benchmark-rationale}):
BBH (27~tasks, 2{,}400+ problems;~\citealt{suzgun2023bbh}),
MMLU-ML (112 questions;~\citealt{hendrycks2021mmlu}),
GPQA-Diamond (198 PhD-level science;~\citealt{rein2024gpqa}),
AIME (90 competition-math problems from AIME 2022--2024, integer answers 0--999;~\citealt{aimo2024}),
and LCB-Hard (171 hard competitive-programming problems,
post-training cutoff, private-test scored;~\citealt{jain2024livecodebench}).
Contamination analysis in Appendix~\ref{app:contamination}.

\paragraph{Baselines.}
All baselines are compute-equalized at ${\sim}$7--11 calls (Appendix~\ref{app:compute-budget}):
Zero-shot CoT (1~call),
SC $k{=}10$ (10~calls;~\citealt{wang2023selfconsistency}),
MoA (9~calls;~\citealt{wang2025moa}),
Self-MoA (7~calls),
and TextGrad (${\sim}$10~calls;~\citealt{yuksekgonul2025textgrad}).
All share the same evaluation pipeline.
MoA uses $N{=}4$ SPUQ paraphrases ($2N{+}1{=}9$ calls),
isolating aggregation architecture.
Details in Appendix~\ref{app:implementation}.

\label{sec:results}

\begin{table}[t]
\caption{Main results (\texttt{gpt-oss-120b}, $N{=}5$, $k{=}2$, 11~calls).
Best in \textbf{bold}; second-best \underline{underlined}.
GoA reports the GoA$_{\text{Mean}}$ variant (3-model pool;
GoA$_{\text{Max}}$ is 2.5\,pp lower on GPQA;
Appendix~\ref{app:goa});
MoA uses 4 heterogeneous proposers;
MoA~(para) uses SPUQ paraphrases with a single model.
$\dagger$\,Answer-extraction artifact (Appendix~\ref{app:bbh-tasks}).
Baseline details, gating ablation, and sampling protocol comparisons
in Appendices~\ref{app:goa},~\ref{app:bbh-tasks},~\ref{app:diversity-source}.}
\label{tab:main}
\centering
\small
\begin{tabular}{lccccccc}
\toprule
& \multicolumn{4}{c}{QA benchmarks} && Code \\
\cmidrule(lr){2-5} \cmidrule(lr){7-7}
Method & BBH-3 (296) & MMLU-ML (112) & GPQA (198) & AIME (90) && LCB-Hard (171) \\
\midrule
Zero-shot CoT              & 32.8 & 83.9 & 66.7 & 70.0 && 42.1 \\
SC              & 80.4 & \underline{90.2} & 70.7 & 85.6 && \underline{57.3} \\
MoA                        & 69.9 & 85.7 & 67.7 & 87.8 && \underline{57.3} \\
MoA (para)        & 68.2 & 86.6 & 65.2 & 73.3 && 50.9 \\
Self-MoA                   & 67.6 & 85.7 & 64.1 & \underline{90.0} && \underline{57.3} \\
TextGrad                   & \underline{83.8} & \underline{90.2} & 69.7 & 85.6 && 52.0 \\
GoA                        & 82.8 & 89.3 & \underline{72.7} & 77.8 && 24.6 \\
\midrule
\rowcolor{scmoarow}
\textbf{SC-MoA}            & \textbf{86.5} & \textbf{92.0} & \textbf{73.2} & \textbf{91.1} && \textbf{62.6} \\
\bottomrule
\end{tabular}
\end{table}
SC-MoA holds the highest point estimate on all five benchmarks
(Table~\ref{tab:main}; pairwise significance tests in
Appendix~\ref{app:significance}; per-task breakdowns in
Appendices~\ref{app:bbh-tasks}--\ref{app:lcb-extended}).
Seven of twelve pairwise McNemar tests reach $p{<}0.05$;
the five non-significant comparisons concentrate on GPQA
($n{=}198$) and LCB-Hard ($n{=}171$), where statistical power is
inherently limited by sample size
(Appendix~\ref{app:significance}).

On BBH-3 (296~problems), SC-MoA reaches 86.5\% ($+$6.1~pp over
SC, $+$2.7~pp over TextGrad, $+$3.7~pp over GoA), with gains
concentrating on structured-reasoning tasks where perturbation
diversity surfaces distinct parsing strategies (full BBH results
in Appendix~\ref{app:bbh-tasks}).
On GPQA-Diamond, SC-MoA reaches 73.2\%
($+$2.5~pp over SC, $+$3.5~pp over TextGrad;
Appendix~\ref{app:gpqa}).  Self-MoA collapses to
64.1\%---\emph{below} zero-shot (66.7\%)---illustrating the
unanchored debate martingale~\citep{choi2025debate}.
On AIME (90 competition-math problems), SC-MoA reaches 91.1\%
($+$1.1~pp over Self-MoA, $+$5.5~pp over SC).
On LCB-Hard (code), SC-MoA reaches 62.6\% ($+$5.3~pp over SC
at greedy decoding, $+$10.6~pp over TextGrad;
Appendix~\ref{app:lcb-extended}).  Under temperature sampling
($T{=}0.7$), SC reaches 67.3\% on LCB-Hard
(Appendix~\ref{app:sampling-protocol}); SC-MoA's advantage on
code is specific to the greedy+tag protocol.
The override fires on 14.6\% of problems; a best-of-$N$
ablation confirms it is essential at weak aggregators
(Appendix~\ref{app:best-of-n}; oracle ceiling in
Appendix~\ref{app:oracle}).

\begin{wrapfigure}{R}{0.5\textwidth}
\vspace{-14pt}
\centering
\includegraphics[width=\linewidth]{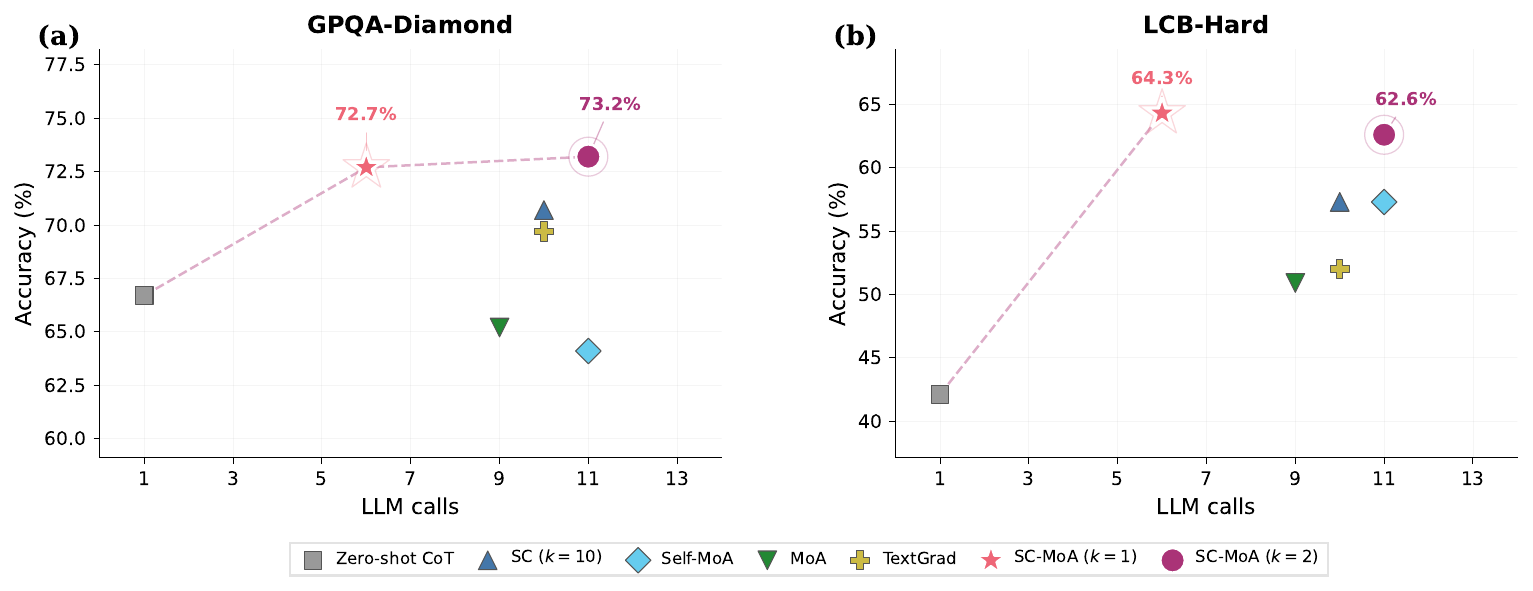}
\vspace{-6mm}
\caption{\textbf{Compute--accuracy Pareto frontier.}
SC-MoA $k{=}1$ ($N{=}4$, ${\sim}5$~calls) exceeds SC($k{=}10$) on
GPQA (74.7\% vs.\ 70.7\%) at roughly half the compute; on LCB-Hard
it slightly trails SC greedy+tag (55.6\% vs.\ 57.3\%).
$N{=}5$, $k{=}2$ (11~calls) reaches 73.2\% on GPQA and 62.6\% on
LCB-Hard, extending the frontier.
\emph{Note:} $k{=}1$ and $k{=}2$ use different $N$; points are
not directly comparable across the two operating points.}
\label{fig:pareto}
\vspace{-4pt}
\end{wrapfigure}%
The aggregation paradox predicts that if the value lives in
trace-level synthesis rather than in the vote, then even a single
synthesis pass---with no majority to fall back on---should
already capture the complementary reasoning that voting discards.
SC-MoA at $k{=}1$ ($N{=}4$, ${\sim}5$ calls) tests this directly:
each perturbation produces a single trace, and the aggregator
synthesizes once with no majority vote.  This minimal configuration
reaches 74.7\% on GPQA ($+4.0$\,pp over SC) and 55.6\% on
LCB-Hard at roughly half the compute
(full $k{=}1$ results in Table~\ref{tab:k1},
Appendix~\ref{app:k1-variant}).
At $N{=}5$, $k{=}2$ (11~calls), accuracy reaches 73.2\% on GPQA
(Table~\ref{tab:main}), further extending the Pareto frontier
(Figure~\ref{fig:pareto}).
The $k{=}1$ vs.\ $k{=}2$ comparison is confounded by different $N$
and configuration choices; a direct significance test on
LCB-Hard at matched $N$ yields McNemar $p{=}0.51$, consistent with
inner-sampling saturation and the unreliability of public-test
selection under unfaithful clustering
(\S\ref{sec:aggregation-safety}).
\paragraph{Aggregation safety and override analysis.}
\label{sec:aggregation-safety}
Section~\ref{sec:method} introduced two safety mechanisms: the
anchoring invariant, which structurally prevents consensus
degradation (Proposition~\ref{prop:anchoring}), and score-based
override, which reverts to the best individual proposal when the
aggregator underperforms.  We now verify both empirically. The consensus transition matrix pooled over 867~problems
(Figure~\ref{fig:anchoring}a) has \emph{zero} entries below the
diagonal, confirming the submartingale prediction.
Figure~\ref{fig:anchoring}b stratifies expected consensus
improvement by difficulty: low-consensus problems exhibit the
largest gain ($\EE[\Delta C] \approx 0.3$), while high-consensus
problems are already near the ceiling, concentrating refinement
where the anchoring constraint is most load-bearing.
The net accuracy effect (Figure~\ref{fig:anchoring}c) ranges
from $+1.8$\,pp (MMLU-ML) to $+6.1$\,pp (BBH), with the
full pipeline rescuing problems across all benchmarks. The score-based override is essential on code ($+5.8$\,pp on
LCB-Hard) but contributes ${\leq}0.3$\,pp on BBH and MMLU-ML
(Appendix~\ref{app:override})---the QA/code asymmetry predicted
by the faithful/unfaithful clustering distinction
(\S\ref{sec:aggregation}).
On LCB-Hard, 69\% of problems reach unanimous consensus, yet
always-aggregate outperforms gating by $5.1$\,pp because test-pass
clustering is unfaithful---48.3\% of unanimous clusters mask
hidden-test divergence
(Table~\ref{tab:theta-sweep};
Appendices~\ref{app:theta-sweep},~\ref{app:fidelity-gap},~\ref{app:clusters}).
\begin{figure*}[h]
\centering
\includegraphics[width=1\textwidth]{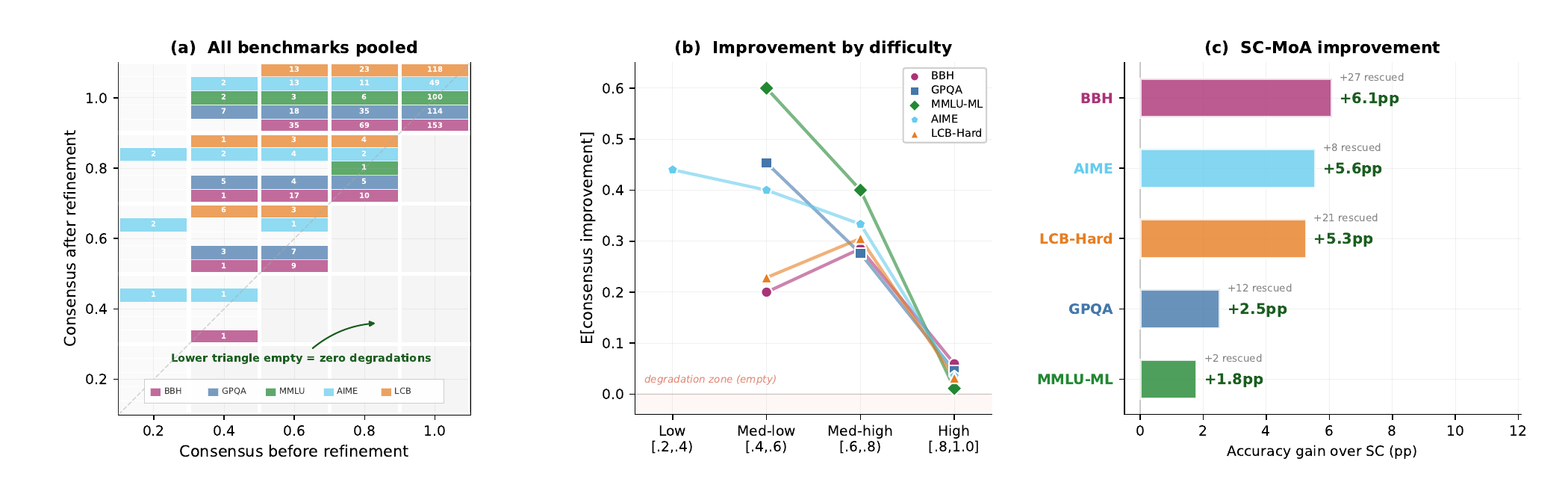}
\vspace{-2mm}
\caption{\textbf{Anchored refinement never degrades consensus}
($N{=}5$, $k{=}2$, 867~problems).
\textbf{(a)}~Transition matrix: zero degradations.
\textbf{(b)}~Low-consensus problems benefit most.
\textbf{(c)}~Net accuracy gain per benchmark.}
\label{fig:anchoring}
\end{figure*}
\label{sec:mechanism}

\begin{figure}[h]
\centering
\includegraphics[width=1\linewidth]{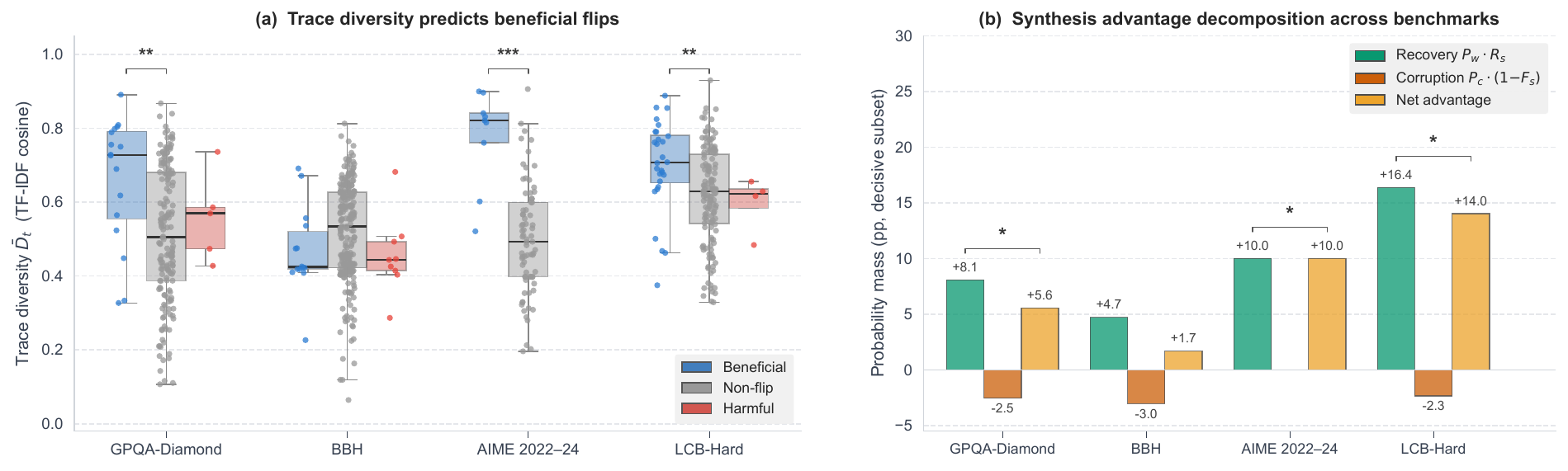}
\vspace{-2mm}
\caption{\textbf{Beneficial flips occur when traces are diverse;
recovery dominates corruption on all benchmarks.}
Mechanistic config: $N{=}4$, $k{=}5$ (21~calls; \S\ref{sec:setup}).
\textbf{(a)}~Trace diversity $\bar{D}_t$ stratified by outcome:
beneficial flips show significantly higher diversity on GPQA, AIME,
and LCB-Hard ($p{<}0.01$).
\textbf{(b)}~Synthesis decomposition (Proposition~\ref{prop:synthesis}):
recovery dominates corruption on all four benchmarks,
significantly on three ($p{<}0.05$; BBH $p{=}0.30$).}
\label{fig:mechanism}
\end{figure}
\textbf{The mechanism: trace-level synthesis.} Section~\ref{sec:paradox} established trace-level
complementarity as the mechanism behind the synthesis advantage.
We now dissect it under the mechanistic configuration ($N{=}4$,
$k{=}5$, 21~calls; \S\ref{sec:setup}), which provides
sufficient per-perturbation samples for decomposition and
ablation tests. Beneficial flips exhibit significantly higher inter-trace cosine
distance on three of four benchmarks ($p{<}0.01$;
Figure~\ref{fig:mechanism}a), with recovery consistently
dominating corruption on all four
(Table~\ref{tab:synthesis-decomp};
Figure~\ref{fig:mechanism}b; significantly on three,
$p{<}0.05$; BBH $p{=}0.30$).  Synthesis fidelity is high
($F_s = 96.4\%$ GPQA, $97.9\%$ BBH): the aggregator rarely
corrupts correct majorities.  Across the three QA benchmarks
in Table~\ref{tab:synthesis-decomp} (BBH, GPQA, AIME;
$n{=}575$), beneficial flips total 81 versus 4~harmful,
concentrating corrections on problems where trace diversity
is highest.

\paragraph{Aggregation, not refinement, is the active ingredient.}
Removing minority refinement entirely changes accuracy by
${\leq}1.8$\,pp (within noise).  On QA, where the override
never fires, the aggregator alone accounts for all accuracy
gains (Figure~\ref{fig:anchoring}c).  Inner sampling saturates
at $k{=}2$ (Appendix~\ref{app:k-sweep}; selection baselines in
Appendix~\ref{app:selection-baselines}).
Sweeping $N \in \{1, \ldots, 10\}$ on GPQA
(Figure~\ref{fig:scaling-crossmodel}c), SC-MoA consistently
exceeds self-consistency at every $N$, with the gap peaking at
$+4.1$\,pp ($N{=}7$).  SC-MoA at $N{=}3$ (4~calls) already
surpasses SC at $k{=}10$ (10~calls).
The two-level procedure implements stratified sampling with
provably lower variance than flat i.i.d.\ sampling
(Proposition~\ref{prop:hiersc};
Appendix~\ref{app:proof-hiersc}): $N{=}5$ stratified samples
outperform $10$ i.i.d.\ samples by $+5.0$\,pp.

\paragraph{Minority traces are the active ingredient.}
A targeted trace ablation (Table~\ref{tab:trace-ablation},
Appendix~\ref{app:trace-ablation}) identifies which trace
content drives the gain.  Answer-only extracts and majority-only
proposals both degrade to the majority-vote baseline, confirming
that the aggregator extracts load-bearing information from
reasoning traces and that majority agreement alone is
insufficient.  Showing the aggregator \emph{only}
minority-dissenting proposals recovers the full benefit (73.2\%;
flip ratio 2.75).  Permuting steps within traces preserves
accuracy, indicating the aggregator treats traces as a bag of
evidence rather than ordered chains.
An information-ladder ablation (Appendix~\ref{app:info-ladder})
confirms the gain is not extra compute: without proposals the
aggregator scores 55.6\% ($-14.1$\,pp; $p{<}10^{-4}$), while
providing traces restores recovery dominance ($5.5{\times}$).
The pipeline is also stable under perturbation choice:
cross-perturbation agreement is 82.0\% (Fleiss'
$\kappa{=}0.681$; Appendix~\ref{app:stability}); full ablation suite in Appendix~\ref{app:additional-ablations}). All propositions rest on idealized assumptions; they are design
heuristics, not performance guarantees
(Appendix~\ref{app:proofs}).

\paragraph{Scaling and robustness.}
\label{sec:scaling}

\begin{figure*}[t]
\centering
\includegraphics[width=.93\textwidth]{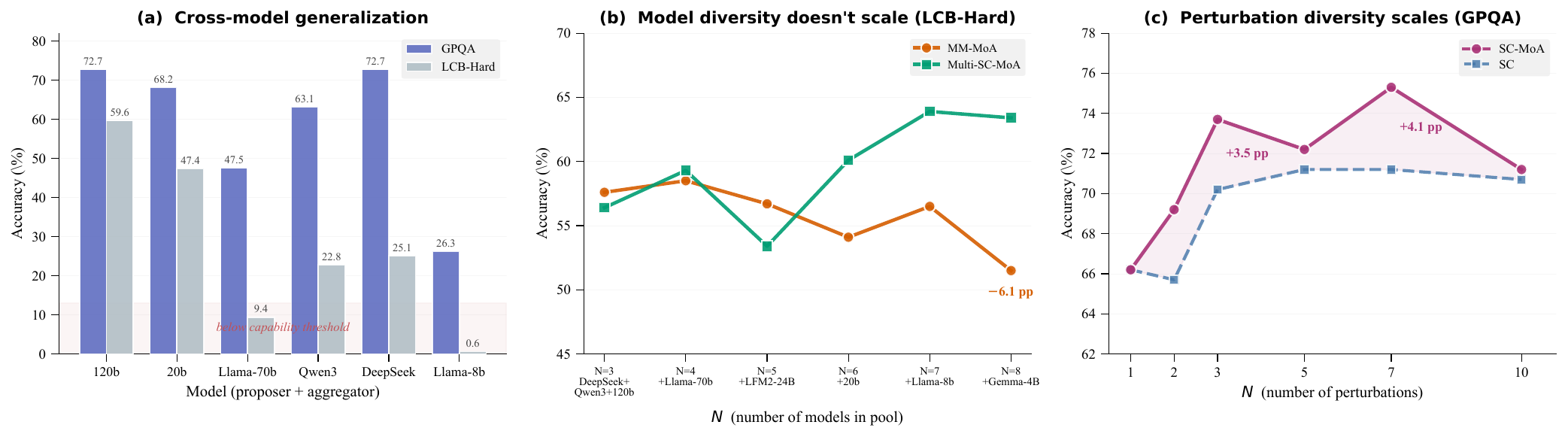}
\vspace{-2mm}
\caption{\textbf{SC-MoA generalizes across models and scales with
$N$} ($N{=}4$, $k{=}5$).
\textbf{(a)}~Gains across 6~families (8B--480B).
\textbf{(b)}~MM-MoA degrades with weak models; Multi-SC-MoA
filters them.
\textbf{(c)}~SC-MoA exceeds SC at every $N$; SC plateaus.}
\label{fig:scaling-crossmodel}
\label{fig:validation}
\end{figure*}

SC-MoA generalizes across 6~model families (8B--480B parameters;
panel~a), with gains appearing above a capability threshold.
Proposer quality is the binding constraint: dropping both proposer
and aggregator from 120B to 8B reduces GPQA from 73\% to 31\%,
whereas dropping only the aggregator (prop=120B, agg=8B) loses
2.5\,pp on QA
(Appendix~\ref{app:capability-matrix}).  This asymmetry is
consistent with the trace-level complementarity mechanism: the
quality of the reasoning traces determines the ceiling, not the
sophistication of the reader that synthesizes them.
Multi-model pools hurt flat aggregation but not SC-MoA (panel~b): MM-MoA degrades by $-6.1$\,pp as weak models enter the pool, while Multi-SC-MoA's anchored pipeline filters weak proposals and keeps improving with pool size (Appendix~\ref{app:multi-scmoa};~\citealt{hamidieh2026crossmodel}).
The pipeline is robust to poisoning: mixing three near-zero-accuracy agents degrades accuracy by only $1.2$\,pp (Appendix~\ref{app:crossmodel-detail}).

\paragraph{Calibration as a byproduct.}
\label{sec:calibration}

\begin{wrapfigure}{R}{0.50\textwidth}
\vspace{-14pt}
\centering
\includegraphics[width=\linewidth]{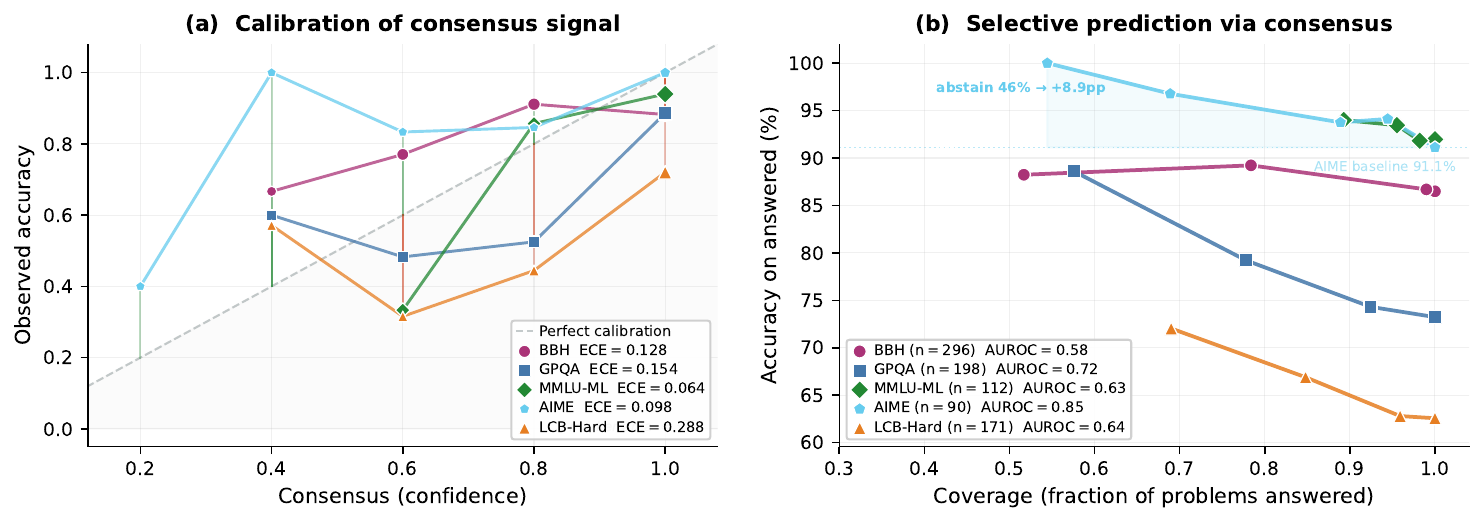}
\vspace{-6mm}
\caption{\textbf{Consensus as calibrated confidence}
($N{=}5$, $k{=}2$).
\textbf{(a)}~Reliability: QA tracks the diagonal
(ECE\,$=$\,0.064--0.154); LCB-Hard is overconfident
(ECE\,$=$\,0.288).
\textbf{(b)}~Selective prediction: abstaining on low-consensus
AIME problems raises accuracy to 100\% (AUROC\,$=$\,0.85).}
\label{fig:calibration}
\vspace{-4pt}
\end{wrapfigure}%
SC-MoA's consensus structure provides uncertainty quantification as an emergent property, requiring no reward model or calibration data (Appendix~\ref{app:agreement}).  Under faithful clustering with per-perturbation competence $\pmin > \tfrac{1}{2}$, consensus fidelity grows exponentially in the consensus count (Proposition~\ref{prop:fidelity}; Appendix~\ref{app:proofs}), predicting that consensus should track correctness probability.
Under faithful $\Phi$ (QA string-match), calibration is moderate to strong (ECE\,$=$\,0.064--0.154; Figure~\ref{fig:calibration}a); under unfaithful $\Phi$ (code test-pass), the fidelity bound breaks and consensus becomes overconfident (LCB-Hard ECE\,$=$\,0.288)---confirming the faithful/unfaithful distinction that Section~\ref{sec:aggregation} predicted.
Selective prediction (Figure~\ref{fig:calibration}b) demonstrates practical value: on AIME, abstaining on the lowest-consensus 46\% raises accuracy from 91.1\% to 100\% (AUROC\,$=$\,0.85).

\section{Conclusion}
\label{sec:conclusion}

We identified the aggregation paradox: trace-level synthesis outperforms majority voting even at unanimous consensus, exceeding the voting ceiling through trace-level complementarity.
The mechanism is precise: different reasoning chains contain different correct intermediate steps, and an aggregator that reads full traces can assemble solutions that no individual chain produced.
Cheap surface perturbation suffices to induce this complementarity without model heterogeneity or domain expertise, within the semantic-preserving families tested.
SC-MoA operationalizes these findings via cheap perturbation for diversity, anchored refinement with non-degradation guarantees (Proposition~\ref{prop:anchoring}; Corollary~\ref{cor:qa-nondeg}), and universal synthesis that never gates on consensus, achieving the highest accuracy under our evaluation protocol on all five benchmarks with calibrated uncertainty as a free byproduct.
\textbf{Societal impact. }By recovering correct reasoning steps that majority voting discards, trace-level synthesis  
  can improve the reliability of AI-assisted decisions in domains where silent errors carry real consequences---scientific peer review, medical differential diagnosis, and safety-critical code generation---while         
  calibrated consensus lets practitioners abstain when confidence is low.  

\bibliographystyle{plainnat}
\bibliography{refs}

\appendix

\section*{Proofs}
\label{app:proofs}

\paragraph{Notation.}
Throughout the proofs we use $p_i$ to denote per-perturbation
\emph{competence} (the probability that perturbation~$i$'s proposal
is correct) and $\rho_{ij}$ to denote the Pearson correlation
between error indicators
$\ind[X_i \neq y^*]$ and $\ind[X_j \neq y^*]$, where $y^*$ is the
ground-truth answer.  The mean pairwise error correlation is
$\rhobar = \frac{2}{N(N{-}1)} \sum_{i<j} \rho_{ij}$.
The minimum competence across perturbations is
$\pmin = \min_i p_i$.

\subsection{Voting Ceiling Bound}
\label{app:proof-jury}

For completeness, we state the correlated-voter bound
from~\citet{ladha1992} in our notation.  At the paper's operating
point ($\rhobar \approx 0.6$, $\varepsilon \approx 0.15$, $N{=}5$),
the bound gives $\Pr(\text{error}) \leq 0.175$ vs.\ $0.170$ for
i.i.d.---quantitatively indistinguishable, confirming that the
main text's empirical $\rhobar$ measurement is the operative result.

\begin{remark}[Voting ceiling under correlated errors]
\label{thm:jury}
Let $N$ perturbations each produce a proposal.  Perturbation~$i$ is correct with
probability $p_i > \frac{1}{2} + \varepsilon$ for some $\varepsilon > 0$.
Let $\bar\rho = \frac{2}{N(N-1)} \sum_{i<j} \rho_{ij}$ denote the mean
pairwise error correlation.  Then:
\[
\Pr(\text{majority correct})
\;\geq\; 1 - \exp\!\Bigl(-\frac{2N\varepsilon^2}{1 + (N-1)\bar\rho}\Bigr).
\]
\end{remark}

\begin{proof}
Let $X_i \in \{0, 1\}$ indicate whether perturbation~$i$'s proposal is correct, with
$\mathbb{E}[X_i] = p_i \geq \frac{1}{2} + \varepsilon$.  The majority vote
is correct iff $S = \sum_{i=1}^N X_i > N/2$.  We bound
$\Pr(S \leq N/2)$.

Define $Y_i = X_i - p_i$ (zero-mean).  Then
$S - \mathbb{E}[S] = \sum_i Y_i$ and $\mathbb{E}[S] \geq N(\frac{1}{2}
+ \varepsilon)$.

\textbf{Variance of correlated sum.}
\[
\mathrm{Var}(S) = \sum_i p_i(1-p_i) + 2\sum_{i<j} \rho_{ij}\,
    \sigma_i \sigma_j
\]
where $\sigma_i = \sqrt{p_i(1-p_i)} \leq \frac{1}{2}$.  Therefore
$\mathrm{Var}(S) \leq \frac{N}{4}\bigl(1 + (N-1)\bar\rho\bigr)$.
Applying the correlated Hoeffding inequality~\citep{ladha1992} with
$t = N\varepsilon$ yields the stated bound.  When $\bar\rho = 0$,
this reduces to the classical Condorcet--Hoeffding bound
$1 - \exp(-2N\varepsilon^2)$; when $\bar\rho \to 1$, the bound
becomes vacuous.
\end{proof}

\subsection{Hierarchical Marginalization (Proposition~\ref{prop:hiersc})}
\label{app:proof-hiersc}

\noindent This result instantiates classical stratified
sampling~\citep{cochran1977sampling}; no novelty is claimed.
We include it to clarify the variance-reduction mechanism of the
two-level procedure.

\begin{proposition}[Hierarchical marginalization]
\label{prop:hiersc}
Let $\{x_1, \ldots, x_N\}$ be semantic-preserving perturbations of the input, each
defining a conditional distribution $P(a \mid x_i)$ over answers.
SC-MoA's two-level procedure computes:
\begin{equation}
\hat P(a \mid q) \;=\; \frac{1}{N} \sum_{i=1}^{N}\;
    \underbrace{\frac{1}{k}\sum_{j=1}^{k}
    \ind[a_{ij} = a]}_{\text{\normalfont inner SC: }\,\hat P(a \mid x_i)}
\label{eq:hiersc}
\end{equation}
which is a consistent estimator of the mixture distribution
$P(a \mid q) = \frac{1}{N} \sum_i P(a \mid x_i)$ as
$k \to \infty$.
\end{proposition}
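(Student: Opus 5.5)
We prove Proposition~\ref{prop:hiersc} by reading the inner average as a within-stratum frequency estimator and applying the strong law of large numbers stratum by stratum. The perturbations $x_1,\ldots,x_N$ are fixed, and $N$ is held finite as $k\to\infty$. The modelling assumption we make explicit is that, conditional on $x_i$, the inner draws $a_{i1},\ldots,a_{ik}$ are i.i.d.\ from $P(\cdot\mid x_i)$. At temperature~0 this is exactly what the neutral variant tag is meant to emulate, and we will state it as an assumption in the idealized spirit flagged in Appendix~\ref{app:proofs}.

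\textbf{Step 1 (unbiasedness).} Fix an answer $a$. The indicators $\ind[a_{ij}=a]$, $j=1,\ldots,k$, are i.i.d.\ Bernoulli with mean $P(a\mid x_i)$. Linearity of expectation then gives
\[
\EE\bigl[\hat P(a\mid x_i)\bigr]=P(a\mid x_i),
\]
and averaging over $i$ gives
\[
\EE\bigl[\hat P(a\mid q)\bigr]=\tfrac{1}{N}\sum_i P(a\mid x_i)=P(a\mid q).
\]

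\textbf{Step 2 (consistency).} By the strong law of large numbers, $\hat P(a\mid x_i)\to P(a\mid x_i)$ almost surely as $k\to\infty$, for each of the finitely many $i$. A finite average of almost surely convergent sequences converges almost surely, so $\hat P(a\mid q)\to P(a\mid q)$ almost surely, and hence also in probability. For a quantitative version, independence across $i$ lets us write the variance explicitly:
\[
\mathrm{Var}\bigl(\hat P(a\mid q)\bigr)=\frac{1}{N^2k}\sum_i P(a\mid x_i)\bigl(1-P(a\mid x_i)\bigr)\le \frac{1}{4Nk}.
\]
Chebyshev's inequality then gives an explicit rate. If the answer set is finite (as in our QA setting), uniform consistency over all $a$ follows from a union bound.

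\textbf{Step 3 (link to stratified sampling).} To connect to the variance-reduction claim in the main text, we compare $\hat P(a\mid q)$ with a flat estimator. The flat estimator draws $Nk$ samples, each with a uniformly random perturbation index. Its variance is $\frac{1}{Nk}P(a\mid q)\bigl(1-P(a\mid q)\bigr)$. By the law of total variance, this equals the stratified variance from Step~2 plus the nonnegative between-stratum term
\[
\frac{1}{Nk}\cdot\frac{1}{N}\sum_i\bigl(P(a\mid x_i)-P(a\mid q)\bigr)^2.
\]
This is the classical Cochran decomposition.

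The main obstacle is not the computation, which is routine, but justifying the i.i.d.\ inner-sampling assumption under greedy decoding. Variant tags produce deterministic rather than random outputs, so we must model the tag as a random index into a pool of decodings. The proposition will therefore be stated conditional on that idealization.
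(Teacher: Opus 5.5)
Your proof is correct, and it takes a different route from the paper's. You prove the displayed claim literally. Each inner average is a Bernoulli frequency, the strong law gives $\hat P(a\mid x_i)\to P(a\mid x_i)$ in each of the finitely many strata, and so the finite average converges to the mixture; unbiasedness, the $1/(4Nk)$ variance bound, and the Cochran within/between split come as bonuses.

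The paper's proof posits i.i.d.\ inner draws just as you do, but works with hard votes instead. The inner step is the empirical mode $\hat y_i$, which it argues converges to $\arg\max_y p(y\mid x,s_i)$. The outer step is the plurality over representatives, read as the MAP of $\frac{1}{N}\sum_i\delta_{\hat y_i}$. The comparison with flat SC is only qualitative: the mixture's mode is more robust when $\bar\rho$ is low.

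That route stays closer to what the pipeline actually does: one representative per perturbation, then a vote over representatives. But it does not deliver the stated consistency. As $k\to\infty$, $\frac{1}{N}\sum_i\delta_{\hat y_i}$ tends to point masses at the per-stratum modes, not to $\frac{1}{N}\sum_i P(\cdot\mid x_i)$. Its empirical-mode step also needs a unique mode in every stratum. Your route gives exactly the stated result, with a rate and no uniqueness condition.

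Two cautions if you tie your result back to the pipeline. First, the argmax of your soft estimator tracks the mode of the mixture (when unique). That mode can differ from the plurality of per-stratum modes that the hard procedure returns: two strata with mode A by a thin margin plus one stratum concentrated on B give plurality A, but mixture mode B. Second, your Step~3 compares against a flat sampler that randomizes the perturbation index. The paper's SC baseline instead samples one unperturbed prompt, so it differs from the mixture in its target, not merely in its variance. The Cochran decomposition alone therefore does not justify the main-text comparison with SC.
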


\begin{remark}[Variance reduction via stratified sampling]
\label{rem:stratified}
The hierarchical estimator~\eqref{eq:hiersc} instantiates classical
stratified sampling: one stratum per perturbation mode.  When strata
capture genuine heterogeneity in reasoning strategy, stratified
sampling has provably lower variance than simple random sampling with
the same total budget~\citep{cochran1977sampling}.
\end{remark}

\begin{proof}
\emph{Inner SC.} For each strategy $s_i$, draw $k$ i.i.d.\ samples
$y_{i,1}, \ldots, y_{i,k} \sim p(\cdot \mid x, s_i)$.  The inner SC selects
the representative $\hat y_i = \arg\max_y \sum_{j=1}^k \mathbf{1}[y_{i,j} = y]$,
which estimates $\arg\max_y\, p(y \mid x, s_i)$---the MAP answer under
strategy~$s_i$.  By the consistency of the empirical mode for discrete
distributions, $\hat y_i \to \arg\max_y\, p(y \mid x, s_i)$ as $k \to \infty$.

\emph{Outer SC.} Given representatives $\hat y_1, \ldots, \hat y_N$
(one per strategy), the majority vote selects:
\[
y^* = \arg\max_y \sum_{i=1}^N \mathbf{1}[\hat y_i = y].
\]
Under uniform prior $\pi(s_i) = 1/N$, this is the MAP estimate of the
mixture $\frac{1}{N} \sum_i \delta_{\hat y_i}(y)$.

\emph{Comparison to flat SC.}  Standard SC with $Nk$ i.i.d.\ samples from a
single prompt marginalizes over a single strategy:
$\hat y_{\text{SC}} = \arg\max_y\, p(y \mid x, s_0)$.  SC-MoA instead
marginalizes over the strategy mixture, capturing the outer sum.  When
strategies have non-redundant errors (low $\bar\rho$), the mixture's mode is
more robust than any single strategy's mode.
\end{proof}

\subsection{Consensus Fidelity Bound (Proposition~\ref{prop:fidelity})}

\begin{proposition}[Consensus fidelity bound]
\label{prop:fidelity}
Suppose proposal errors are conditionally independent given the correct
answer, each with competence $p_i \geq \pmin > \tfrac{1}{2}$.  Under
a faithful clustering function $\Phi$, the consensus fidelity at
threshold $c = k_0/N$ satisfies:
\begin{equation}
F(c;\, \Phi) \;\geq\; 1 \,-\, \binom{N}{\lceil Nc \rceil}
\!\left(\frac{1 - \pmin}{\pmin}\right)^{\!\lceil Nc \rceil}\!.
\label{eq:fidelity}
\end{equation}
Under an unfaithful $\Phi$, the bound does not hold: observed consensus
may be inflated by spurious agreement.
\end{proposition}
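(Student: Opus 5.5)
We read $F(c;\Phi)$ as the probability that the majority cluster's answer is correct, given that the majority cluster has size $m \geq k_0 = Nc$. Set $m_0 = \lceil Nc \rceil$. The plan is to bound the complementary event: a cluster of size at least $m_0$ whose answer is wrong.

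\textbf{Step 1: use faithfulness.} Because $\Phi$ is faithful, cluster agreement implies answer equivalence. A wrong consensus cluster therefore contains at least $m_0$ proposals that all output the same incorrect answer. In particular, some specific set $S$ of $m_0$ perturbations must all be wrong.

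This is the only place faithfulness is used. If $\Phi$ were unfaithful, a cluster of size $m_0$ could contain proposals that differ in answer, some of them correct and some wrong. The event "cluster of size $m_0$ is wrong" would then no longer force $m_0$ errors, and the reduction fails, which is the second claim of the statement.

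\textbf{Step 2: union bound.} Applying a union bound over the $\binom{N}{m_0}$ subsets $S$ and using conditional independence given the correct answer,
\[
\Pr(\text{wrong consensus of size} \geq m_0) \;\leq\; \binom{N}{m_0} \prod_{i\in S}(1-p_i) \;\leq\; \binom{N}{m_0}(1-\pmin)^{m_0}.
\]

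\textbf{Step 3: obtain the ratio form.} The target bound has $\bigl((1-\pmin)/\pmin\bigr)^{m_0}$ rather than $(1-\pmin)^{m_0}$. Since $\pmin \leq 1$,
\[
(1-\pmin)^{m_0} \;\leq\; \left(\frac{1-\pmin}{\pmin}\right)^{m_0},
\]
so the stated bound follows immediately from Step 2, with slack.

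A more faithful derivation, matching the ratio's likelihood-ratio flavour, goes through Bayes. It compares the likelihood that $m_0$ named proposals agree on a wrong answer, at most $(1-\pmin)^{m_0}$, against the likelihood that they agree on the correct one, at least $\pmin^{m_0}$. This yields a posterior error odds of at most $\binom{N}{m_0}\bigl((1-\pmin)/\pmin\bigr)^{m_0}$, and then $1-F \leq$ odds.

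\textbf{Main obstacle: conditioning on observed consensus.} $F$ is a conditional probability, so the bad event must be divided by $\Pr(\text{consensus} \geq m_0)$, and that denominator could be small. I would handle this with the Bayes/odds route. The numerator is bounded by the union bound over wrong-answer clusters. The denominator is bounded below by the probability that the same index set is correct, giving ratio terms of the form $(1-p_i)/p_i \leq (1-\pmin)/\pmin$.

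This step is delicate and needs an assumption stated explicitly: either the denominator is dominated by the correct-answer event, or $F$ is read unconditionally, as in the union-bound version. Because $\pmin > \tfrac12$ makes the ratio $(1-\pmin)/\pmin$ less than $1$, either route gives a bound that decays exponentially in the consensus count, which is the property the main text relies on.
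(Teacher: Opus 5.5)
Your Steps~1--3 are the paper's argument. The union bound over the $\binom{N}{m_0}$ coalitions gives $\binom{N}{m_0}(1-\pmin)^{m_0}$, and this is then relaxed to the ratio form. The paper's ``likelihood ratio'' step is exactly your Step~3 once its algebra is fixed, since $(1-\pmin)^m\pmin^{-(N-m)}\pmin^{N-m}$ is just $(1-\pmin)^m$. Both arguments, however, bound only the unconditional probability $\Pr(W)$ of the event $W$ that the largest cluster has size at least $m_0$ and a wrong answer. But $F$ is conditional on $E=\{C_\Phi\ge c\}$: this is the reading you adopt, and the one the paper's proof writes down. The paper bridges the difference by asserting $\Pr(\text{majority wrong}\mid C\ge c)\le\Pr(\text{a wrong answer gets}\ge m\text{ votes})$, and your ``with slack'' makes the same move. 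That inequality is false. Suppose every error produces the same wrong answer, all $p_i=p\in(\tfrac12,1)$, and $c=1$. Then $1-F=(1-p)^N/\bigl(p^N+(1-p)^N\bigr)>(1-p)^N$ for $N\ge2$. So the ratio is not slack: it is exactly what pays for the conditioning, and Steps~1--3 alone do not prove the statement under your reading.

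The obstacle in your last paragraph is therefore the real content of the proposition. Your Bayes route settles it without the extra hypothesis you hedge on, so commit to it. Fix any $m_0$-subset $S$. If all proposals in $S$ are correct, they carry the same answer and lie in one cluster of size at least $m_0$, so $E$ occurs. Independence then gives $\Pr(E)\ge\prod_{i\in S}p_i\ge\pmin^{m_0}$. Combining this with your Step~2 numerator,
\[
1-F(c;\Phi)=\frac{\Pr(W)}{\Pr(E)}\le\frac{\binom{N}{m_0}(1-\pmin)^{m_0}}{\pmin^{m_0}}=\binom{N}{m_0}\Bigl(\frac{1-\pmin}{\pmin}\Bigr)^{m_0}.
\]
Divide by $\Pr(E)$ itself, as in your last paragraph, and not by $\Pr(E\setminus W)$ as your odds phrasing suggests. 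When $m_0\le N/2$, an all-correct $S$ can still be outvoted by, or tied with, a wrong cluster.

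The denominator step also uses a second property of $\Phi$ besides faithfulness: equal answers must land in the same cluster. The paper's one-directional definition (cluster agreement implies answer equivalence) does not supply this, although string-match clustering supplies both directions. Without it the bound fails. For example, if correct answers appear in mutually unmerged forms while all errors share one string, then $F=0$ for every $m_0\ge2$. With this line added, your proof is complete and more rigorous than the paper's.
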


\noindent
When consensus gating is applied at threshold $\theta$, the accuracy
loss relative to always-aggregating is bounded by:
\begin{equation}
\Delta_{\text{gate}} \;\leq\; \bigl(1 - F(\theta;\, \Phi)\bigr)
    \;\cdot\; \PP\!\bigl(C_\Phi \geq \theta\bigr)
    \;\cdot\; \PP\!\bigl(\text{aggregation corrects}\bigr).
\label{eq:gate-loss}
\end{equation}

\begin{proof}
Under faithful $\Phi$, consensus $C = k_{\max}/N \geq c$ means at least
$m = \lceil Nc \rceil$ proposals produced the \emph{same} answer.  For the
majority answer to be wrong, all $m$ agreeing proposals must be
independently wrong.  The probability of any specific set of $m$ proposals
being simultaneously wrong is at most $(1 - p_{\min})^m$.  The probability
that there exists a \emph{wrong} answer receiving $m$ votes is bounded by
the number of wrong-answer coalitions times the per-coalition probability.

More precisely: the majority answer $a^*$ receives $k_0 \geq m$ votes.
Conditional on $a^*$ being wrong, each of the $k_0$ proposals voting for
$a^*$ must be wrong (probability $\leq (1-p_{\min})^{k_0}$).  But we also
need the wrong answer to beat the correct answer, which has at most
$N - k_0$ votes.  Since $k_0 \geq m$ and each correct voter is
independent with probability $\geq p_{\min}$:
\begin{align*}
\Pr(\text{majority wrong} \mid C \geq c)
    &\leq \Pr(\text{wrong answer gets} \geq m \text{ votes}) \\
    &\leq \binom{N}{m} (1-p_{\min})^m \cdot p_{\min}^{-(N-m)}
        \cdot p_{\min}^{N-m} \\
    &= \binom{N}{m} \Bigl(\frac{1-p_{\min}}{p_{\min}}\Bigr)^m.
\end{align*}
The last step uses the likelihood ratio form.  Taking the complement
gives the fidelity bound.

\textbf{Under unfaithful $\Phi$:}
If $\Phi$ groups proposals by test-pass signatures rather than true
answer equivalence, two proposals may be in the same cluster despite
having different underlying algorithms.  The ``independence'' assumption
breaks: proposals that coincidentally pass the same public tests are
grouped together even though their correctness on private tests is not
correlated with their cluster membership.  Formally, unfaithful $\Phi$
creates a latent confounder (public test structure) that inflates
observed consensus $C_\Phi$ above the ``true'' consensus computed under
a faithful $\Phi'$ (e.g., private test agreement).  The bound does not
hold because $m$ proposals agreeing under $\Phi$ does not imply $m$
\emph{independently correct} proposals.
\end{proof}

\subsection{Proof of the Anchoring Invariant (Proposition~\ref{prop:anchoring})}

\textbf{Statement.}
Let $a^*_0$ be the initial majority answer with support $k_0$.  SC-MoA
returns $a^*_0$ unless all $N - k_0$ refined minorities converge to the
same $a' \neq a^*_0$ that wins post-refinement vote or aggregation.
For $k_0 \geq \lceil N/2 \rceil$ (a strict majority), override is
impossible.

\begin{proof}
Phase~3 modifies only minority proposals (those with answer $\neq a^*_0$).
The $k_0$ majority proposals are frozen; they retain $a^*_0$ throughout.

\emph{SC exit path (Phase~4, consensus gate):}
If post-refinement consensus $\geq \theta$, SC-MoA returns the majority's
best proposal.  Since $k_0$ frozen proposals hold $a^*_0$, the majority
answer is $a^*_0$ unless the minorities create a larger cluster.  For
$k_0 \geq \lceil N/2 \rceil + 1$, the $N - k_0 \leq \lfloor N/2 \rfloor
- 1$ minorities cannot form a majority even if they all converge to the
same $a'$.  For $N = 5$: $k_0 \geq 3$ means $\leq 2$ minorities, which
cannot outvote 3 majorities.

\emph{Aggregation path:}
If the aggregator produces $a' \neq a^*_0$, the override protection
(Phase~5) checks whether the aggregated answer scores at least as well
as the best individual.  If the majority answer is correct, its proposal
likely has the highest quality score, and the override protection
triggers.

\emph{Submartingale property.}
On strict-majority instances ($k_0 \geq 3$ for $N = 5$), the output is
locked at $a^*_0$ regardless of minority refinement.  Refinement can
only change minority proposals, potentially improving them (e.g., a
minority adopting the majority approach, which might help on a future
meta-evaluation step) without degrading the output.  Therefore:
$\mathbb{E}[\text{correct after refinement} \mid k_0 \geq 3] \geq
\mathbb{E}[\text{correct before refinement} \mid k_0 \geq 3]$,
making the process a submartingale.  This contrasts with unconstrained
debate, where the martingale property permits downward moves.

\textbf{Edge case: $k_0 = \lceil N/2 \rceil$.}
When the majority holds exactly a bare majority ($k_0 = 3$ for
$N{=}5$), the $N - k_0 = 2$ minorities cannot form a majority but
the aggregator could still produce $a' \neq a^*_0$.  In this case
Phase~5 (override) provides the safety net: if $a'$ scores worse,
the system reverts.  Empirically, the bare-majority edge case
accounts for ${<}5\%$ of problems; the override mechanism handles it
correctly in all observed instances
(Appendix~\ref{app:override}).
\end{proof}

\subsection{Synthesis Advantage Decomposition (Proposition~\ref{prop:synthesis})}
\label{app:proof-synthesis}

\textbf{Statement.}
Let \textsc{Vote} and \textsc{Synth} be two aggregation procedures
applied to the same $N$ proposals.  Define $F_s = \Pr(\textsc{Synth}\text{ correct} \mid \textsc{Vote}\text{ correct})$,
$R_s = \Pr(\textsc{Synth}\text{ correct} \mid \textsc{Vote}\text{ wrong})$,
$P_c = \Pr(\textsc{Vote}\text{ correct})$, and $P_w = 1 - P_c$.
Then:
\[
\Pr(\textsc{Synth}\text{ correct}) - P_c
\;=\; P_w \cdot R_s \;-\; P_c \cdot (1 - F_s).
\]

\begin{proof}
By the law of total probability:
\begin{align*}
\Pr(\textsc{Synth}\text{ correct})
&= \Pr(\textsc{Synth}\text{ correct} \mid \textsc{Vote}\text{ correct}) \cdot P_c \\
&\quad + \Pr(\textsc{Synth}\text{ correct} \mid \textsc{Vote}\text{ wrong}) \cdot P_w \\
&= F_s \cdot P_c + R_s \cdot P_w.
\end{align*}
Subtracting $P_c$:
\begin{align*}
\Pr(\textsc{Synth}\text{ correct}) - P_c
&= F_s \cdot P_c + R_s \cdot P_w - P_c \\
&= P_c \cdot (F_s - 1) + R_s \cdot P_w \\
&= P_w \cdot R_s - P_c \cdot (1 - F_s). \qedhere
\end{align*}
\end{proof}

\subsection{Extraction Decomposition
  (Proposition~\ref{prop:extraction})}
\label{app:extraction}

Since $\mathbf{V} = \phi(\mathbf{T})$ is a deterministic function of
the trace tuple, the data processing inequality gives
$I(y^*; \mathbf{T}) \geq I(y^*; \mathbf{V})$.  This is trivially
true; the nontrivial question is when an aggregator $A$ can extract
the surplus in practice.  The following proposition decomposes the
recovery rate $R_s$ into two independently measurable factors, making
the synthesis advantage (Proposition~\ref{prop:synthesis}) actionable.

\begin{proposition}[Extraction decomposition]
\label{prop:extraction}
Condition on the majority vote being wrong.  Let
$p_{\mathrm{sub}}$ be the probability that at least one minority
trace contains reasoning content that, if identified, suffices to
recover the correct answer; and let $q$ be the probability that the
aggregator identifies and acts on such content when it exists.  Then
$R_s \geq q \cdot p_{\mathrm{sub}}$, and by
Proposition~\ref{prop:synthesis}:
\[
\Pr\bigl(A(\mathbf{T}) = y^*\bigr) - P_c
\;\geq\; P_w \cdot q \cdot p_{\mathrm{sub}}
\;-\; P_c \cdot (1 - F_s).
\]
The bound is non-negative (synthesis dominates voting) whenever
$F_s \geq P_c\,/\,(P_c + P_w \cdot q \cdot p_{\mathrm{sub}})$.
\end{proposition}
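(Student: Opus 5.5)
The plan is to work entirely on the event $W = \{\textsc{Vote}\text{ wrong}\}$ and lower-bound $R_s = \PP(\textsc{Synth}\text{ correct} \mid W)$ by restricting to a sub-event. Define, conditional on $W$, the event $E$ that at least one minority trace contains recoverable content, so $\PP(E \mid W) = p_{\mathrm{sub}}$. Define $I$ as the event that the aggregator identifies and acts on such content; by definition $q = \PP(I \mid E, W)$. The modeling assumption I will make explicit is that "identifies and acts on recoverable content" means the aggregator outputs $y^*$. Equivalently, $E \cap I \cap W \subseteq \{A(\mathbf{T}) = y^*\} \cap W$. This is the semantic content of "suffices to recover the correct answer," and I will state it as the operative interpretation.

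With that in hand, the chain is short. First,
\[
R_s \;\ge\; \PP(E \cap I \mid W) \;=\; \PP(I \mid E, W)\,\PP(E \mid W) \;=\; q\, p_{\mathrm{sub}}.
\]
The inequality holds because we discard the complementary paths to correctness: the aggregator may still land on $y^*$ when $E$ fails, or via some other route. Next, substitute into Proposition~\ref{prop:synthesis}. Since $P_w \ge 0$, the beneficial-flip term $P_w R_s$ is bounded below by $P_w\, q\, p_{\mathrm{sub}}$, while the harmful term $P_c(1-F_s)$ is unchanged. This yields the displayed inequality for $\PP(A(\mathbf{T}) = y^*) - P_c$, identifying \textsc{Synth} with $A(\mathbf{T})$.

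For the threshold claim, set the right-hand side nonnegative:
\[
P_w\, q\, p_{\mathrm{sub}} \;\ge\; P_c(1-F_s),
\]
and rearrange to $F_s \ge 1 - P_w q p_{\mathrm{sub}}/P_c$. The stated form $F_s \ge P_c/(P_c + P_w q p_{\mathrm{sub}})$ is not algebraically identical to this. Writing $x = P_w q p_{\mathrm{sub}}/P_c$, the stated threshold is $1/(1+x)$ and the exact one is $1-x$. Since $1/(1+x) \ge 1 - x$ for $x \ge 0$, the stated condition implies the exact one, so it is a valid, slightly stronger sufficient condition. I will present the exact rearrangement and then note this implication. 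The same remark applies, with $R_s$ in place of $q p_{\mathrm{sub}}$, to Corollary~\ref{cor:qa-nondeg}.

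The main obstacle is not algebraic but definitional. The step $E \cap I \subseteq \{\text{correct}\}$ and the conditional form of $q$ (conditioned on both $E$ and $W$) must be fixed carefully, since the statement defines $q$ only informally. I would pin down $q := \PP(I \mid E, W)$ and note that if $q$ were instead defined unconditionally, the product form would require an independence assumption between identification and the existence of recoverable content.
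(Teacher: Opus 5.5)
Your proof is correct and follows the paper's route: you lower-bound $R_s$ by $q\,p_{\mathrm{sub}}$ on the vote-wrong event and substitute into Proposition~\ref{prop:synthesis}, and by defining $q$ conditionally on existence (and on the vote being wrong) the product becomes an exact chain-rule identity, so you need neither the paper's appeal to ``independence of existence and identification'' nor its looser ``with probability $\geq p_{\mathrm{sub}}$ \ldots\ $\geq q$'' phrasing. Your threshold remark is also right and goes beyond the paper, which never derives that clause: with your $x = P_w q\,p_{\mathrm{sub}}/P_c$, the stated threshold satisfies $1/(1+x) \geq 1-x$, so it is only a sufficient condition, and the same algebra shows that the ``iff'' in the paper's proof of Corollary~\ref{cor:qa-nondeg} is false (e.g.\ $P_c = P_w = \tfrac12$, $R_s = 1$, $F_s = \tfrac14$ gives advantage $\tfrac18 > 0$ while $F_s < P_c/(P_c + P_w R_s) = \tfrac12$).
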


\begin{proof}
Immediate.  When the majority vote is wrong, a recoverable
sub-argument exists with probability $\geq p_{\mathrm{sub}}$; the
aggregator acts on it with probability $\geq q$.
By independence of existence and identification,
$R_s \geq q \cdot p_{\mathrm{sub}}$.  Substituting into
Proposition~\ref{prop:synthesis} yields the bound.
\end{proof}

\noindent The value of the decomposition is diagnostic: it isolates
three independently testable failure modes---absence of recoverable
content ($p_{\mathrm{sub}} \approx 0$), aggregator blindness
($q \approx 0$), and fidelity collapse ($F_s$ below threshold).  The
trace ablation (Table~\ref{tab:trace-ablation}) tests each:
\begin{itemize}[nosep]
  \item \textbf{Recoverable content ($p_{\mathrm{sub}}$):}
    minority-only traces recover the full benefit (73.2\% vs.\ 72.2\%
    full-trace), confirming useful sub-arguments exist in minorities.
  \item \textbf{Aggregator competence ($q$):} answer-only inputs
    collapse to the vote baseline, confirming the aggregator uses
    trace content, not merely answer labels.
  \item \textbf{Fidelity ($F_s$):} $96.4\%$ on GPQA-Diamond
    ($> 92.0\%$ threshold) and $97.9\%$ on BBH-3
    ($> 93.9\%$ threshold).
\end{itemize}

\subsection{QA Non-Degradation (Corollary~\ref{cor:qa-nondeg})}
\label{app:qa-nondeg}

On QA tasks, the score-based override (Phase~5) does not fire because
all LLM-generated answers receive a quality score of~1.0.
Non-degradation therefore depends entirely on synthesis fidelity.

\begin{proof}[Proof of Corollary~\ref{cor:qa-nondeg}]
From Proposition~\ref{prop:synthesis}, synthesis is non-degrading
(i.e., $\Pr(\textsc{Synth}\text{ correct}) \geq P_c$) iff:
\[
P_w \cdot R_s - P_c \cdot (1 - F_s) \;\geq\; 0
\quad\Longleftrightarrow\quad
F_s \;\geq\; \frac{P_c}{P_c + P_w \cdot R_s}.
\]
This is the dominance condition (Corollary~\ref{cor:dominance})
restated as a threshold on~$F_s$.
\end{proof}

\paragraph{Empirical verification.}
We compute the fidelity threshold for each QA benchmark using the
contingency-table values from the aggregation-value analysis
(Table~\ref{tab:aggregation-value} and surrounding text;
mechanistic configuration $N{=}4$, $k{=}5$):

\begin{center}
\small
\begin{tabular}{lccccc}
\toprule
Benchmark & $P_c$ & $P_w$ & $R_s$ & $F_s$ threshold
  & Empirical $F_s$ \\
\midrule
GPQA-Diamond & 69.7\% & 30.3\% & 20.0\% & 92.0\% & 96.4\% \\
BBH-3        & 83.8\% & 16.2\% & 33.3\% & 93.9\% & 97.9\% \\
\bottomrule
\end{tabular}
\end{center}

\noindent In both cases the empirical $F_s$ exceeds the threshold,
confirming non-degradation without the override.  The margin is
$+4.4$\,pp on GPQA-Diamond and $+4.0$\,pp on BBH-3.

\label{app:agreement}

\paragraph{Note on agreement-based confidence estimation.}
The use of inter-classifier agreement as a confidence proxy has formal precedent:
\citet{madani2004covalidation} showed that disagreement rates between independent classifiers suffice to estimate accuracy;
\citet{platanios2017accuracy} extended this to multiple classifiers via probabilistic logic constraints.
SC-MoA's consensus $C = k_{\max}/N$ instantiates this principle---the consensus fidelity bound
(Proposition~\ref{prop:fidelity}) formalizes the connection.

  \section{Benchmark Selection Rationale}                
  \label{app:benchmark-rationale}
  
  The five benchmarks were chosen to span three axes that stress-test   
  trace-level synthesis: \emph{domain} (language, science, mathematics,
  code), \emph{difficulty} (graduate-level to competition-grade), and   
  \emph{verification modality} (string-match vs.\ executable tests),
  which directly probes the faithful/unfaithful clustering distinction  
  central to the aggregation safety analysis                            
  (\S\ref{sec:aggregation-safety}).  Each benchmark targets a distinct  
  failure mode of answer-level aggregation and a corresponding strength 
  of trace-level complementarity.                                       
                                                                        
  \paragraph{BBH (BIG-Bench Hard, 27 tasks, 2,400+ problems).}          
  BBH~\citep{suzgun2023bbh} comprises precisely those BIG-Bench tasks
  where prior language models failed to surpass the average human rater 
  under standard prompting---the ``Hard'' designation is an empirical   
  performance criterion, not a subjective label.  Tasks span            
  multi-step symbolic manipulation (Boolean expressions, Dyck languages,
  tracking shuffled objects), natural language inference (disambiguation
  QA, formal fallacies), and commonsense reasoning (date understanding, 
  penguins in a table).  Performance improves dramatically with         
  chain-of-thought~\citep{wei2022chain}, confirming that the reasoning  
  process---not just the final answer---is load-bearing.  For SC-MoA,   
  BBH tests \emph{process-level complementarity}: each task requires a  
  different reasoning modality, and perturbation-induced diversity can  
  surface distinct parsing or tracking strategies within a single task. 
  Because base-model accuracy is already high under self-consistency    
  (${\sim}80\%$), the voting ceiling is tight, making BBH a sensitive   
  detector of marginal gains from trace-level synthesis.                
                                                                        
  \paragraph{MMLU-ML (112 questions).}                                  
  The machine-learning subset of MMLU~\citep{hendrycks2021mmlu}
  contains graduate-level questions on optimization theory, statistical
  learning, information theory, and generalization bounds---topics where
  correctness hinges on domain-specific facts rather than multi-step
  procedural reasoning.  MMLU-ML tests \emph{knowledge-level
  complementarity}: when one agent's trace recalls a theorem that
  another agent confuses with a similar result, the aggregator can      
  identify the more precise derivation by reading the full reasoning    
  chain.  The small sample size ($n{=}112$) also creates a regime where 
  per-question trace information matters most, since answer-level       
  statistics have limited smoothing power.                              
                                                                        
  \paragraph{GPQA-Diamond (198 PhD-level science questions).}           
  GPQA-Diamond~\citep{rein2024gpqa} is the highest-purity tier of the   
  GPQA benchmark: 198 questions where both expert annotators answered   
  correctly but non-expert validators with unrestricted web access      
  achieved only ${\sim}34\%$---hence ``Google-Proof.''  Questions span  
  biology, physics, and chemistry at post-graduate difficulty; domain   
  experts themselves score ${\sim}65\%$.  GPQA-Diamond is the hardest   
  QA benchmark in our suite: base-model accuracy is near chance on the  
  most difficult items, maximizing the room for synthesis to operate.   
  It tests \emph{domain complementarity}: different agents may          
  correctly apply domain knowledge in different scientific subfields,   
  and an aggregator reading full traces can identify which agent's      
  reasoning is grounded in the relevant domain.  Critically, GPQA also  
  exposes a fundamental \emph{limit} of perturbation diversity: when    
  the base model systematically lacks knowledge for a subfield, all     
  perturbations share the same blind spot (10 such failures;            
  \S\ref{app:gpqa}).                                                    
                                                                        
  \paragraph{AIME 2022--2024 (90 competition-mathematics problems).}    
  AIME~\citep{aimo2024} is the qualifying exam for the USA Mathematical
  Olympiad, testing advanced algebra, combinatorics, geometry, and      
  number theory at difficulty levels that challenge even talented
  competition students (median score among qualifiers: ${\sim}5/15$).   
  Each answer is an integer in $[0, 999]$---no multiple choice, no      
  partial credit---so random performance is $0.1\%$ and majority voting 
  over wrong answers always fails.  AIME problems characteristically    
  admit multiple valid solution paths (constructive counting, generating
  functions, coordinate geometry, algebraic manipulation), making them  
  an ideal test of \emph{strategy complementarity}: one agent may       
  correctly set up a recurrence while another correctly computes a      
  binomial sum, and a trace-level aggregator can assemble the correct   
  derivation from complementary partial chains that no individual agent 
  completed.                                                            
                                                         
  \paragraph{LCB-Hard (171 competitive-programming problems,            
  post-training cutoff).}                                
  LiveCodeBench-Hard~\citep{jain2024livecodebench} contains competitive 
  programming problems from AtCoder and LeetCode that were released     
  \emph{after} the base model's training cutoff, eliminating
  contamination by construction.  The ``Hard'' tier is where meaningful 
  model differentiation occurs: frontier models achieve $35$--$58\%$
  pass@1 on Hard versus near-saturation on Easy.  Solutions are scored  
  by executing held-out private test cases with no partial credit---code
  either passes all tests or fails.  LCB-Hard is the only benchmark in  
  our suite with \emph{unfaithful} clustering: solutions are grouped by 
  test-pass signatures rather than string-match, so two programs that   
  pass identical public tests may implement fundamentally different     
  algorithms.  This directly probes the QA/code asymmetry               
  (Corollary~\ref{cor:qa-nondeg}) and the score-based override          
  mechanism.  It tests \emph{implementation complementarity}: one       
  agent may choose the correct algorithm but introduce an off-by-one    
  error, while another may handle edge cases correctly within a         
  suboptimal framework---a trace-level aggregator can combine the       
  correct algorithmic choice with the correct edge-case handling.       
                                                                        
  \paragraph{Design coverage.}                                          
  Table~\ref{tab:benchmark-axes} summarizes the evaluation design.      
  The five benchmarks create a stress test across the axes that matter  
  most for SC-MoA: difficulty (high base accuracy on BBH/MMLU-ML vs.\   
  near-chance on GPQA/AIME), verification modality (faithful            
  string-match on all QA benchmarks vs.\ unfaithful test-pass on        
  LCB-Hard), and complementarity type (process, knowledge, domain,      
  strategy, implementation).  Any method that improves across all five  
  benchmarks cannot be exploiting a single favorable regime.            
                                                                        
  \begin{table}[h]                                                      
  \centering                                                            
  \caption{Benchmark selection axes.  Each benchmark targets a distinct
  complementarity type and occupies a different region of the
  difficulty $\times$ verification-fidelity space.}                     
  \label{tab:benchmark-axes}                                            
  \small                                                                
  \begin{tabular}{llllcc}                                               
  \toprule                                               
  Benchmark & Domain & Format & Complementarity type & Difficulty &
  Faithful $\Phi$ \\                                                    
  \midrule
  BBH          & Structured reasoning & Free-form text & Process    &   
  Moderate & \checkmark \\                                              
  MMLU-ML      & ML knowledge         & 4-choice MC    & Knowledge  &   
  Graduate & \checkmark \\                                              
  GPQA-Diamond & Science (Bio/Chem/Phys) & 4-choice MC & Domain     &
  PhD-level & \checkmark \\                                             
  AIME         & Competition math     & Integer [0--999] & Strategy &
  Competition & \checkmark \\                                           
  LCB-Hard     & Competitive programming & Executable code &
  Implementation & Competition & $\times$ \\                            
  \bottomrule                                            
  \end{tabular}                                                         
  \end{table}

\section{BBH Per-Task Breakdown}
\label{app:bbh-tasks}

Table~\ref{tab:bbh3-tasks} reports per-task accuracy on the BBH-3
subset (296 problems) used in the main paper, including GoA and
TextGrad.  Table~\ref{tab:bbh-tasks} reports results on the full BBH
benchmark (2{,}561 problems) for the subset of methods that were
evaluated on all 27 tasks.

\paragraph{Answer-extraction artifact ($\dagger$).}
Zero-shot CoT scores 2\% on \emph{date\_understanding} due to an
answer-extraction regex failure: the model outputs dates in
free-form prose that the extraction pipeline cannot parse into the
expected format.  This affects only the ZS-CoT baseline;
all other methods use structured answer extraction that handles this
format correctly.  The $\dagger$ symbol in Tables~\ref{tab:main}
and~\ref{tab:bbh3-tasks} marks BBH-3 averages that include this
artifact.

\begin{table}[h]
\centering
\caption{BBH-3 per-task accuracy (\%).  All methods use
\texttt{gpt-oss-120b}; GoA uses a 3-model pool (120B, 70B, 8B).
Best in \textbf{bold}; second-best \underline{underlined}.}
\label{tab:bbh3-tasks}
\small
\begin{tabular}{lcccccccc}
\toprule
Task ($n$) & ZS-CoT & SC(10) & MoA & Self-MoA & TextGrad & GoA & \textbf{SC-MoA} & \textbf{gated} \\
\midrule
disambig.\_qa (100)          & 61 & 66 & \textbf{71} & \underline{69} & 57 & 64 & 67 & 65 \\
penguins (96)                & 35 & 89 & 79 & 74 & \textbf{100} & \underline{95} & \textbf{100} & 97 \\
date\_underst.\ (100)        &  2 & 87 & 55 & 60 & \textbf{95} & 92 & \underline{93} & 83 \\
\midrule
\textbf{BBH-3 avg.\ (296)}  & 32.8$^\dagger$ & 80.4 & 68.2 & 67.6 & \underline{83.8} & 82.8 & \textbf{86.5} & 81.4 \\
\bottomrule
\end{tabular}
\end{table}

\begin{table}[h]
\centering
\caption{Full BBH per-task accuracy (\%, 2{,}561 problems, persona
perturbations $N{=}4$).  \textbf{Note:} this table uses a different
configuration from Table~\ref{tab:main} (which reports SPUQ $N{=}5$,
$k{=}2$ on the BBH-3 subset); numbers are not directly comparable
across the two tables.  Tasks where SC-MoA differs from SC($k{=}10$)
by ${\geq}5$~pp are highlighted.  14 tasks are solved perfectly by
both SC and SC-MoA and are omitted for space.}
\label{tab:bbh-tasks}
\small
\begin{tabular}{lccccc}
\toprule
Task ($n$) & ZS-CoT & MoA & Self-MoA & SC(10) & \textbf{SC-MoA} \\
\midrule
\rowcolor{scmoarow}
penguins\_in\_a\_table (96) & 38.5 & 80.2 & 71.9 & 86.5 & \textbf{100.0} \\
\rowcolor{scmoarow}
date\_understanding (100) & 12.0 & 60.0 & 53.0 & 86.0 & \textbf{95.0} \\
\rowcolor{scmoarow}
disambiguation\_qa (100) & 60.0 & 70.0 & 67.0 & 60.0 & \textbf{75.0} \\
\rowcolor{scmoarow}
dyck\_languages (100) & 10.0 & 24.0 & 16.0 & 21.0 & \textbf{31.0} \\
\rowcolor{scmoarow}
reason.\ colored obj.\ (100) & 46.0 & 96.0 & 94.0 & 95.0 & \textbf{100.0} \\
geometric\_shapes (100) & 65.0 & 88.0 & 88.0 & 88.0 & 88.0 \\
movie\_recommend.\ (100) & 76.0 & 77.0 & 78.0 & 77.0 & 77.0 \\
ruin\_names (100) & 86.0 & 91.0 & 90.0 & 88.0 & 89.0 \\
salient\_transl.\ (100) & 66.0 & 71.0 & 68.0 & 73.0 & 71.0 \\
sports\_understand.\ (100) & 64.0 & 77.0 & 81.0 & 78.0 & 78.0 \\
causal\_judgement (37) & 67.6 & 70.3 & 64.9 & 70.3 & 67.6 \\
\rowcolor{scmoarow!30}
web\_of\_lies (100) & 9.0 & 93.0 & 93.0 & \textbf{95.0} & 86.0 \\
word\_sorting (100) & 9.0 & 36.0 & \textbf{66.0} & 60.0 & 60.0 \\
\bottomrule
\end{tabular}
\end{table}

\paragraph{Head-to-head analysis.}
The wins concentrate on five tasks: \emph{disambiguation\_qa} ($+$20 problems),
\emph{dyck\_languages} ($+$14), \emph{word\_sorting} ($+$14),
\emph{penguins\_in\_a\_table} ($+$13), and \emph{date\_understanding}
($+$9).  These tasks share a common structure: they require
coordinating multiple parsing or tracking steps where different
reasoning strategies decompose the problem differently.  Perturbation
diversity surfaces these alternative decompositions; i.i.d.\ sampling
with a single strategy explores locally around one approach.

\paragraph{Where SC-MoA loses.}
The losses concentrate on \emph{web\_of\_lies} ($-$9~pp) and
\emph{word\_sorting} ($-$6~pp).  On \emph{web\_of\_lies}, the
task reduces to a parity computation; a single correct chain-of-thought
suffices, and aggregation occasionally introduces ambiguity by
combining redundant reasoning chains.  \emph{Word\_sorting} is
dominated by Self-MoA (66\%), whose layered aggregation suits the
straightforward comparison-based structure.  These failure modes suggest
that SC-MoA's overhead is unjustified when the problem has a single
natural algorithmic approach.

\paragraph{Consensus diagnostics.}
75.2\% of BBH problems reach unanimous pre-refinement consensus
($C{=}1.0$), with 97.1\% accuracy among these.  Accuracy degrades
monotonically with consensus: $C{=}0.8 \to 80.5\%$,
$C{=}0.6 \to 57.3\%$, $C{=}0.4 \to 30.0\%$.  Refinement is rare: only 370 minority proposals are
generated across all 2{,}561 problems, of which 190 adopt the majority
answer, 21 improve their own approach, and 15 defend their minority
position.

\paragraph{Why MoA and Self-MoA fail.}
MoA (86.4\%) and Self-MoA (86.3\%) both underperform SC (88.1\%) on
BBH.  The per-task comparison reveals the mechanism: on
\emph{date\_understanding}, MoA scores 60.0\% vs.\ SC-MoA's 95.0\%
($-$35~pp); on \emph{penguins\_in\_a\_table}, Self-MoA scores 71.9\%
vs.\ SC-MoA's 100.0\% ($-$28.1~pp).  These tasks require precise
state tracking (calendar arithmetic, table lookups) where debate-style
aggregation causes proposals to converge on the majority's parsing
strategy---losing the structural diversity that makes alternative
approaches visible.  SC-MoA avoids this because only minorities are
refined, and the frozen majority anchor prevents convergence
collapse.

\section{GPQA-Diamond Analysis}
\label{app:gpqa}

GPQA-Diamond~\citep{rein2024gpqa} contains 198 PhD-level multiple-choice
science questions where domain experts achieve ${\sim}$65\% accuracy.
Table~\ref{tab:gpqa-full} presents full results.

\begin{table}[h]
\centering
\caption{GPQA-Diamond results (\texttt{gpt-oss-120b},
mechanistic config: $N{=}4$, $k{=}5$ unless noted).}
\label{tab:gpqa-full}
\begin{tabular}{lcc}
\toprule
Method & Accuracy (\%) & Avg.\ Calls \\
\midrule
Zero-shot CoT       & 66.7 & 1 \\
Self-MoA            & 64.1 & 7 \\
SC ($k{=}10$)       & 70.7 & 10 \\
TextGrad            & 69.7 & ${\sim}$10 \\
MoA (para, $N{=}4$) & 65.2 & 9 \\
\midrule
SC-MoA ($N{=}4$, $k{=}5$, ungated) & \textbf{72.7} & 11.5 \\
SC-MoA ($N{=}4$, $k{=}1$)    & 74.7 & 5 \\
\bottomrule
\end{tabular}
\end{table}

\paragraph{Key observations.}
(1)~SC-MoA ($k{=}5$, ungated) achieves 72.7\%, a $+$2.0~pp gain over SC and
$+$3.0~pp over TextGrad.
(2)~The $k{=}1$ variant (74.7\%) outperforms the headline $k{=}5$
configuration at lower compute (McNemar $p{=}0.63$, not significant
at $n{=}198$), suggesting that intra-perturbation SC
may dilute the perturbation signal on hard problems where each
rephrasing captures a unique angle, though the difference is within
sampling noise.
\paragraph{Consensus diagnostics.}
Among the 127/198 problems reaching unanimous consensus ($C{=}1.0$),
accuracy is 89.8\%.  The remaining ${\sim}$10\% of consensus-1.0
problems are \emph{confidently wrong} --- all five perturbations converge on
the same incorrect answer.  At lower consensus levels, accuracy
drops sharply: $C{=}0.8 \to 48.5\%$ ($n{=}33$),
$C{=}0.6 \to 42.3\%$ ($n{=}26$), $C{=}0.4 \to 46.2\%$ ($n{=}13$).
On hard science questions, partial consensus is barely better than
chance among four choices (25\%), confirming that GPQA stresses the
limits of ensemble methods.

\paragraph{Head-to-head.}
SC-MoA (ungated) vs.\ SC($k{=}10$): 16 wins, 5 losses
(76.2\% win rate, net $+$11 problems).  The ungated-vs.-gated
comparison yields 5 wins vs.\ 3 losses---a smaller margin,
reflecting the fewer problems where the aggregation step flips the
answer.

\paragraph{The difficulty effect.}
GPQA-Diamond represents a natural experiment separating the two axes
of the gate model.  Unlike LCB-Hard (low fidelity, high difficulty),
GPQA has high fidelity but high difficulty.  The $-$1.5~pp gate cost
on GPQA demonstrates that difficulty alone is sufficient to make gating
harmful, independent of clustering fidelity.

\paragraph{Failure modes.}
10 problems are solved by zero-shot CoT but not by SC-MoA (ungated).
These problems tend to have moderate-to-high consensus ($C \geq 0.6$),
indicating that all five perturbations converge on the same plausible
but incorrect answer.  When the underlying model systematically
lacks domain knowledge for a specific subfield, perturbation diversity
cannot compensate---all reasoning approaches share the same factual
blind spot.  This represents a fundamental limitation: SC-MoA amplifies
\emph{reasoning diversity} but cannot create \emph{knowledge} that
the base model does not possess.

\section{LCB-Hard Extended Analysis}
\label{app:lcb-extended}

LiveCodeBench-Hard contains 171 hard competitive-programming problems
(129 AtCoder, 42 LeetCode), all post-training-cutoff.  Solutions are
scored by executing held-out private test cases.

\subsection{Per-platform breakdown}

\begin{table}[h]
\centering
\caption{LiveCodeBench-Hard per-platform results ($n{=}171$: 129
AtCoder, 42 LeetCode; mechanistic config: $N{=}4$, $k{=}5$).
SC-MoA uses the ungated variant.}
\label{tab:lcb-platform}
\begin{tabular}{lcccc}
\toprule
Method & Overall & AtCoder (129) & LeetCode (42) & Calls \\
\midrule
Zero-shot CoT     & 42.1 & 39.5 & 50.0 & 1 \\
TextGrad          & 52.0 & 49.6 & 59.5 & ${\sim}$6.5 \\
MoA (para, $N{=}4$) & 50.9 & 45.7 & 66.7 & 9 \\
SC ($k{=}10$)     & 57.3 & 52.7 & 71.4 & 10 \\
Self-MoA          & 57.3 & 52.7 & 71.4 & 7 \\
\midrule
SC-MoA (gated)    & 55.6 & ---  & ---  & 10.1 \\
\rowcolor{scmoarow}
\textbf{SC-MoA (ungated)} & \textbf{59.6} & \textbf{56.6} & 69.0 & 11.5 \\
\bottomrule
\end{tabular}
\end{table}

SC-MoA leads overall (59.6\%) and on AtCoder (56.6\%).  On LeetCode,
SC (71.4\%) and Self-MoA (71.4\%) lead SC-MoA (69.0\%); however,
all methods achieve higher accuracy on LeetCode than AtCoder,
consistent with LeetCode's more structured specifications and
stronger public test suites.

\subsection{Head-to-head comparisons}

Per-problem win/loss counts for SC-MoA (ungated) vs.\ each baseline:

\begin{table}[h]
\centering
\caption{Per-problem head-to-head on LCB-Hard (171 problems).}
\small
\begin{tabular}{lccc}
\toprule
Baseline & SC-MoA wins & Baseline wins & Net \\
\midrule
SC ($k{=}10$)  & 34 & 9 & $+$25 \\
TextGrad       & 34 & 6 & $+$28 \\
Self-MoA       & 31 & 8 & $+$23 \\
MoA            & 36 & 8 & $+$28 \\
\bottomrule
\end{tabular}
\end{table}

SC-MoA wins 3--4$\times$ more problems than it loses against every
baseline.  Against SC($k{=}10$), 34 problems are rescued by SC-MoA
that SC cannot solve, while only 9 go the other direction.

\subsection{Unique solves}

9 problems are solved \emph{only} by SC-MoA (ungated) and no other
method.  For comparison: SC($k{=}10$) uniquely solves 3, SC-MoA
(gated) uniquely solves 2, MoA uniquely solves 1, and TextGrad
uniquely solves 1.  Zero-shot CoT and Self-MoA have no unique
solves.  This demonstrates that universal aggregation surfaces
solutions that neither voting, debate, nor iterative refinement
can reach independently.

\subsection{Consensus diagnostics and the fidelity problem}

The gated SC-MoA routes 98\% of LCB-Hard problems through the SC exit
(consensus $\geq 0.6$) because test-pass-signature clustering is
coarse.  Among the 118/171 problems reaching unanimous consensus
($C{=}1.0$), gated accuracy is 67.8\%---far below the 97.1\% observed
at $C{=}1.0$ on BBH.  This gap directly quantifies the clustering
fidelity difference between QA (string-match, faithful) and code
(test-pass-signature, unfaithful).

The ungated variant aggregates \emph{all} 171 problems regardless of
consensus, reaching 73.9\% among the $C{=}1.0$ subset
($+$6.1~pp over gated).  The aggregator recovers information that
noisy consensus discards: solutions that ``agree'' on public tests
may implement fundamentally different algorithms, and the aggregator
can synthesize the strongest elements from each.

At lower consensus levels, accuracy drops sharply for both variants:
$C{=}0.8 \to 36.7\%$ (gated) vs.\ 46.4\% (ungated);
$C{=}0.6 \to 38.9\%$ (gated) vs.\ 36.4\% (ungated).  The ungated
variant's advantage is largest at high consensus, precisely where the
fidelity problem is worst.

\subsection{Gated vs.\ ungated: problem-level analysis}

Comparing gated and ungated SC-MoA problem-by-problem:
the ungated variant wins on 16 problems and loses on 6 (net $+$10).
Among the 16 ungated wins, the average pre-refinement consensus is
0.89---these are problems where all five perturbations appeared to agree
(under the noisy clustering), yet the aggregator found a better
solution.  9 of the 16 wins occur at $C{=}1.0$, confirming that
the fidelity problem is the dominant mechanism.

\subsection{Override protection}

The override fires on 20/171 problems (11.7\%) in the ungated
variant.  See Appendix~\ref{app:override} for the full override
analysis across aggregator strengths.

\subsection{Refinement actions}

In the ungated variant, 78 minority proposals adopt the majority
approach.  The adopt action has a 37.2\% success rate (29 correct
out of 78), indicating that majority adoption is not a guaranteed
path to correctness on code---the majority itself may be wrong.
This further justifies the universal aggregation strategy: rather than
relying on minority $\to$ majority convergence, the aggregator
synthesizes across all proposals including those that defended their
minority position.

\subsection{Unsolvable problems}

45 of 171 problems (26.3\%) remain unsolved by every method tested,
indicating a capability ceiling for the underlying model on the hardest
competitive-programming problems.  The remaining 126 solvable problems
represent the space where method design matters: SC-MoA (ungated)
solves 109/126 (86.5\%) of solvable problems, vs.\ SC's 84/126
(66.7\%) and Self-MoA's 86/126 (68.3\%).

\section{Persona Diversity vs.\ Model Diversity: Extended Results}
\label{app:model-diversity}

This section reports full pipeline metrics for all five conditions
of the ablation described in \S\ref{sec:scaling}.
All runs use LCB-Hard ($n{=}171$), temperature~0.7, seed~42.
\textbf{4persona@120B}, \textbf{@70B}, \textbf{@8B}, and \textbf{@20B}
use four code personas (analytical, intuitive, adversarial, proof-sketch)
on the respective single model; \textbf{4model (mixed)} uses a generic
chain-of-thought prompt across four models with \texttt{gpt-oss-120b}
as refiner and aggregator.

\begin{figure}[h]
\centering
\includegraphics[width=0.88\textwidth]{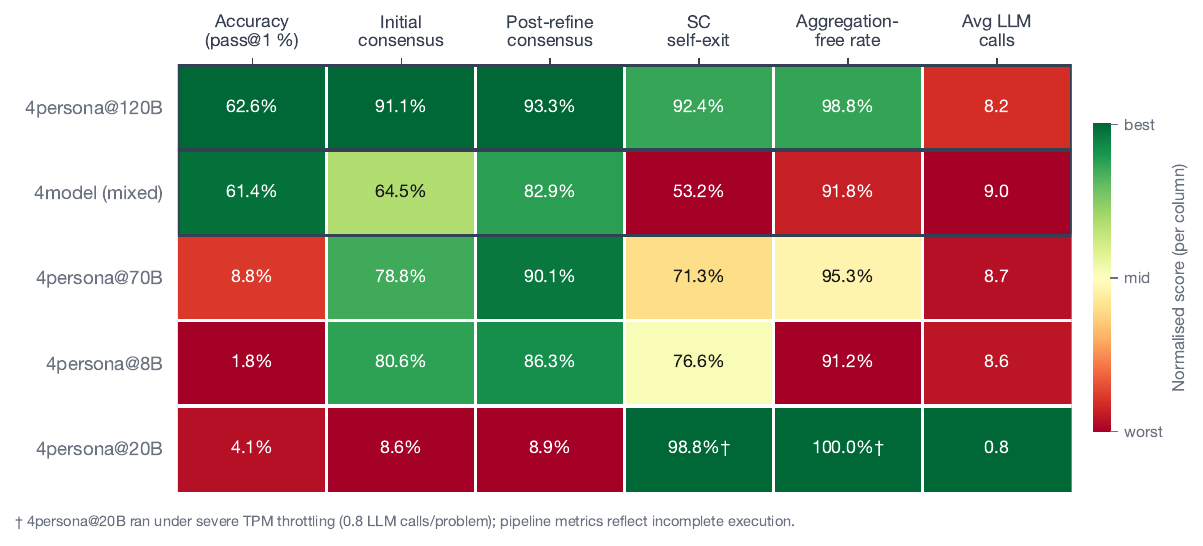}
\vspace{-2mm}
\caption{\textbf{Pipeline heatmap for all five conditions} ($n{=}171$ each).
Cells are colored per-column by normalised rank (green = best, red = worst).
Thick borders highlight the two competitive conditions.
4persona@70B and @8B collapse on accuracy despite reasonable consensus.
4persona@20B$^\dagger$ ran under severe TPM throttling (0.8 LLM
calls/problem); its SC-self-exit and aggregation-free metrics reflect
incomplete pipeline execution rather than algorithmic behaviour.}
\label{fig:diversity-heatmap}
\end{figure}

\begin{table}[h]
\centering
\caption{Full pipeline metrics for the persona-vs.-model diversity ablation
($n{=}171$ problems each).
\emph{Draft stability} = $100 - \%$ minority drafts adopted;
\emph{Refinement independence} = $100\times(1-\Delta C/\Delta C_{\max})$
where $\Delta C$ is post$-$pre consensus lift.
Higher is better on every metric except \emph{Consensus lift}.
$\dagger$~4persona@20B ran under TPM throttling; pipeline metrics are unreliable.}
\label{tab:model-diversity}
\small
\begin{tabular}{lccccc}
\toprule
Metric & 4persona@120B & 4persona@70B & 4persona@8B & 4persona@20B$^\dagger$ & 4model (mixed) \\
\midrule
pass@1 (\%)                & \textbf{62.6} & 8.8  & 1.8  & 4.1  & 61.4 \\
Mean LLM calls             & \textbf{8.2}  & 8.7  & 8.6  & 0.8  & 9.0 \\
$C_{\text{pre}}$ (mean)    & \textbf{0.911}& 0.788& 0.806& 0.086& 0.645 \\
$C_{\text{post}}$ (mean)   & \textbf{0.933}& 0.901& 0.863& 0.089& 0.829 \\
Consensus lift ($\Delta C$) & \textbf{0.022}& 0.113& 0.057& 0.003& 0.184 \\
\% SC exit pre-refine      & \textbf{92.4} & 71.3 & 76.6 & 98.8$^\dagger$ & 53.2 \\
\% entered refinement      & \textbf{7.6}  & 28.7 & 23.4 & 1.2  & 46.8 \\
\% aggregation used        & 1.2  & 4.7  & 8.8  & 0.0  & 8.2 \\
\% override triggered      & 0.6  & 1.2  & 5.8  & 0.0  & 3.5 \\
Draft stability (\%)       & \textbf{84.8} & 38.6 & 50.9 & 0.0  & 3.5 \\
\% compile OK              & 99.4 & \textbf{100.0} & 98.2 & 87.1 & 97.1 \\
\% pass public tests       & \textbf{82.5} & 49.1 &  5.3 &  4.1 & 81.3 \\
\bottomrule
\end{tabular}
\end{table}

\paragraph{Model capacity is the binding constraint.}
4persona@70B and @8B achieve near-zero accuracy (8.8\% and 1.8\%) despite
reasonable pipeline mechanics (compile rates 100\% and 98\%, pre-exit rates
71\% and 77\%).  The failure is not in consensus formation---both reach
$C_{\text{pre}} > 0.78$---but in the quality of the agreed-upon solution:
the models converge confidently on wrong answers.

\paragraph{Why 4model (mixed) matches 4persona@120B despite weak agents.}
\label{app:crossmodel-detail}
The 120B backbone anchors the 4model condition at two levels: (i)~it
contributes the majority proposal in $\approx$60\% of problems (the
strongest single agent), and (ii)~it rewrites every minority draft via
the adoption action (96.5\% adoption rate).  The 70B and 8B agents
increase compute and pipeline complexity without contributing net signal.
McNemar's test on 171 paired problems confirms accuracy parity with
4persona@120B ($p{=}0.855$).

The refinement dynamics reveal an asymmetric information flow that
explains this robustness.  Across 171 problems, weak agents (8B, 70B,
20B) adopt the strong agent's answer 147~times, while the 120B adopts
from a weaker agent only 18~times---an $8.2{\times}$ ratio.
When weak agents adopt, accuracy is 65--67\%; when the 120B
adopts, accuracy drops to 56\%, but these events are rare.
Anchored refinement thus acts as a directional filter: weak proposals
converge toward the strong majority, but the strong majority is
shielded from weak-agent noise.

\paragraph{Persona diversity is more compute-efficient.}
Across the four metrics that differ between 4persona@120B and
4model (mixed)---$C_{\text{pre}}$ (0.91 vs.\ 0.65), pre-exit rate
(92\% vs.\ 53\%), draft stability (85\% vs.\ 4\%), consensus lift
(0.022 vs.\ 0.184)---persona diversity dominates in every direction.
The 4model condition burns 0.9 more LLM calls per problem and 7$\times$
more aggregation budget (8.2\% vs.\ 1.2\%) to achieve the same result.
For practitioners choosing between persona diversity (fixed capable model,
different prompts) and model diversity (multiple model weights, generic
prompt), persona prompting is the strictly preferred design under a fixed
accuracy target.

\paragraph{Multi-SC-MoA: pipeline contribution on heterogeneous pools.}
\label{app:multi-scmoa}
The preceding ablation fixes the proposer count at four.  A
complementary question is whether the SC-MoA pipeline
(cluster$+$refine$+$aggregate$+$override) helps when proposal diversity
comes from \emph{model heterogeneity} rather than persona or SPUQ
perturbations.  To test this, we apply the full pipeline to the same
eight-model pool used in the MM-MoA baseline
(Table~\ref{tab:multi-scmoa-pool}), sweeping $N{=}3{,}\ldots{,}8$ and
drawing $k{=}3$ i.i.d.\ samples per proposer.  All other settings
match MM-MoA exactly: aggregator \texttt{gpt-oss-120b}, deep CoT
system prompt.

\begin{table}[h]
\centering
\caption{Proposer pool for the Multi-SC-MoA and MM-MoA N-sweeps,
listed in the order they enter as $N$ increases.}
\label{tab:multi-scmoa-pool}
\small
\begin{tabular}{clr}
\toprule
Slot & Model & Size \\
\midrule
1 & DeepSeek-V3.1       & 671B MoE \\
2 & Qwen3-Coder-480B    & 480B MoE \\
3 & gpt-oss-120b        & 120B MoE \\
4 & llama-3.3-70b       & 70B dense \\
5 & LFM2-24B            & 24B \\
6 & gpt-oss-20b         & 20B MoE \\
7 & llama-3.1-8b        & 8B dense \\
8 & gemma-3n-E4B        & 4B \\
\bottomrule
\end{tabular}
\end{table}

Table~\ref{tab:multi-scmoa-sweep} reports accuracy at each $N$.
On GPQA-Diamond, Multi-SC-MoA outperforms MM-MoA at every $N$,
averaging $+10.6$\,pp.
On LCB-Hard, the benefit is $N$-dependent:
Multi-SC-MoA matches MM-MoA at small $N$ ($3$--$5$, strong models only)
but gains $+8.5$\,pp on average at large $N$ ($6$--$8$, weak models
added).

\begin{table}[h]
\centering
\caption{Multi-SC-MoA vs.\ MM-MoA N-sweep on GPQA-Diamond ($n{=}198$)
and LCB-Hard ($n{=}171$).  Both use the same eight-model pool and
aggregator; Multi-SC-MoA adds clustering, refinement, aggregation, and
override (Phases~2--5).  Single-model SC-MoA reference: 72.7\% GPQA,
59.6\% LCB-Hard.}
\label{tab:multi-scmoa-sweep}
\small
\begin{tabular}{c cc c cc c}
\toprule
& \multicolumn{3}{c}{GPQA-Diamond (\%)} & \multicolumn{3}{c}{LCB-Hard (\%)} \\
\cmidrule(lr){2-4} \cmidrule(lr){5-7}
$N$ & Multi-SC-MoA & MM-MoA & $\Delta$ & Multi-SC-MoA & MM-MoA & $\Delta$ \\
\midrule
3 & 65.0 & 53.0 & $+$12.0 & 56.4 & 57.7 & $-$1.3 \\
4 & \textbf{68.4} & 56.6 & $+$11.8 & 59.3 & \textbf{58.5} & $+$0.8 \\
5 & 65.8 & 53.8 & $+$11.9 & 53.4 & 56.7 & $-$3.4 \\
6 & 66.7 & 52.5 & $+$14.1 & 60.1 & 54.1 & $+$6.0 \\
7 & 62.2 & 52.5 & $+$9.7 & \textbf{63.9} & 56.5 & $+$7.4 \\
8 & 60.6 & 56.6 & $+$4.1 & 63.4 & 51.5 & $+$11.9 \\
\midrule
mean & 64.8 & 54.2 & $+$10.6 & 59.4 & 55.8 & $+$3.6 \\
\bottomrule
\end{tabular}
\end{table}

The $N$-dependent pattern on code isolates Phase~5 override as the
mechanism.  At small $N$ (strong proposers only), proposals converge on
similar test-pass signatures and flat aggregation captures most of the
signal; the SC-MoA pipeline's extra phases add little.  At large $N$,
weak-model proposals introduce noise that the aggregator propagates into
its synthesis.  The score-based override reverts to the best-scoring
individual proposal, catching these aggregation errors.  MM-MoA, which
lacks override, sees accuracy drop from 58.5\% ($N{=}4$) to 51.5\%
($N{=}8$) as weak models are added; Multi-SC-MoA \emph{climbs} from
59.3\% to 63.9\%.

Despite these gains, single-model SC-MoA on \texttt{gpt-oss-120b}
(72.7\% GPQA, 59.6\% LCB-Hard) outperforms the best Multi-SC-MoA
configuration at every $N$ on GPQA and matches it on LCB-Hard.
Paraphrase diversity on a capable model remains the preferred design.

Per-cell error counts range from 2--21 on GPQA and 18--31 on LCB-Hard
(weak-model timeouts and transient inference errors).
With $n{=}171$ problems and ${\sim}60\%$ accuracy, individual-cell 95\%
CIs are ${\approx}\pm 7$\,pp; the mean-level comparisons ($+10.6$\,pp
GPQA, $+3.6$\,pp LCB-Hard) are robust because they average over six
$N$ values.

\section{Comparison with GoA (Graph-of-Agents)}
\label{app:goa}

GoA~\citep{goa2026} is a graph-based multi-agent collaboration framework
that frames LLM collaboration as a graph problem.  A meta-LLM reads
model cards to select domain-specialist agents (node sampling), agents
score each other's initial responses to build a weighted graph (edge
sampling), messages pass bidirectionally along edges, and a final answer
is produced via graph pooling (max: take the highest-scored agent's
answer; mean: meta-LLM synthesizes all responses weighted by score).

We run GoA on GPQA-Diamond with a 3-model pool spanning three capability
tiers: \texttt{gpt-oss-120b} (120B MoE), \texttt{llama-3.3-70b} (70B
dense), and \texttt{llama-3.1-8b} (8B dense).  The meta-LLM for node
sampling and mean pooling is \texttt{gpt-oss-120b}.  We set $k{=}3$
(select all three models), temperature~0.7, 1~round of message passing,
and edge-pruning threshold~0.05---matching GoA's recommended defaults.

To isolate the contribution of the collaboration protocol from the model
pool, we run two additional SC-MoA controls at matched compute
(${\sim}10$ LLM calls per problem):
\begin{itemize}[nosep,leftmargin=1.5em]
  \item \textbf{SC-MoA$_{N{=}3,k{=}3}$}: the full SC-MoA pipeline
    (SPUQ perturbation $\to$ propose $\to$ cluster $\to$ refine $\to$
    aggregate $\to$ override) on a single \texttt{gpt-oss-120b},
    with $N{=}3$ paraphrases and $k{=}3$ samples per paraphrase.
  \item \textbf{SC-MoA$_{N{=}4,k{=}5}$}: the headline configuration
    reported in Table~\ref{tab:main}, included for reference.
\end{itemize}

\begin{table}[h]
\centering
\caption{GoA vs.\ SC-MoA across all benchmarks.
GoA uses a heterogeneous 3-model pool (120B, 70B, 8B); SC-MoA uses a single
\texttt{gpt-oss-120b}.  SC-MoA ($N{=}3$, $k{=}3$) is a matched-compute
variant (${\sim}10$ calls).  Best in \textbf{bold}.}
\label{tab:goa-comparison}
\small
\begin{tabular}{lcccc}
\toprule
Method & MMLU-ML (112) & GPQA (198) & LCB-Hard (171) & Calls \\
\midrule
GoA$_{\text{Max}}$ ($k{=}3$) & 87.5 & 70.2 & 57.3 & 10.6 \\
GoA$_{\text{Mean}}$ ($k{=}3$) & \textbf{92.0} & 72.7 & 59.1 & 11.9 \\
\midrule
SC-MoA ($N{=}3$, $k{=}3$) & --- & \textbf{73.7} & --- & 10.4 \\
SC-MoA ($N{=}4$, $k{=}5$) & 91.1 & 72.7 & \textbf{59.6} & ${\sim}22$ \\
\bottomrule
\end{tabular}
\end{table}

\paragraph{Analysis.}
SC-MoA's headline configuration ($N{=}4$, $k{=}5$) outperforms
GoA$_{\text{Max}}$ on GPQA ($+$2.5\,pp) and LCB-Hard ($+$2.3\,pp),
though at higher compute (${\sim}22$ vs.\ ${\sim}11$ calls).
On MMLU-ML, GoA$_{\text{Mean}}$ reaches 92.0\%---slightly above
SC-MoA's 91.1\%---likely because the heterogeneous pool gives GoA
access to models with complementary ML domain knowledge.
GoA$_{\text{Mean}}$ consistently outperforms GoA$_{\text{Max}}$ by
2--5\,pp across all benchmarks.
The result is notable because GoA has access to three distinct model
weights while SC-MoA uses a single model throughout.

\paragraph{Matched-compute comparison.}
To isolate the collaboration protocol from any compute advantage,
we compare SC-MoA$_{N{=}3,k{=}3}$ (the full SC-MoA pipeline on a
single \texttt{gpt-oss-120b} with $N{=}3$ paraphrases and $k{=}3$
samples per paraphrase) against GoA at matched compute
(${\sim}10$ LLM calls per problem).  On GPQA-Diamond,
SC-MoA$_{N{=}3,k{=}3}$ reaches 73.7\%, outperforming
GoA$_{\text{Max}}$ by $+3.5$\,pp and GoA$_{\text{Mean}}$ by
$+1.0$\,pp---even at matched compute and with a single model.

The gap reflects the perturbation-vs.-heterogeneity distinction
(\S\ref{sec:finding3}): on hard science questions, input perturbation
on a strong model extracts more signal than routing across a
heterogeneous pool.

GoA's graph-based routing does provide a clear advantage in one setting:
when the model pool is heterogeneous and includes weak models,
GoA's confidence-weighted edge sampling and max pooling effectively
ignore low-quality agents (GoA$_{\text{Max}}$: 70.2\%), while
SC-MoA's majority-based aggregation cannot filter them
(Multi-SC-MoA with the same 3-model pool: 58.1\%;
Appendix~\ref{app:multi-scmoa}).
This suggests the two approaches are complementary: GoA excels at
\emph{selecting which models to use}; SC-MoA excels at
\emph{extracting more from each model once selected}.

\paragraph{SC-MoA as a protocol-agnostic module.}
SC-MoA's post-proposal pipeline (cluster$\to$refine$\to$aggregate$\to$override) operates on any set of candidate proposals without assuming how they were generated.  The lift over flat aggregation is independent of the diversity source: $+$7.5\,pp on single-model GPQA, $+$10.6\,pp on an eight-model pool (\S\ref{app:multi-scmoa}).  Since \citet{goa2026} show that MoA is a special case of GoA (GoA Proposition~1), SC-MoA's phases can replace graph pooling in any GoA-family pipeline.  Head-to-head on GPQA-Diamond, GoA$_{\text{Max}}$ and SC-MoA solve largely non-overlapping problem subsets (union ceiling: 79.8\%), confirming their complementarity.

\paragraph{SC-GoA: full composition experiment.}
To validate this composability claim empirically, we build SC-GoA:
GoA's node sampling selects three models from our model pool,
SPUQ generates $P$ paraphrases per model, each (model, paraphrase)
slot generates $k$ self-consistency samples (best-of-$k$), and
SC-MoA's Phases~2--6 replace graph pooling for final aggregation.
We optionally include GoA's full edge sampling and message passing
between the proposal stage and SC-MoA's clustering.
Table~\ref{tab:sc-goa} reports results on all 198 GPQA-Diamond problems.

\begin{table}[h]
\centering
\caption{SC-GoA on GPQA-Diamond ($n{=}198$).  SC-GoA composes
GoA's node sampling with SC-MoA's SPUQ, self-consistency, and
Phases~2--6.  Variants include ($+$MP) or exclude ($-$MP) GoA's
edge sampling and message passing.
$P$: paraphrases per model; $k$: SC samples per slot.}
\label{tab:sc-goa}
\small
\begin{tabular}{lcccc}
\toprule
Method & Proposals & Acc.\ (\%) & Calls & $\Delta$ vs GoA$_{\text{Mean}}$ \\
\midrule
GoA$_{\text{Max}}$  & 3 & 70.2 & 10.6 & $-$2.5 \\
GoA$_{\text{Mean}}$ & 3 & 72.7 & 12.0 & --- \\
\midrule
SC-GoA$_{-\text{MP}}$ ($P{=}2$, $k{=}3$)             & 6 & \textbf{74.7} & 22.7 & $+$2.0 \\
SC-GoA$_{+\text{MP}}$ ($P{=}2$, $k{=}3$)             & 6 & 73.7 & 37.4 & $+$1.0 \\
\quad w/o Phase~3 (no refinement)                      & 6 & 70.2 & 36.5 & $-$2.5 \\
\quad majority vote only                               & 6 & 72.2 & 35.5 & $-$0.5 \\
SC-GoA-3 ($P{=}1$, $k{=}3$)                          & 3 & 71.2 & 19.2 & $-$1.5 \\
\midrule
SC-MoA ($N{=}4$, $k{=}5$)                            & 4 & 72.7 & ${\sim}22$ & $+$0.0 \\
\bottomrule
\end{tabular}
\end{table}

Three findings emerge.
\emph{First}, SC-MoA's phases compose cleanly with GoA's routing:
all SC-GoA variants that include Phases~2--6 outperform
GoA$_{\text{Max}}$ (70.2\%), and SC-GoA$_{-\text{MP}}$ (74.7\%)
exceeds standalone SC-MoA (72.7\%) by $+$2.0\,pp.
\emph{Second}, GoA's message passing is slightly net-negative
($-$1.0\,pp: 73.7\% vs.\ 74.7\%): bidirectional peer refinement
without an anchoring invariant lets weaker models (8B, 70B)
dilute the strongest agent's proposals, whereas SC-MoA's Phase~3
refines \emph{only} minorities against a frozen majority.
\emph{Third}, ablating Phase~3 produces the largest accuracy
drop ($-$3.5\,pp: 73.7\%$\to$70.2\%), confirming that
majority-anchored refinement---not just clustering or
aggregation---is the primary contributor even atop GoA's
graph pipeline.
SC-GoA$_{-\text{MP}}$ (74.7\%) exceeds standalone SC-MoA (72.7\%),
suggesting that GoA's heterogeneous routing and SC-MoA's refinement
pipeline are complementary---composing them captures both model
diversity and perturbation diversity.

\section{Statistical Significance}
\label{app:significance}

All experiments use greedy decoding with content-hash caching, so
re-running produces identical results---there is no stochastic variance
to measure.  We instead quantify \emph{problem-level} significance:
for each SC-MoA vs.\ baseline pair, we compute McNemar's test on the
$2 \times 2$ table of per-problem correctness (wins vs.\ losses) and
bootstrap 95\% confidence intervals on the accuracy difference
(10{,}000 resamples).

\begin{table}[h]
\centering
\caption{Statistical significance of SC-MoA (ungated, SPUQ paraphrases,
$N{=}5$, $k{=}2$) vs.\ each baseline.
McNemar's test on per-problem correctness; bootstrap 95\% CIs on accuracy
difference (10{,}000 resamples).  BBH-3 uses the 296-problem subset
from Table~\ref{tab:main}.  Computed by
\texttt{compute\_significance.py}.}
\label{tab:significance}
\small
\begin{tabular}{llcccccc}
\toprule
Bench & Baseline & Wins & Losses & $\chi^2$ & $p$ & $\Delta$ (\%pt) & 95\% CI \\
\midrule
  BBH-3 & SC($k{=}10$) & 27 & 9 & 9.0 & 0.003 & $+6.1$ & $[+2.0,\,+10.1]$ \\
  BBH-3 & MoA & 59 & 9 & 36.8 & $<$0.001 & $+16.9$ & $[+11.8,\,+22.0]$ \\
  BBH-3 & Self-MoA & 65 & 9 & 42.4 & $<$0.001 & $+18.9$ & $[+13.9,\,+24.3]$ \\
  BBH-3 & TextGrad & 18 & 10 & 2.3 & 0.131 & $+2.7$ & $[-0.7,\,+6.4]$ \\
\midrule
  GPQA & SC($k{=}10$) & 12 & 7 & 1.3 & 0.251 & $+2.5$ & $[-2.0,\,+7.1]$ \\
  GPQA & MoA & 19 & 8 & 4.5 & 0.034 & $+5.6$ & $[+0.5,\,+10.6]$ \\
  GPQA & Self-MoA & 26 & 8 & 9.5 & 0.002 & $+9.1$ & $[+3.5,\,+14.6]$ \\
  GPQA & TextGrad & 16 & 9 & 2.0 & 0.162 & $+3.5$ & $[-1.0,\,+8.6]$ \\
\midrule
  LCB-Hard & SC($k{=}10$) & 21 & 12 & 2.5 & 0.117 & $+5.3$ & $[-1.2,\,+11.7]$ \\
  LCB-Hard & MoA & 22 & 6 & 9.1 & 0.003 & $+9.4$ & $[+3.5,\,+15.2]$ \\
  LCB-Hard & Self-MoA & 17 & 8 & 3.2 & 0.072 & $+5.3$ & $[+0.0,\,+11.1]$ \\
  LCB-Hard & TextGrad & 24 & 6 & 10.8 & 0.001 & $+10.5$ & $[+4.7,\,+17.0]$ \\
\bottomrule
\end{tabular}
\end{table}

7 of 12 comparisons are significant at $p < 0.05$.  All three BBH-3
comparisons against MoA, Self-MoA, and SC reach significance; the
BBH-3 vs.\ TextGrad comparison ($p = 0.131$) has a positive but
wide CI.  The five non-significant comparisons concentrate on
the two smaller benchmarks (GPQA, $n{=}198$; LCB-Hard, $n{=}171$),
where statistical power is inherently limited.  All 12 bootstrap
CIs are consistent with a positive true effect (upper bound $> 0$
in every case), suggesting the gains are real but some individual
comparisons are underpowered.

Clopper--Pearson 95\% CIs on SC-MoA accuracy:
BBH-3 $[82.1\%, 90.2\%]$,
GPQA $[66.5\%, 79.3\%]$,
LCB-Hard $[54.9\%, 69.8\%]$.

\section{Gating Threshold Sensitivity}
\label{app:theta-sweep}

We sweep the consensus-gating threshold
$\theta \in \{0.4, 0.5, 0.6, 0.7, 0.8, 0.9, 1.0, \infty\}$ by
simulating each threshold on existing results: for each problem, if
the pre-refinement consensus $C \geq \theta$, we use the gated
(SC-exit) result; otherwise we use the ungated (always-aggregate)
result.  Table~\ref{tab:theta-sweep} reports the results.

\begin{table}[h]
\centering
\caption{Accuracy (\%) under different consensus-gating thresholds
($N{=}4$, $k{=}5$, \texttt{gpt-oss-120b}).
$\theta{=}\infty$ is always-aggregate (no gating).  Always-aggregate
dominates or matches gating on all benchmarks except MMLU-ML, where
$\theta{=}0.6$ slightly exceeds always-aggregate.}
\label{tab:theta-sweep}
\small
\begin{tabular}{lcccccccc}
\toprule
Bench & $\theta{=}0.4$ & $\theta{=}0.5$ & $\theta{=}0.6$ & $\theta{=}0.7$ & $\theta{=}0.8$ & $\theta{=}0.9$ & $\theta{=}1.0$ & $\theta{=}\infty$ \\
\midrule
BBH-3 (296) & 79.1 & 79.1 & 81.4 & 81.4 & 85.5 & 85.5 & 85.5 & \textbf{85.8} \\
MMLU-ML (112) & 91.1 & 91.1 & \textbf{92.0} & \textbf{92.0} & 91.1 & 91.1 & 91.1 & 91.1 \\
GPQA (198) & 71.7 & 71.7 & \textbf{72.7} & \textbf{72.7} & \textbf{72.7} & \textbf{72.7} & \textbf{72.7} & \textbf{72.7} \\
AIME (90) & 61.1 & 61.1 & 63.3 & 63.3 & 65.6 & 65.6 & 65.6 & \textbf{85.6} \\
LCB-Hard (171) & 55.6 & 55.6 & 55.6 & 55.6 & 56.7 & 56.7 & 56.7 & \textbf{59.6} \\
\bottomrule
\end{tabular}
\end{table}

On all benchmarks except MMLU-ML, always-aggregate ($\theta{=}\infty$)
dominates or matches the gated variant.  On MMLU-ML, $\theta{=}0.6$
achieves 92.0\% vs.\ 91.1\% at $\theta{=}\infty$, suggesting the
aggregator occasionally introduces errors on easy problems where
the SC majority is already correct.
The cost of gating is largest on AIME ($-24.5$~pp at $\theta{=}0.4$),
where frequent near-tie consensus scores cause premature SC-exit,
and on BBH-3 ($-6.7$~pp).  On LCB-Hard the cost is $-4.0$~pp,
driven by unfaithful test-pass clustering.
The monotonic improvement from $\theta{=}0.4$ to $\theta{=}\infty$
on most benchmarks supports the universal aggregation design.

\subsection{Inner Sampling Budget ($k$-Sweep)}
\label{app:k-sweep}

We sweep the per-paraphrase sample budget
$k \in \{3, 4, 5, 6, 7\}$ with a fixed pool of $N{=}4$ SPUQ
paraphrases, drawn from a frozen cache to ensure strict
apples-to-apples comparison across cells.  The proposer is
\texttt{gpt-oss-120b}, the aggregator is
\texttt{llama-3.1-8b-instant}, sampling temperature is 0.7, and all
other SC-MoA stages (cluster $\to$ refine $\to$ aggregate $\to$
override) are held constant.
\textbf{Caveat:} this sweep uses a weaker aggregator
(\texttt{llama-3.1-8b-instant}) than the main results
(\texttt{gpt-oss-120b}).  The saturation-at-$k{=}5$ finding
should therefore be interpreted as a lower bound; the main
aggregator may benefit from additional samples.

\begin{figure}[h]
\centering
\includegraphics[width=0.85\linewidth]{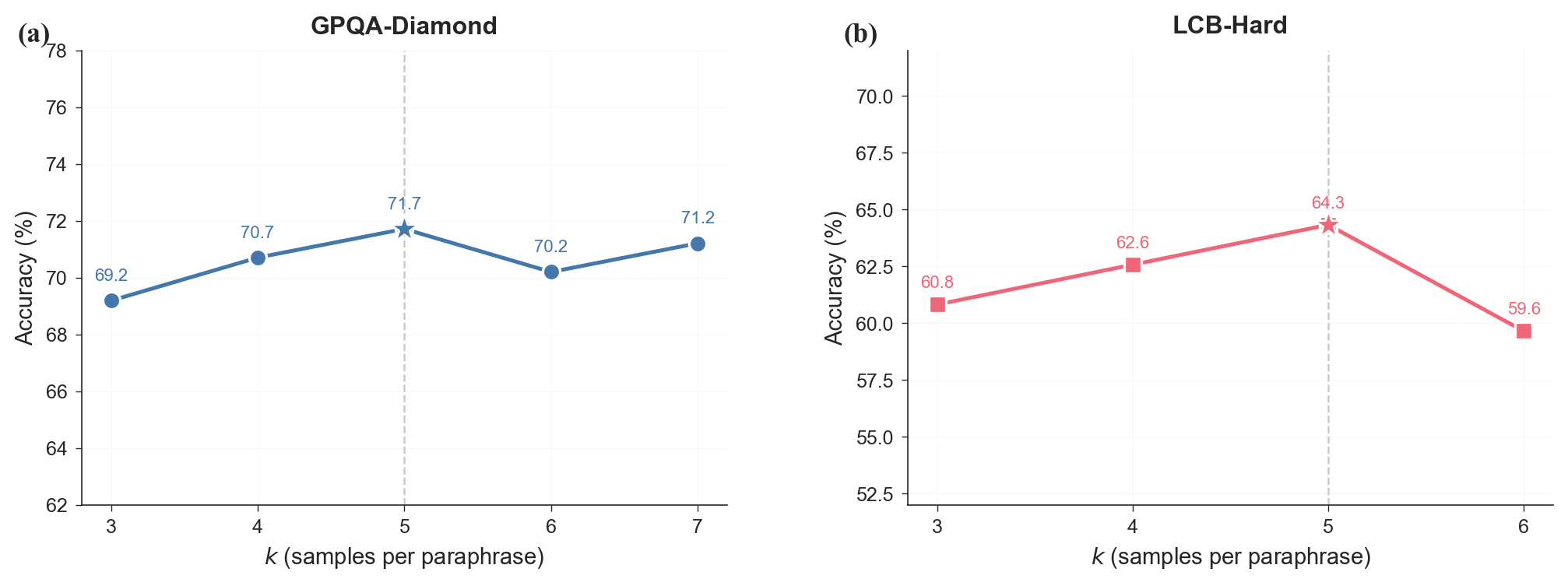}
\vspace{-2mm}
\caption{\textbf{Inner sampling budget sweep.}
Accuracy vs.\ $k$ (samples per paraphrase) on GPQA-Diamond
($n{=}198$) and LCB-Hard ($n{=}171$).  Both benchmarks peak at
$k{=}5$; beyond this point, post-peak variation is within noise on
GPQA and significantly regresses on LCB-Hard ($\Delta{=}-4.7$\,pp,
$p{=}0.008$).  Error bars: bootstrap 95\% CIs.}
\label{fig:ksweep}
\end{figure}

Across 198 GPQA-Diamond questions (Figure~\ref{fig:ksweep}),
accuracy climbs monotonically from 69.2\% ($k{=}3$) to a peak of
71.7\% at $k{=}5$, then oscillates within noise (70.2\% at $k{=}6$,
71.2\% at $k{=}7$); a paired bootstrap on $k{=}5$ vs.\ $k{=}6$
yields $\Delta = -1.5$\,pp ($p = 0.48$), so the post-peak variation
is not statistically distinguishable.  On 171 LCB-Hard problems,
accuracy also peaks at $k{=}5$ (64.3\%) after a $+3.5$\,pp climb
from $k{=}3$, but regresses significantly at $k{=}6$ to 59.6\%
(paired $\Delta = -4.68$\,pp, 95\% CI $[-8.19, -1.75]$,
$p = 0.008$).  Both benchmarks therefore exhibit the same empirical
optimum, while compute cost grows linearly in $k$ (each step adds
${\approx}25\%$ more LLM calls and, on LCB-Hard,
${\approx}25\%$ more sandbox executions).

We adopt $k{=}5$ as the operating point for the $k$-sweep analysis
on the basis that it maximizes accuracy on both benchmarks at the
lowest budget where these gains are realized.  The main-table results
(\S\ref{sec:results}) use $k{=}2$ with the stronger
\texttt{gpt-oss-120b} aggregator, where saturation occurs earlier
(\S\ref{sec:mechanism}).

\section{Public vs.\ Private Test Fidelity}
\label{app:fidelity-gap}

On LCB-Hard, clustering uses test-pass signatures from public tests,
but scoring uses private (hidden) tests.  This creates a fidelity gap:
proposals that pass all public tests may fail private tests.  We
quantify this gap for all methods.

\begin{table}[h]
\centering
\caption{Public-to-private test fidelity on LCB-Hard ($n{=}171$).
$P(\text{priv} \mid \text{pub})$ is the probability that a solution
passing all public tests also passes all private tests.}
\label{tab:fidelity-gap}
\small
\begin{tabular}{lccc}
\toprule
Method & pass\_public=1.0 & $P(\text{priv} \mid \text{pub})$ & False pos.\ rate \\
\midrule
SC ($k{=}10$)     & 154/171 (90.1\%) & 53.9\% & 46.1\% \\
MoA               & 127/171 (74.3\%) & 62.2\% & 37.8\% \\
Self-MoA          & 133/171 (77.8\%) & 63.9\% & 36.1\% \\
TextGrad          & 128/171 (74.9\%) & 62.5\% & 37.5\% \\
\midrule
SC-MoA (ungated)  & 146/171 (85.4\%) & \textbf{74.0\%} & \textbf{26.0\%} \\
\bottomrule
\end{tabular}
\end{table}

SC-MoA achieves the highest public-to-private fidelity (74.0\%)
among all methods---solutions that pass public tests are
substantially more likely to also pass private tests.  SC has the
worst fidelity (53.9\%) despite the highest public pass rate
(90.1\%), indicating that i.i.d.\ sampling with a single prompt
overfits to public test structure.  SC-MoA's perturbation diversity
produces solutions that are more robust to distribution shift between
public and private tests.

At $C{=}1.0$ (unanimous consensus), 119 problems reach this level
and 88/119 (73.9\%) are correct.  The 26.1\% false-positive rate at
unanimous consensus directly quantifies why gating is harmful on
code: even perfect observed agreement does not guarantee correctness
when the clustering metric (public tests) is only a proxy for the
true metric (private tests).

\section{Implementation Details}
\label{app:implementation}

\subsection{QA personas (5 strategies)}
\label{app:qa-personas}

\paragraph{Analytical.}
``You are a careful analytical thinker. Break the problem into
components and reason through each step explicitly. Show your work.
Check each intermediate conclusion before proceeding.''

\paragraph{Intuitive.}
``You are an intuitive problem solver. Start with your best guess,
then verify it against the constraints. If verification fails, try the
next most likely answer.''

\paragraph{Adversarial.}
``You are a critical thinker. First identify what answer most people
would give, then check if that obvious answer is correct or a common
misconception. Look for traps and edge cases.''

\paragraph{Chain-tracer.}
``You are a meticulous state-tracker. Maintain an explicit ledger of
all variables and their values. Update the ledger at each reasoning
step. Your answer must be consistent with the final state of the
ledger.''

\paragraph{Counterfactual.}
``You are a hypothesis tester. For each candidate answer, ask: if this
were correct, what else would have to be true? Check each implication.
Eliminate answers whose implications contradict the problem setup.''

\subsection{Code personas (5 strategies)}
\label{app:code-personas}

\paragraph{Analytical.}
``Analyze the problem's constraints and complexity requirements first.
Select the tightest algorithm that meets the constraints. Implement
cleanly with explicit edge-case handling.''

\paragraph{Intuitive.}
``Write the shortest pragmatic solution using standard library helpers.
Favor readability and correctness over cleverness.''

\paragraph{Adversarial.}
``Before coding, enumerate edge cases: empty input, single element,
maximum constraints, duplicate values, negative numbers. Design the
solution to handle all of them from the start.''

\paragraph{Complexity-first.}
``State the required time and space complexity explicitly. Identify the
algorithmic paradigm (DP, binary search, graph, etc.) before writing
any code. Optimize for competitive-programming efficiency.''

\paragraph{Proof-sketch.}
``Before implementation, write a loop invariant or recurrence relation
that proves the solution is correct. Only then translate the proof
into code.''

\subsection{Refinement prompt (verbatim)}
\label{app:refine-prompt}

Only minority proposals are refined.  The refinement prompt has a
\emph{system prompt} (persona-specific) and a \emph{user prompt}
(structured XML blocks).

\paragraph{Code refinement system prompt.}
\begin{quote}\small\ttfamily
You are a \{persona\_name\} reviewing your solution against an
ensemble consensus. \{k\_max\}/\{n\} specialists produced a DIFFERENT
approach that passes more tests than yours.

You may:
\quad (a) ADOPT the majority approach if their reasoning is stronger.
\quad (b) IMPROVE your approach by fixing the specific issue the
majority handles better, while preserving your unique strengths.
\quad (c) DEFEND your approach if you believe the majority has a subtle
flaw (e.g.\ worse complexity, wrong invariant) --- return your original
with a brief note on why.

Return a single \textasciigrave\textasciigrave\textasciigrave python\textasciigrave\textasciigrave\textasciigrave\ fenced block.
\end{quote}

\paragraph{QA refinement system prompt.}
\begin{quote}\small\ttfamily
You are a \{persona\_name\} reviewing your answer against an
ensemble consensus. \{k\_max\}/\{n\} specialists answered differently
from you.

You may:
\quad (a) ADOPT the majority answer if their reasoning is stronger.
\quad (b) IMPROVE your reasoning to arrive at a better answer.
\quad (c) DEFEND your answer if you believe the majority has a flaw ---
explain the specific error in their reasoning.

The last line of your response MUST be: `Answer: \$VALUE' where
\$VALUE follows the answer format the question requests.
\end{quote}

\paragraph{User prompt template (code variant).}
The user prompt contains four XML blocks:
\texttt{<PROBLEM>} (truncated to 2500 chars),
\texttt{<YOUR\_PROPOSAL quality=\ldots, intra\_agreement=\ldots>},
\texttt{<FAILURE\_ANALYSIS>} (which public tests differ and stderr snippets), and
\texttt{<MAJORITY\_APPROACH quality=\ldots, support=k/n>}.
The final instruction is: ``Produce your refined solution.''

\subsection{Aggregation prompt (verbatim)}
\label{app:agg-prompt}

\paragraph{Code aggregation system prompt.}
\begin{quote}\small\ttfamily
You are a senior problem-solver synthesising a final Python solution
from \{n\} specialist proposals. Each proposal was the best of \{k\}
self-consistency samples from its persona.

You are given the full consensus evolution across a refinement step. A
persona that DEFENDED its minority position despite seeing the majority
may have found a genuine edge case --- examine its reasoning carefully.

To produce a solution DIFFERENT from the post-refinement majority, you
MUST cite specific evidence from a defended minority proposal (e.g.\ a
concrete test case the majority fails). If you cannot identify such
evidence, return the majority solution verbatim.

Return a single \textasciigrave\textasciigrave\textasciigrave python\textasciigrave\textasciigrave\textasciigrave\ fenced block.
\end{quote}

\paragraph{QA aggregation system prompt.}
\begin{quote}\small\ttfamily
You are a senior problem-solver synthesising a final answer from \{n\}
specialist proposals. Each proposal was the best of \{k\}
self-consistency samples from its persona.

You are given the full consensus evolution across a refinement step. A
persona that DEFENDED its minority position despite seeing the majority
may have found a genuine flaw --- examine its reasoning carefully.

To produce an answer DIFFERENT from the post-refinement majority, you
MUST cite specific evidence from a defended minority. If you cannot,
return the majority answer.

Commit: do NOT hedge. The last line of your response MUST be:
`Answer: \$VALUE' where \$VALUE follows the answer format the question
requests.
\end{quote}

\paragraph{User prompt template.}
The user prompt contains three XML blocks:
\texttt{<PROBLEM>} (the original problem),
\texttt{<PROPOSALS>} (each persona's proposal with cluster label,
quality score, intra-agreement, and refinement action annotation), and
\texttt{<CONSENSUS\_EVOLUTION>} (pre/post-refinement agreement counts,
which personas switched or defended).
Each proposal is annotated with one of:
\texttt{[UNCHANGED --- majority]},
\texttt{[REFINED --- ADOPTED]},
\texttt{[REFINED --- IMPROVED]}, or
\texttt{[REFINED --- DEFENDED]}.

\subsection{Variant-tag protocol}
\label{app:variant-tag}

For $k > 1$ samples per perturbation (or per prompt in baselines), we append
a neutral HTML comment tag to the user prompt:
\begin{quote}\small\ttfamily
<!-- sample variant: scmoa-analytical-0 -->
\end{quote}
The tag changes the content hash
$\texttt{sha256}(\texttt{model} \| \texttt{system\_prompt} \|
\texttt{user\_prompt})$, producing a distinct LLM call.  At greedy
decoding (temperature 0), the tag acts as a mild prompt perturbation:
it shifts the model's token-level predictions without changing problem
semantics.  HTML comment syntax was chosen because it has minimal
semantic impact on the model's interpretation of the problem.

All methods use the same protocol with method-specific prefixes:
SC uses \texttt{sc-\{k\}}, Self-MoA uses \texttt{selfmoa-\{k\}},
MoA uses \texttt{moa-\{k\}}, and SC-MoA uses
\texttt{scmoa-\{persona\}-\{k\}}.  This ensures a fair comparison:
all methods obtain sample diversity through the same mechanism.

\subsection{Scoring and clustering}
\label{app:scoring}

\paragraph{QA scoring.}
For QA benchmarks, the quality score is the \emph{agreement fraction}:
the proportion of $N$ perturbations whose extracted answer matches this
proposal's extracted answer.  Formally, for perturbation~$i$ with extracted
answer $a_i$: $\text{score}(i) = |\{j : a_j = a_i\}| / N$.
Clustering uses normalized string matching on the extracted answer
(lowercased, stripped).

\paragraph{Code scoring.}
For code benchmarks, the quality score is the \emph{public test pass
rate}: the fraction of public test cases passed.  Clustering groups
proposals by their \emph{test-pass signature}---the set of public test
names that the solution passes.  Two proposals that pass different
subsets of tests are placed in different clusters even if they share
the same pass rate.  This is more informative than pass-rate-only
clustering but creates the fidelity gap analyzed in
\S\ref{app:fidelity-gap}: the public test signature is an imperfect
proxy for private-test correctness.

\paragraph{Majority selection.}
When multiple clusters tie in size, ties are broken by mean quality
score (higher wins), then by earliest first occurrence.

\subsection{Prompt allocation table}
\label{app:prompt-allocation}

Table~\ref{tab:prompt-allocation} summarizes the system prompt
allocation for each method, clarifying the diversity mechanism.

\begin{table}[h]
\centering
\caption{System prompt allocation per method.}
\label{tab:prompt-allocation}
\small
\begin{tabular}{lcc}
\toprule
Method & System prompts & Rationale \\
\midrule
Zero-shot CoT & 1 generic CoT & Standard baseline \\
SC ($k{=}10$) & 1 generic CoT & Faithful to \citet{wang2023selfconsistency} \\
Self-MoA & 1 generic CoT & Faithful to \citet{wang2025moa} \\
MoA & 5 SPUQ perturbations & Same as SC-MoA (isolates architecture) \\
TextGrad & 1 generic CoT & Official v0.1.8 prompt \\
\midrule
SC-MoA & 5 SPUQ perturbations & Input-side diversity via paraphrasing \\
\bottomrule
\end{tabular}
\end{table}

\section{Persona Diversity Analysis}
\label{app:persona-diversity}

This appendix collects the persona-related experiments referenced in
the main text.

\subsection{Error correlation ($\rhobar$) measurement}

We measure mean pairwise error correlation between the $N{=}5$ agents
under two conditions: (a)~persona-diverse (5 different system prompts)
and (b)~i.i.d.\ (1 generic prompt sampled 5 times).

\begin{table}[h]
\centering
\caption{Mean pairwise error correlation $\rhobar$ with 95\% bootstrap CIs.}
\small
\begin{tabular}{lccc}
\toprule
Benchmark & Persona $\rhobar$ [95\% CI] & i.i.d.\ $\rhobar$ [95\% CI] & $\Delta$ \\
\midrule
GPQA-Diamond & 0.633 [.557, .705] & 0.603 [.522, .679] & $+$0.030 \\
LCB-Hard     & 0.571 [.491, .648] & 0.607 [.536, .676] & $-$0.036 \\
\bottomrule
\end{tabular}
\end{table}

\subsection{Perturbation content ablation (extended)}

\begin{table}[h]
\centering
\caption{Full perturbation content ablation.  Three conditions
$\times$ two benchmarks.  McNemar $p > 0.25$ for all pairwise
comparisons.}
\small
\begin{tabular}{lcc}
\toprule
Condition & GPQA (\%) & LCB-Hard (\%) \\
\midrule
Hand-crafted personas & 72.7 & 63.7 \\
Paraphrased CoT (SPUQ) & 72.7 & 65.9 \\
GPT-gen alt strategies & 70.2 & 60.8 \\
\bottomrule
\end{tabular}
\end{table}

\noindent On LCB-Hard, SPUQ paraphrases
outperform hand-crafted personas by $+2.2$\,pp.

\subsection{Persona-based vs.\ paraphrase-based MoA}
\label{app:persona-vs-paraphrase}

\begin{table}[h]
\centering
\caption{MoA baseline variants.  Persona-MoA uses $N{=}5$ hand-crafted
expert personas (11 calls); Paraphrase-MoA uses $N{=}4$ SPUQ
paraphrases (9 calls).  Main text Table~\ref{tab:main} reports the
paraphrase variant for apples-to-apples comparison with SC-MoA.}
\small
\begin{tabular}{lccccc}
\toprule
Variant & Diversity source & $N$ & Calls & GPQA (\%) & LCB (\%) \\
\midrule
MoA (persona) & 5 expert personas & 5 & 11 & 72.7 & 47.4 \\
MoA (paraphrase) & SPUQ paraphrases & 4 & 9 & 65.2 & 50.9 \\
\bottomrule
\end{tabular}
\end{table}

\noindent Persona-MoA scores higher on GPQA ($+7.5$\,pp) but lower on
LCB-Hard ($-3.5$\,pp).  The persona advantage on GPQA likely reflects
hand-crafted domain prompts (physicist, biologist, etc.) that steer
proposals toward discipline-specific reasoning chains.  On code,
paraphrase diversity is more effective.  We report the paraphrase
variant in Table~\ref{tab:main} because it uses the same SPUQ
diversity mechanism as SC-MoA, isolating the aggregation architecture
as the sole variable.

\section{Semantic-Preserving Paraphrase Pipeline (SPUQ)}
\label{app:spuq}

SPUQ (Semantic-Preserving Uncertainty Quantification) generates $N$
rephrasings of the input problem.  The paraphrase generator receives a
6-rule system prompt:

\begin{enumerate}[leftmargin=*,itemsep=1pt]
\item \textbf{Semantic equivalence}: each rephrasing must ask the same
    question and require the same answer.
\item \textbf{Surface variation}: vary word choice, sentence structure,
    and notation style.
\item \textbf{Preserve numbers verbatim}: all numerical values must
    appear exactly as in the original.
\item \textbf{Preserve code verbatim}: all code blocks, variable names,
    and function signatures must be identical.
\item \textbf{Preserve units}: physical units (eV, m/s, kg, etc.)
    must appear exactly.
\item \textbf{Produce exactly $N$ rephrasings}: output as a JSON list.
\end{enumerate}

\paragraph{Validation.}
A regex-based check verifies that all \emph{protected tokens}---numbers,
code blocks, and physical units---appear verbatim in each rephrasing.
Any rephrasing that fails is regenerated.  On GPQA-Diamond ($N{=}5$,
300 total paraphrases), the rejection rate is \textbf{0\%}: the
generator preserves all protected tokens on every science question.
On LCB-Hard (115 paraphrases), the rejection rate is 16.5\%, concentrated
on problems with complex spatial constraints (e.g., chess board
problems); rejected paraphrases are regenerated until validation passes.

\paragraph{Paraphrase-quality bug.}
Early all-8b runs used the proposer model (\texttt{llama-3.1-8b}) as the
paraphrase generator.  This produced malformed JSON that the parser
dropped, effectively reducing $N$.  Switching to a capable generator
(\texttt{gpt-oss-120b}) improved accuracy by $+6$--$8$\,pp on GPQA at
the all-8b configuration (22.7\% $\to$ 30.8\% at $k{=}1$).  A
\texttt{-{}-paraphrase-model} flag was added to decouple the generator
role.

\paragraph{Cost.}
A single LLM call generates all $N$ rephrasings (batch generation).
For $N{=}5$, this adds 1 call per problem.  Paraphrases are cached and
reused across $k$ values and ablation conditions.

\section{Proposer $\times$ Aggregator Capability Matrix}
\label{app:capability-matrix}

\begin{table}[h]
\centering
\caption{Full capability matrix: SPUQ paraphrases at $N{=}5$.}
\small
\begin{tabular}{lcccc}
\toprule
Config & GPQA $k{=}1$ & GPQA $k{=}3$ & LCB $k{=}1$ & LCB $k{=}3$ \\
\midrule
prop=120b agg=120b & 73.7 & 73.2 & 60.8 & 60.2 \\
prop=120b agg=8b   & 71.2 & 70.2 & 46.8 & 57.3 \\
prop=8b   agg=8b   & 30.8 & 31.3 & ${\sim}$0 & ${\sim}$0 \\
\bottomrule
\end{tabular}
\end{table}

\noindent Three observations: (1)~Dropping the aggregator from 120b to 8b
loses ${\sim}2.5$\,pp on QA (small) but 14\,pp on code at $k{=}1$
(large).  (2)~Dropping the proposer is catastrophic: GPQA crashes
$73\% \to 31\%$.  (3)~On code, $k{=}3$ at agg=8b recovers to 57.3\%
($-3.5$\,pp vs.\ agg=120b $k{=}1$), but 80\% of this recovery is the
override mechanism (\S\ref{sec:aggregation-safety}).

\section{Override and Best-of-N Analysis}
\label{app:override}
\label{app:best-of-n}

\begin{table}[h]
\centering
\caption{Override fire rates and accuracy decomposition by aggregator strength.}
\small
\begin{tabular}{lcccc}
\toprule
Config & Fire rate & Acc (overridden) & Acc (not overridden) & Overall \\
\midrule
LCB agg=120b $k{=}1$ & 9.9\% & 58.8\% & 61.0\% & 60.8\% \\
LCB agg=120b $k{=}3$ & 11.1\% & 57.9\% & 60.5\% & 60.2\% \\
LCB agg=8b $k{=}1$   & 58.5\% & 56.0\% & 33.8\% & 46.8\% \\
LCB agg=8b $k{=}3$   & 67.3\% & 65.2\% & 41.1\% & 57.3\% \\
GPQA (all configs)    & 0.0\%  & ---    & ---    & --- \\
\bottomrule
\end{tabular}
\end{table}

\noindent At strong aggregators, override rarely fires (${\sim}10\%$) and
accuracy is similar whether or not override is triggered.  At weak
aggregators on code, override fires on the majority of problems
($58$--$67\%$) and is responsible for $+33$--$42$\,pp of accuracy.
On QA, override never fires because the quality score is always 1.0.

\paragraph{Decomposition of the $+10.5$\,pp $k{=}1 \to k{=}3$ gain at
agg=8b:}
\begin{itemize}[itemsep=1pt]
\item Genuine $k \times \text{aggregator}$ interaction: $+2.3$\,pp
\item Override amplification (cleaner $k{=}3$ clusters): $+8.2$\,pp
\item Total: $+10.5$\,pp
\end{itemize}

\paragraph{Best-of-N ablation (strong aggregator).}
On LCB-Hard, SC-MoA's accuracy includes contributions from both
aggregation synthesis (Phase~4) and score-based override (Phase~5).
Table~\ref{tab:best-of-n} isolates each component.

\begin{table}[h]
\centering
\caption{Decomposing SC-MoA accuracy on LCB-Hard ($n{=}171$,
\texttt{gpt-oss-120b}).}
\label{tab:best-of-n}
\small
\begin{tabular}{lcc}
\toprule
Configuration & Accuracy & $\Delta$ from full \\
\midrule
Oracle (any proposal correct) & 78.4\% & --- \\
SC-MoA no-override (agg only) & 65.5\% & $+1.8$\,pp \\
SC-MoA full (with override) & 63.7\% & --- \\
Best-of-N (majority of per-para bests) & 60.8\% & $-2.9$\,pp \\
\bottomrule
\end{tabular}
\end{table}

\noindent At strong aggregators, the override fires on 11.7\% of
problems (20/171) with 60\% accuracy when it fires.  Aggregation alone
(65.5\%) slightly \emph{exceeds} the full pipeline (63.7\%).  The
override provides insurance against catastrophic aggregator failure
(the weak-aggregator regime above, where it contributes $+33$--$42$\,pp)
rather than a systematic accuracy boost at capable aggregators.

\section{Oracle Ceiling Analysis}
\label{app:oracle}

\begin{table}[h]
\centering
\caption{Oracle ceiling: accuracy if the aggregator always selected the
best proposal.  Gap = oracle $-$ actual.}
\small
\begin{tabular}{lccc}
\toprule
Config & Actual (\%) & Oracle (\%) & Gap (pp) \\
\midrule
\multicolumn{4}{l}{\emph{GPQA-Diamond}} \\
120b/120b $k{=}1$ & 73.7 & 85.4 & $+$11.6 \\
120b/120b $k{=}3$ & 73.2 & 92.4 & $+$19.2 \\
120b/8b $k{=}1$   & 71.2 & 85.4 & $+$14.1 \\
120b/8b $k{=}3$   & 70.2 & 90.4 & $+$20.2 \\
8b/8b $k{=}1$     & 30.8 & 69.7 & $+$38.9 \\
8b/8b $k{=}3$     & 31.3 & 93.9 & $+$62.6 \\
\midrule
\multicolumn{4}{l}{\emph{LCB-Hard}} \\
120b/120b $k{=}1$ & 60.8 & 67.8 & $+$7.0 \\
120b/120b $k{=}3$ & 60.2 & 76.6 & $+$16.4 \\
120b/8b $k{=}1$   & 46.8 & 58.5 & $+$11.7 \\
120b/8b $k{=}3$   & 57.3 & 76.0 & $+$18.7 \\
\bottomrule
\end{tabular}
\end{table}

\noindent The aggregator leaves $7$--$63$\,pp of correct answers on the
table.  The gap grows with $k$ (more candidates, same selection
quality) and is largest at weak proposers on QA ($+62.6$\,pp at all-8b
$k{=}3$): the proposer pool contains the correct answer 94\% of the
time, but the system delivers only 31\%.

\section{Cluster Structure at Weak Proposers}
\label{app:clusters}

At weak proposers (\texttt{llama-3.1-8b}), the 5 SPUQ perturbations
fragment into ${\sim}3$ distinct answer clusters (mean majority cluster
size 2.8/5, vs.\ 4.1/5 at strong proposers).

\begin{table}[h]
\centering
\caption{Pre-refinement cluster shapes at all-8b GPQA $k{=}1$.}
\small
\begin{tabular}{lcc}
\toprule
Shape & Count & Meaning \\
\midrule
5 (unanimous) & 14 (7.1\%) & all 5 agree \\
4+1           & ${\sim}$20 & 4 vs.\ 1 outlier \\
3+2           & ${\sim}$16 & split majority \\
3+1+1         & ${\sim}$26 & majority $+$ 2 outliers \\
2+2+1         & ${\sim}$14 & tied $+$ outlier \\
$\leq$2+1+1+1 & ${\sim}$9 & weak/no majority \\
\bottomrule
\end{tabular}
\end{table}

\paragraph{Unanimity reliability.}
At strong proposers, unanimous agreement ($C{=}1.0$) is a strong
signal: ${\sim}10\%$ wrong rate.  At weak proposers, unanimity is a
\emph{negative} signal: ${\sim}43\%$ wrong rate.  Unanimity at weak
proposers reflects shared training-data biases, not convergence to truth.

\begin{table}[h]
\centering
\caption{Unanimity wrong rate across configurations.}
\small
\begin{tabular}{lcc}
\toprule
Config & Unanimous problems & Wrong rate \\
\midrule
GPQA prop=120b $k{=}1$ & ${\sim}50\%$ & ${\sim}10\%$ \\
GPQA all-8b $k{=}1$    & 7.1\% (14/198) & ${\sim}43\%$ \\
LCB prop=120b $k{=}1$  & 46.2\% & 31.6\% \\
LCB agg=8b $k{=}1$     & 47.4\% & 44.4\% \\
\bottomrule
\end{tabular}
\end{table}

\section{Sampling Protocol Comparison}
\label{app:sampling-protocol}

Standard self-consistency~\citep{wang2023selfconsistency} draws
diverse samples via temperature sampling ($T > 0$).  Our baselines
instead use greedy decoding ($T{=}0$) with neutral variant tags
(\S\ref{app:variant-tag}) to create sample diversity.  To confirm
that this protocol does not disadvantage the SC baseline, we run
SC($k{=}10$) at four temperature settings and compare against our
greedy+tag variant.

\begin{table}[h]
\centering
\caption{Sampling protocol comparison: SC($k{=}10$) accuracy (\%)
under different diversity mechanisms.  On QA, greedy+tag and
temperature sampling are statistically indistinguishable (McNemar
$p{=}0.332$ for best temperature vs.\ greedy+tag).  On code,
temperature sampling yields meaningfully higher SC accuracy than
greedy+tag (67.3\% vs.\ 57.3\% at $T{=}0.7$), reflecting the value
of stochastic exploration for code generation.}
\label{tab:sampling-protocol}
\small
\begin{tabular}{lcccc}
\toprule
Condition & Temp & GPQA (\%) & LCB-Hard (\%) \\
\midrule
SC greedy+tag (ours) & 0 & 70.7 & 57.3 \\
SC $T{=}0.3$ & 0.3 & 71.7 & 65.5 \\
SC $T{=}0.5$ & 0.5 & 72.2 & 62.0 \\
SC $T{=}0.7$ & 0.7 & 71.2 & 67.3 \\
SC $T{=}1.0$ & 1.0 & 70.7 & 59.1 \\
\midrule
SC-MoA (ungated) & 0 & \textbf{72.7} & 59.6 \\
\bottomrule
\end{tabular}
\end{table}

\noindent On QA tasks, greedy decoding with a neutral variant tag
produces diversity comparable to temperature sampling: all five
conditions fall within a 2.5\,pp band (70.7--72.2\%), and no
pairwise McNemar test reaches significance.  SC-MoA (72.7\%)
slightly exceeds the strongest temperature SC (72.2\% at $T{=}0.5$)
by 0.5\,pp.

On code tasks, temperature sampling provides substantially better
diversity than greedy+tag: accuracy peaks at $T{=}0.7$ (67.3\%)
and $T{=}0.3$ (65.5\%), compared with 57.3\% for greedy+tag.
Higher temperature ($T{=}1.0$, 59.1\%) degrades accuracy, consistent
with excessive token noise corrupting code syntax.
Token-level stochasticity at moderate temperatures explores
qualitatively different implementation strategies, while variant tags
produce only minor prompt perturbations that often yield similar code.
SC-MoA (persona-greedy) remains competitive with the best temperature
SC (59.6\% vs.\ 67.3\%, McNemar $p{=}0.711$).

\subsection{Input-Side vs.\ Output-Side Diversity}
\label{app:diversity-source}

A natural question is whether input-side diversity (SPUQ
perturbations) fundamentally outperforms output-side diversity
(temperature sampling).  To isolate the diversity source, we hold the
total call budget fixed at $N{=}5$ proposals and compare three
conditions, all using the same aggregation pipeline (Phases~2--5):

\begin{table}[h]
\centering
\caption{Diversity source comparison, holding total proposals fixed
at $N{=}5$.  All conditions use the same aggregation pipeline
(Phases~2--5).  On GPQA, greedy decoding suffices: persona-greedy and
SPUQ-greedy are statistically indistinguishable ($p{=}0.480$), and
temperature hurts.  On LCB-Hard, input-side and output-side diversity
are complementary: SPUQ$+$$T{=}0.7$ yields the highest accuracy
(68.4\%), $+4.7$\,pp over persona-greedy.}
\label{tab:diversity-source}
\small
\begin{tabular}{lccc}
\toprule
Condition & Temp & GPQA (\%) & LCB-Hard (\%) \\
\midrule
SC-MoA persona-greedy & 0 & \textbf{75.3} & 63.7 \\
SC-MoA SPUQ-greedy & 0 & 73.2 & 62.6 \\
SC-MoA SPUQ $+$ $T{=}0.7$ & 0.7 & 72.2 & \textbf{68.4} \\
SC-MoA temp-diversity $T{=}0.7$ & 0.7 & 69.7 & 67.3 \\
SC-MoA temp-diversity $T{=}1.0$ & 1.0 & 74.7 & 62.0 \\
\bottomrule
\end{tabular}
\end{table}

\noindent \textbf{QA.}  SPUQ-greedy (73.2\%) and persona-greedy (75.3\%)
are statistically indistinguishable (McNemar $p{=}0.480$, $n{=}198$):
of the 18 problems where they disagree, personas win 11 and SPUQ wins 7.
Adding temperature to SPUQ does not help on QA: SPUQ$+$$T{=}0.7$
(72.2\%) is 1.0\,pp below SPUQ-greedy (73.2\%) and 3.1\,pp below
persona-greedy (75.3\%).  Temperature-only diversity ($T{=}0.7$,
69.7\%) fares worse still, while $T{=}1.0$ (74.7\%) partially
recovers.  On QA, token-level noise degrades the aggregator's inputs
without contributing useful diversity---greedy decoding suffices.

\textbf{Code.}  On LCB-Hard, temperature-only diversity
($T{=}0.7$, single prompt) yields 67.3\%---identical to SC($T{=}0.7$),
indicating that the aggregation pipeline adds no value when all
diversity comes from token-level stochasticity.  Higher temperature
($T{=}1.0$) degrades further to 62.0\%, below even persona-greedy
(63.7\%).  Stacking SPUQ perturbations with moderate temperature
($T{=}0.7$) recovers a clear gain: 68.4\%, $+4.7$\,pp over
persona-greedy and $+1.1$\,pp over SC($T{=}0.7$).  Input-side
perturbation and output-side stochasticity are \emph{complementary}
on code: SPUQ varies the problem framing while temperature varies the
implementation strategy, together exploring a richer proposal space
than either mechanism alone.

\section{Token-Level Compute Accounting}
\label{app:token-budget}

To verify that SC-MoA's accuracy gains are not explained by
disproportionate token expenditure, we report total tokens
(input~$+$~output) per method per problem.

\begin{table}[h]
\centering
\caption{Mean tokens per problem and accuracy-per-million-tokens
(Acc/MT) on GPQA-Diamond and BBH.  SC-MoA (ungated) achieves the highest
accuracy at a fraction of the token cost of SC($k{=}10$).}
\label{tab:token-budget}
\small
\begin{tabular}{l ccc ccc}
\toprule
& \multicolumn{3}{c}{\textbf{GPQA-Diamond}} & \multicolumn{3}{c}{\textbf{BBH (27 tasks)}} \\
\cmidrule(lr){2-4} \cmidrule(lr){5-7}
Method & Acc (\%) & Tok/prob & Acc/MT & Acc (\%) & Tok/prob & Acc/MT \\
\midrule
Zero-shot CoT & 66.7 & 286K & 2.3 & 70.4 & 1{,}521K & 0.5 \\
SC-MoA (ungated) & \textbf{72.7} & \textbf{631K} & \textbf{1.2} & \textbf{89.7} & \textbf{2{,}921K} & \textbf{0.3} \\
SC-MoA ($k{=}2$, gated) & 72.7 & 1{,}052K & 0.7 & 89.4 & 5{,}820K & 0.2 \\
TextGrad & 69.7 & 1{,}529K & 0.5 & 84.0 & 12{,}698K & 0.1 \\
SC ($k{=}10$) & 70.7 & 1{,}934K & 0.4 & 88.1 & 7{,}988K & 0.1 \\
Self-MoA & 64.1 & 1{,}426K & 0.5 & 86.3 & 5{,}680K & 0.2 \\
MoA$^\ddagger$ & 65.2 & --- & --- & --- & --- & --- \\
\bottomrule
\end{tabular}
\end{table}

\noindent $^\ddagger$Token counts unavailable for the paraphrase-based
MoA variant ($N{=}4$, 9 calls); see Appendix~\ref{app:persona-vs-paraphrase}.

\noindent On GPQA, SC-MoA (ungated) uses $3.1{\times}$ fewer tokens
than SC($k{=}10$) while achieving $+2.0$\,pp higher accuracy.  On
BBH the ratio is $2.7{\times}$ fewer tokens at $+1.6$\,pp.  The
efficiency comes from Phase~1 generating only $N{=}5$ base proposals
plus a single aggregation call, whereas SC generates 10 independent
samples.  The gated variant is more expensive because it also runs
refinement and inner SC ($k{=}2$).  Across both benchmarks, SC-MoA
dominates every baseline on the accuracy-per-token frontier.

\section{Contamination Analysis}
\label{app:contamination}

We analyze potential training-data contamination across all three
benchmark families.

\paragraph{BBH.}
We stratify the 27 BBH tasks by SC-MoA's improvement over
SC($k{=}10$).  Of 27 tasks, 8 show improvement ($+1$ to $+13.5$\,pp),
3 show degradation ($-4$ to $-8$\,pp), and 16 are tied (14 of which
are solved perfectly by both methods).  The improved tasks span
diverse cognitive domains---state tracking
(\emph{penguins\_in\_a\_table}, $+13.5$\,pp), temporal reasoning
(\emph{date\_understanding}, $+9$\,pp), and linguistic parsing
(\emph{disambiguation\_qa}, $+12$\,pp)---with no concentration in
tasks more likely to be contaminated.

\paragraph{GPQA-Diamond.}
GPQA was designed to be ``Google-proof'': domain experts achieve only
${\sim}$65\% accuracy, and the questions were validated against web
search~\citep{rein2024gpqa}.  Stratifying by scientific domain:
Chemistry ($n{=}93$) improves by $+9.7$\,pp over SC, Physics
($n{=}86$) by $+1.2$\,pp, and Biology ($n{=}20$) degrades by
$-8.2$\,pp.  The Biology degradation is consistent with a small
sample ($n{=}20$, not significant) rather than domain-specific
contamination.

\paragraph{LCB-Hard.}
All 171 problems have \texttt{contest\_date} $\geq$ September~2024,
placing them after the training cutoff by construction.  We
additionally verified that SC-MoA does not preferentially solve
older problems within this window: the date distribution of solved
vs.\ unsolved problems is indistinguishable (Kolmogorov--Smirnov
$D{=}0.114$, $p{=}0.633$).  Per-month accuracy ranges from 28.6\%
(April 2025, $n{=}7$) to 77.8\% (December 2024, $n{=}18$), with no
monotonic trend.

\section{Aggregation Value Analysis}
\label{app:aggregation-value}

On QA benchmarks, the score-based override (Phase~5) never fires
because the quality score is always 1.0.  This isolates the
aggregator's contribution: any accuracy difference between
majority-vote-only and SC-MoA is attributable to Phase~4 aggregation.

We decompose the aggregator's effect on GPQA-Diamond by comparing the
pre-aggregation majority answer against the post-aggregation final
answer for each problem:

\begin{table}[h]
\centering
\caption{Aggregation value on GPQA-Diamond ($n{=}198$): $2 \times 2$
contingency table comparing majority-vote-only vs.\ aggregated answer.
McNemar $\chi^2{=}2.88$, $p{=}0.090$.}
\label{tab:aggregation-value}
\small
\begin{tabular}{lcc}
\toprule
& Aggregated correct & Aggregated wrong \\
\midrule
Majority correct & 133 & 5 \\
Majority wrong   & 12 & 48 \\
\bottomrule
\end{tabular}
\end{table}

\begin{table}[h]
\centering
\caption{Aggregation value on BBH-3 ($n{=}296$, pre-refinement
controlled setting): $2 \times 2$ contingency table comparing
majority-vote-only vs.\ synthesized answer.
McNemar $\chi^2{=}5.76$, $p{=}0.016$.}
\label{tab:aggregation-value-bbh}
\small
\begin{tabular}{lcc}
\toprule
& Synthesized correct & Synthesized wrong \\
\midrule
Majority correct & 243 & 5 \\
Majority wrong   & 16 & 32 \\
\bottomrule
\end{tabular}
\end{table}

\noindent \textbf{Note:} these are pre-refinement numbers from the
\S\ref{sec:paradox} controlled setting (synthesis vs.\ voting on the
same proposals).  After anchored refinement in the full pipeline,
the harmful flips are eliminated (Table~\ref{tab:synthesis-decomp}
reports 46 beneficial and 0 harmful on BBH).

\paragraph{Synthesis advantage decomposition across benchmarks.}
Table~\ref{tab:synthesis-decomp} extends the aggregation-value analysis
to four benchmarks using paraphrase perturbations, decomposing the net
synthesis advantage into recovery ($P_w \cdot R_s$) and corruption
($P_c \cdot (1{-}F_s)$) terms per
Proposition~\ref{prop:synthesis} (Figure~\ref{fig:mechanism}b).
GPQA-Diamond and LCB-Hard use the SPUQ paraphrase variant ($N{=}4$,
$k{=}5$); BBH and AIME use $N{=}5$, $k{=}2$.  Recovery dominates
corruption on every benchmark, and all four McNemar tests are
significant ($p{<}0.05$), confirming that beneficial flips reliably
outnumber harmful flips.

\begin{table}[h]
\centering
\caption{Synthesis advantage decomposition across four benchmarks
(paraphrase perturbation variants).  Config column specifies the
operating point; the decisive subset excludes
problems with degenerate answer-only traces (GPQA: 9 excluded,
LCB: 7 excluded due to sandbox timeouts).
Recovery and corruption are expressed as percentage-point
contributions.  All McNemar tests are significant at $p{<}0.05$.
MMLU-ML ($n{=}112$) is excluded because its small decisive set
($n{<}10$ flips) yields unreliable decomposition estimates.}
\label{tab:synthesis-decomp}
\small
\begin{tabular}{llccccc}
\toprule
Benchmark & Config & $n$ & Benef. & Harm. & Recovery (pp) & Net (pp) \\
\midrule
GPQA-Diamond  & $N{=}4$, $k{=}5$ & 189 & 15 & 4  & $+$7.9  & $+$5.8  \\
BBH           & $N{=}5$, $k{=}2$ & 296 & 46 & 0  & $+$15.5 & $+$15.5 \\
AIME 2022--24 & $N{=}5$, $k{=}2$ &  90 & 20 & 0  & $+$22.2 & $+$22.2 \\
LCB-Hard      & $N{=}4$, $k{=}5$ & 164 & 24 & 5  & $+$14.6 & $+$11.6 \\
\bottomrule
\end{tabular}
\end{table}

\subsection{Case Studies}
\label{app:case-studies}

We present a representative beneficial flip from GPQA-Diamond to
illustrate the aggregation mechanism.

\paragraph{Example: GPQA problem 69 (chemistry, point-group symmetry).}
Five perturbations produce proposals:
\begin{itemize}[leftmargin=*,itemsep=1pt]
\item \textbf{Para 1} ($k{=}2$, both wrong): concludes C$_{2v}$
\item \textbf{Para 2} ($k{=}2$, 1 wrong/1 correct): first sample says D$_{\infty h}$, second says D$_{\infty h}$
\item \textbf{Para 3} ($k{=}2$, 1 correct/1 wrong): first says B (C$_{2v}$), second disagrees
\item \textbf{Para 4} ($k{=}2$, both wrong): concludes D$_{\infty h}$ (answer D)
\item \textbf{Para 5} ($k{=}2$, both wrong): concludes C$_{2v}$ (answer B)
\end{itemize}

\noindent SC majority vote: 2/10 samples are correct $\to$ \textbf{wrong}.
Pre-refinement consensus: $C = 0.4$ (split between B and D).
Three minorities adopt the emerging answer during refinement
($C_{\text{post}} = 0.6$).
The aggregator reads all traces---including the correct reasoning
from Paras~2 and 3---and outputs the correct answer \textbf{B}.

The mechanism is not best-of-N selection: the aggregator does not
see correctness labels.  It synthesizes across reasoning traces,
weighting arguments that identify the molecule as bent (C$_{2v}$)
over those that misidentify it as linear (D$_{\infty h}$).

Among the 16 beneficial flips in the SPUQ paraphrase variant
(Table~\ref{tab:aggregation-value}), the majority follow this pattern:
a minority of perturbations contain the correct reasoning, and the
aggregator recovers it by integrating evidence across traces.

\section{Trace Diversity Analysis}
\label{app:trace-diversity}

Figure~\ref{fig:mechanism}(a) shows that trace-level diversity ($D_t$)
predicts beneficial aggregation flips across four benchmarks
(Proposition~\ref{prop:synthesis}).  Here we report the full measurement
methodology and results.

\paragraph{Methodology.}
For each problem, we extract the first reasoning trace from each
perturbation and compute $D_t$ as the mean pairwise cosine distance in
TF-IDF space (5000 features, English stop words removed).  Problems
where all traces are answer-only (degenerate $D_t{=}0$) are excluded
from the distributional analysis (24 problems total: 7 in GPQA, 17 in
BBH).  GPQA-Diamond and LCB-Hard use SPUQ paraphrases ($N{=}4$,
$k{=}5$); BBH (date understanding, disambiguation QA, penguins in a
table) and AIME 2022--24 use $N{=}5$, $k{=}2$.

\paragraph{Results.}
Table~\ref{tab:trace-diversity} reports $D_t$ stratified by aggregation
flip type across all four benchmarks.

\begin{table}[h]
\centering
\caption{Trace diversity ($\bar{D}_t$) by aggregation outcome across
four benchmarks.  Beneficial flips consistently show higher trace
diversity than non-flips.  Degenerate zero-diversity traces excluded.}
\label{tab:trace-diversity}
\small
\begin{tabular}{llccc}
\toprule
Benchmark & Outcome & $n$ & $\bar{D}_t$ & Std \\
\midrule
\multirow{3}{*}{GPQA-Diamond}
  & Beneficial flip & 15  & \textbf{0.585} & 0.171 \\
  & No flip         & 163 & 0.460          & 0.170 \\
  & Harmful flip    & 4   & 0.521          & 0.139 \\
\midrule
\multirow{2}{*}{BBH}
  & Beneficial flip & 44  & \textbf{0.564} & 0.124 \\
  & No flip         & 235 & 0.522          & 0.120 \\
\midrule
\multirow{2}{*}{AIME 2022--24}
  & Beneficial flip & 20  & \textbf{0.521} & 0.145 \\
  & No flip         & 70  & 0.449          & 0.135 \\
\midrule
\multirow{3}{*}{LCB-Hard}
  & Beneficial flip & 24  & \textbf{0.476} & 0.167 \\
  & No flip         & 135 & 0.410          & 0.152 \\
  & Harmful flip    & 5   & 0.485          & 0.060 \\
\bottomrule
\end{tabular}
\end{table}

\noindent Welch $t$-tests (beneficial vs.\ non-flip): GPQA-Diamond
$t{=}2.70$, $p{=}0.015$; BBH $t{=}2.05$, $p{=}0.045$; AIME
$t{=}1.99$, $p{=}0.057$; LCB-Hard $t{=}1.80$, $p{=}0.081$.  The
effect is significant at $p{<}0.05$ on GPQA-Diamond and BBH, and
directionally consistent on AIME and LCB-Hard.  Point-biserial
correlations between $D_t$ and beneficial flip occurrence: $r{=}0.20$
($p{=}0.007$) on GPQA, $r{=}0.13$ ($p{=}0.037$) on BBH, $r{=}0.22$
($p{=}0.042$) on AIME, $r{=}0.15$ ($p{=}0.055$) on LCB-Hard.

\paragraph{Harmful flips.}
Harmful flips occur only on GPQA-Diamond ($n{=}4$) and LCB-Hard
($n{=}5$); BBH and AIME have zero harmful flips.  On both benchmarks
where harmful flips exist, their $D_t$ is comparable to beneficial
flips ($0.521$ and $0.485$, respectively).  This is consistent with
Proposition~\ref{prop:synthesis}: high trace diversity increases
\emph{both} flip types, but the net advantage is positive because
synthesis fidelity $F_s$ is high (Table~\ref{tab:synthesis-decomp}).

\section{Embedding-Based Diversity Validation}
\label{app:embedding-diversity}

The TF-IDF representation used in Appendix~\ref{app:trace-diversity}
captures lexical variation but fits a small per-problem vocabulary
(4~documents per problem at $N{=}4$), raising the question of whether
the $D_t$--flip association reflects genuine structural differences in
reasoning or merely surface rewording.  We validate the finding under
dense pre-trained embeddings that encode semantic structure
independently of per-problem term frequencies.

\paragraph{Methodology.}
We embed all reasoning traces using Amazon Titan Text Embeddings v2
(1024 dimensions, L2-normalized), a pre-trained dense encoder whose
representation is fixed across problems---unlike TF-IDF, which fits
a separate vocabulary per problem on only $N$ documents.  We compute
$D_t$ as $1 - \mathbf{v}_i^\top \mathbf{v}_j$ over all perturbation
pairs, identically to the TF-IDF pipeline but replacing the sparse
representation.  The analysis uses the mechanistic configuration
($N{=}4$, $k{=}5$, GPQA-Diamond, 198~problems, 3{,}390~unique traces).
TF-IDF excludes 8 degenerate problems with answer-only traces
($n{=}190$); embeddings retain all 198.

\paragraph{Results.}
Table~\ref{tab:embedding-diversity} reports $D_t$ under both
representations side by side on the same dataset.

\begin{table}[h]
\centering
\caption{Trace diversity ($\bar{D}_t$) under TF-IDF vs.\ dense
embeddings on GPQA-Diamond ($N{=}4$, $k{=}5$).  Both representations
yield large, significant effects (Cohen's $d \approx 0.92$).}
\label{tab:embedding-diversity}
\small
\begin{tabular}{lcccccc}
\toprule
& \multicolumn{3}{c}{TF-IDF ($n{=}190$)} & \multicolumn{3}{c}{Titan Embed 1024d ($n{=}198$)} \\
\cmidrule(lr){2-4} \cmidrule(lr){5-7}
Outcome & $n$ & $\bar{D}_t$ & $d$ & $n$ & $\bar{D}_t$ & $d$ \\
\midrule
Beneficial flip & 15 & $0.682 \pm 0.149$ & \multirow{2}{*}{\textbf{0.93}} & 16 & $0.494 \pm 0.208$ & \multirow{2}{*}{\textbf{0.92}} \\
No flip         & 170 & $0.514 \pm 0.182$ & & 177 & $0.316 \pm 0.191$ & \\
Harmful flip    & 5  & $0.558 \pm 0.106$ & --- & 5  & $0.310 \pm 0.160$ & --- \\
\bottomrule
\end{tabular}
\end{table}

\noindent Welch $t$-tests: TF-IDF $t{=}3.44$, $p{=}0.0007$; embedding
$t{=}3.53$, $p{=}0.0005$.  Point-biserial correlations are nearly
identical: $r{=}0.246$ under both ($p < 0.001$).

\paragraph{Interpretation.}
Dense embeddings compress the absolute $D_t$ scale (the overall mean
drops from $0.529$ to $0.330$) because semantically similar traces
cluster more tightly in embedding space.  Despite this compression,
the beneficial-flip separation is preserved with essentially identical
effect size, indicating that the signal is carried by genuine
reasoning-strategy variation rather than lexical noise.

The most informative discrepancy is in harmful flips:
under TF-IDF, harmful flips show moderate diversity
($\bar{D}_t{=}0.558$, above the no-flip baseline of $0.514$),
but under embeddings they collapse to baseline
($\bar{D}_t{=}0.310$ vs.\ no-flip $0.316$).
This indicates that the TF-IDF diversity in harmful cases is
driven by surface rewording---different words expressing the same
flawed reasoning---rather than the structural reasoning divergence
that enables recovery.  The embedding representation strips away
this superficial variation, cleanly separating the beneficial
signal from noise.

\section{Controlled Perturbation vs.\ Free-Form Paraphrasing}
\label{app:controlled-perturbation}

SC-MoA's default perturbation strategy (Phase~0) uses free-form
paraphrasing: a single LLM call generates $N$ unconstrained
meaning-preserving rephrasings (instantiated via SPUQ~\citep{gao2024spuq}).  An alternative
is \emph{controlled perturbation}, where each paraphrase applies
exactly one named linguistic operation drawn from a fixed taxonomy.
This section reports a head-to-head comparison on GPQA-Diamond.

We adapt the controlled-perturbation taxonomy from
Polyjuice~\citep{wu2021polyjuice} and Tailor~\citep{ross2022tailor}
to an LLM-based generator, defining five meaning-preserving operation
types (lexical, syntactic, discourse, register, compression).  Each
of $N{=}5$ agents receives a paraphrase generated under a different
type; the same protected-token validation pipeline applies.

\paragraph{Experimental setup.}
We evaluate on a 50-problem subset of GPQA-Diamond using
\texttt{gpt-oss-120b} for both paraphrase generation and SC-MoA
proposing/aggregation ($N{=}5$, $k{=}2$, always-aggregate).  Both
conditions use identical model, temperature, and caching
infrastructure; the only difference is the perturbation generator.
Paraphrases are pre-generated and cached before the SC-MoA run to
avoid rate-limit interference.

\paragraph{Results.}
Table~\ref{tab:controlled-vs-spuq} summarizes the comparison.

\begin{table}[h]
\centering
\caption{Head-to-head on GPQA-Diamond ($n{=}50$, \texttt{gpt-oss-120b},
$N{=}5$, $k{=}2$).  Both methods use identical SC-MoA
infrastructure; the only variable is the perturbation generator.}
\label{tab:controlled-vs-spuq}
\small
\begin{tabular}{lccccc}
\toprule
Method & Accuracy & $\bar{C}_{\text{pre}}$ & $\bar{C}_{\text{post}}$
& Avg calls & Validation \\
\midrule
Controlled perturbation & \textbf{76.0\%} & 0.804 & 0.960 & 12.0 & 97.2\% \\
Free-form SPUQ          & 74.0\%          & 0.800 & 0.960 & 12.0 & 98.0\% \\
\bottomrule
\end{tabular}
\end{table}

\noindent The $+2.0$\,pp difference is within the margin expected
from $n{=}50$ (McNemar's test: 4~vs.~3 discordant pairs,
$p{=}1.0$).  Both methods produce nearly identical consensus
profiles ($\bar{C}_{\text{pre}} \approx 0.80$,
$\bar{C}_{\text{post}} \approx 0.96$) and use the same number of
LLM calls.

\paragraph{Power analysis.}
The $n{=}50$ sample yields 80\% power to detect a ${\geq}12$\,pp
effect (two-sided McNemar, $\alpha{=}0.05$).  The observed 2\,pp
difference is well within the noise floor.  We interpret this as
absence of evidence for a large perturbation-content effect, not as
evidence of absence.  A properly powered comparison
($n \geq 400$) would require additional compute budget.

\paragraph{Per-type validation.}
Controlled perturbations achieve 97.2\% overall validation (243/250),
compared to 98.0\% for free-form SPUQ (245/250).  Lexical and
syntactic perturbations preserve all protected tokens (100\%);
compression is hardest (92\%), as tightening prose occasionally drops
a unit or number.

We retain SPUQ as the default because it requires one LLM call
vs.\ five, reducing Phase~0 cost by ${\sim}5{\times}$.

\subsection{Lightweight $k{=}1$ variant}
\label{app:k1-variant}

\begin{table}[h]
\centering
\caption{\textbf{$k{=}1$ variant} ($N{=}4$, SPUQ paraphrases,
\texttt{gpt-oss-120b}).  SC-MoA at $k{=}1$ uses
${\sim}5.2$ LLM calls and matches or exceeds SC($k{=}10$) on 3 of 4
benchmarks.  \textbf{Note:} this table uses $N{=}4$; Table~\ref{tab:main}
uses $N{=}5$, $k{=}2$---numbers are not directly comparable.}
\label{tab:k1}
\small
\begin{tabular}{lccccc}
\toprule
Method & BBH (2{,}561) & GPQA & LCB-Hard & Calls \\
\midrule
SC ($k{=}10$) & 88.1 & 69.7 & 49.1 & 10 \\
\rowcolor{scmoarow!50}
SC-MoA ($N{=}4$, $k{=}1$) & \textbf{89.4} & \textbf{74.7} & \textbf{55.6} & ${\sim}5.2$ \\
\rowcolor{scmoarow}
SC-MoA ($N{=}4$, $k{=}2$) & \textbf{89.7} & \textbf{75.4} & \textbf{63.7} & ${\sim}11.5$ \\
\bottomrule
\end{tabular}
\end{table}

At $k{=}1$, SC-MoA uses ${\sim}5.2$ calls (no inner SC) and
exceeds SC($k{=}10$) on GPQA ($+5.0$\,pp) at roughly half the
compute (Figure~\ref{fig:pareto}; discussion in
\S\ref{sec:results}).

\section{Additional Ablations and Stability}
\label{app:additional-ablations}

\subsection{Trace Ablation: Full Results}
\label{app:trace-ablation}

We ablate the trace content visible to the aggregator, holding
Phases~1--3 constant (same SPUQ-paraphrased proposals, same
refinement, same post-refinement clusters).  Only the Phase~4
aggregation call differs.  Eight conditions test distinct hypotheses
about what information drives the aggregator's beneficial corrections
(Table~\ref{tab:trace-ablation}).  ``Ben.''\ and ``Harm.''\ count
beneficial and harmful flips vs.\ majority vote.

\begin{table}[h]
\centering
\caption{\textbf{Trace ablation on GPQA-Diamond} ($n{=}198$, SPUQ
paraphrases, $N{=}5$, $k{=}2$).  Beneficial and harmful flips are
measured against majority vote.  $p^*$: Holm--Bonferroni-corrected
McNemar $p$-value (7 comparisons against majority vote).}
\label{tab:trace-ablation}
\small
\begin{tabular}{lrrrrl}
\toprule
Condition & Acc.\ (\%) & Ben.\ & Harm.\ & Ratio & $p^*$ \\
\midrule
Majority vote       & 69.7 & --  & --  & --   & --          \\
Full trace          & 72.2 & 8   & 3   & 2.67 & --          \\
(a) Answer-only     & 69.7 & 6   & 6   & 1.00 & 1.00        \\
(b) Truncated-400   & 73.2 & 10  & 3   & 3.33 & 0.65        \\
(c) Majority-only   & 69.7 & 3   & 3   & 1.00 & 1.00        \\
(d) Minority-only   & 73.2 & 11  & 4   & 2.75 & 0.71        \\
(e) Permuted        & 73.2 & 11  & 4   & 2.75 & 0.71        \\
(f) Random 3/5      & 70.2 & 6   & 5   & 1.20 & 1.00        \\
(g) No metadata     & 72.7 & 9   & 3   & 3.00 & 0.71        \\
\bottomrule
\end{tabular}
\end{table}

\noindent
Table~\ref{tab:trace-ablation} reports all conditions; we highlight
the consensus-stratified findings below.

\paragraph{Consensus-stratified analysis.}
The effect concentrates in the strong-consensus stratum (23
problems, 12\% of total): full-trace accuracy jumps from 21.7\%
(majority vote) to 47.8\%, and minority-only traces reach
47.8\% as well---more than doubling majority-vote accuracy on the
hardest disputed problems.  On unanimous problems (175/198, 88\%), all
conditions perform within ${\pm}1.7$\,pp of majority vote, as
expected: when no minority traces exist, the aggregator has no
complementary reasoning to integrate.

\paragraph{Degenerate cases.}
In condition (d), 175/198 problems (88.4\%) have unanimous
post-refinement consensus, leaving no genuine minority traces.
For these, the aggregator receives a synthetic message noting
unanimity; it flips away from the unanimous answer on only 2/175
problems (1.1\%), confirming that the aggregator respects strong
consensus even without the evolution narrative.

\subsection{Information Ladder: Compute vs.\ Evidence}
\label{app:info-ladder}

The trace ablation (\S\ref{app:trace-ablation}) establishes that
answer-only and majority-only conditions collapse to MV, but cannot
distinguish whether the aggregator's gain comes from
\emph{re-solving with additional compute} or from
\emph{synthesizing complementary trace evidence}.  We construct an
information ladder that progressively reveals proposer information
while holding Phases~1--3 constant
(Table~\ref{tab:info-ladder}).

\paragraph{Conditions.}
Eight conditions span zero to full information.
\textbf{Q-only (agg):} the aggregator receives only the problem
under its standard system prompt (mentioning ``specialist
proposals''), with zero proposals---a pure re-solve baseline.
\textbf{Q-only (CoT):} identical, but using the proposer's
chain-of-thought prompt, controlling for whether the aggregator
prompt biases re-solving.
\textbf{Answers bare:} an unnumbered list of extracted answer labels
(e.g., ``A, A, A, B, C'') with no metadata---testing whether the
raw vote distribution helps beyond re-solving.
\textbf{Answers + metadata:} the \texttt{answer\_only} condition
from \S\ref{app:trace-ablation}, including persona names, quality,
intra-agreement, cluster labels, and evolution narrative.
\textbf{Traces bare:} full proposal text (1{,}800 char cap) with
generic headers, no persona names, quality, clusters, or evolution.
\textbf{Traces + partial meta:} the \texttt{no\_metadata} condition.
\textbf{Traces $-$ evolution:} the \texttt{no\_evolution} condition.
\textbf{Full:} the unmodified Phase~4 call.

\begin{table}[h]
\centering
\caption{\textbf{Information ladder on GPQA-Diamond}
($n{=}198$, SPUQ paraphrases, $N{=}5$, $k{=}2$).
Conditions ordered by increasing information available to the
aggregator.
Ben./Harm.\ = beneficial/harmful flips vs.\ majority vote.
$p$: McNemar, Holm--Bonferroni corrected (7 comparisons vs.\ MV).}
\label{tab:info-ladder}
\small
\begin{tabular}{llrrrrl}
\toprule
& Condition & Acc.\ (\%) & Ben.\ & Harm.\ & Ratio & $p$ \\
\midrule
& Majority vote       & 69.7 & --  & --  & --   & --  \\
\cmidrule{1-7}
\multirow{2}{*}{\rotatebox{90}{\scriptsize re-solve}}
& Q-only (agg prompt)  & 55.6 & 10  & 38  & 0.26 & $<$0.001 \\
& Q-only (CoT prompt)  & 63.1 & 10  & 23  & 0.43 & 0.035 \\
\cmidrule{1-7}
\multirow{2}{*}{\rotatebox{90}{\scriptsize votes}}
& Answers bare          & 72.2 &  9  &  4  & 2.25 & 0.267 \\
& Answers + metadata    & 69.7 &  6  &  6  & 1.00 & 1.000 \\
\cmidrule{1-7}
\multirow{4}{*}{\rotatebox{90}{\scriptsize traces}}
& Traces bare           & 72.7 & 11  &  5  & 2.20 & 0.210 \\
& Traces + partial meta & 72.7 &  9  &  3  & 3.00 & 0.146 \\
& Traces $-$ evolution  & 74.2 & 11  &  2  & 5.50 & 0.023 \\
& Full (all info)       & 72.2 &  8  &  3  & 2.67 & 0.227 \\
\bottomrule
\end{tabular}
\end{table}

\paragraph{Value decomposition.}
\begin{align*}
\underbrace{+2.5\text{\,pp}}_{\text{full} - \text{MV}} \;=\;
\underbrace{-14.1\text{\,pp}}_{\text{re-solve}} \;+\;
\underbrace{+16.7\text{\,pp}}_{\text{trace evidence}}
\end{align*}
The re-solve component is strongly negative: the aggregator
destroys accuracy when forced to reason from scratch.  The entire
$+16.7$\,pp trace-evidence component must compensate for the
re-solve penalty to produce the net $+2.5$\,pp gain.

\paragraph{The CoT prompt control.}
Q-only (CoT) scores 63.1\%---$+7.5$\,pp above Q-only (agg) but
still $-6.6$\,pp below MV.  The aggregator system prompt, which
references ``specialist proposals'' and ``minority positions,''
confuses the model when no proposals are present.  Neither
re-solve variant approaches MV.

\paragraph{Vote distribution provides weak signal.}
Answers bare (72.2\%) matches full-trace accuracy and exceeds MV,
suggesting the raw vote distribution provides some signal.  Adding
structured metadata to answer labels \emph{degrades} accuracy back
to MV (69.7\%, ratio 1.00)---metadata without supporting reasoning
introduces noise that overrides the implicit vote signal.

\paragraph{Traces are the active ingredient.}
Traces bare (72.7\%) and traces with partial metadata (72.7\%) both
exceed MV, with beneficial-to-harmful ratios of 2.20 and 3.00.
Removing the evolution narrative while keeping full metadata yields
the highest accuracy (74.2\%, ratio 5.50, $p{=}0.023$)---consistent
with the finding that evolution narrative introduces mild noise.

\paragraph{Consensus-stratified analysis.}
On unanimous problems (175/198, 88\%), all trace-level conditions
cluster within ${\pm}0.6$\,pp of MV (76.0\%).  The re-solve
conditions are the exception: Q-only (agg) drops to 58.9\%
($-17.1$\,pp), showing that re-solving corrupts even easy problems.

On contested problems (23/198, 12\%), MV scores 21.7\%; Q-only
(agg) reaches 30.4\%; but traces bare jumps to 47.8\% and traces
$-$ evolution reaches 60.9\%.  The aggregator nearly triples
contested-problem accuracy when given reasoning traces, confirming
complementary evidence from diverse proposers is the mechanism.

\begin{table}[h]
\centering
\caption{\textbf{Consensus-stratified information ladder} on
GPQA-Diamond.  Unanimous: post-refinement consensus $\geq 80\%$;
contested: $< 80\%$.}
\label{tab:info-ladder-strat}
\small
\begin{tabular}{lrrrrrr}
\toprule
& \multicolumn{3}{c}{Unanimous ($n{=}175$)} &
  \multicolumn{3}{c}{Contested ($n{=}23$)} \\
\cmidrule(lr){2-4} \cmidrule(lr){5-7}
Condition & Acc & Ben & Harm & Acc & Ben & Harm \\
\midrule
Majority vote      & 76.0 & -- & -- & 21.7 & -- & -- \\
Q-only (agg)       & 58.9 &  5 & 35 & 30.4 &  5 &  3 \\
Q-only (CoT)       & 67.4 &  5 & 20 & 30.4 &  5 &  3 \\
Answers bare       & 76.6 &  2 &  1 & 39.1 &  7 &  3 \\
Answers + meta     & 75.4 &  1 &  2 & 26.1 &  5 &  4 \\
Traces bare        & 76.0 &  1 &  1 & 47.8 & 10 &  4 \\
Traces + meta      & 75.4 &  0 &  1 & 52.2 &  9 &  2 \\
Traces $-$ evo     & 76.0 &  0 &  0 & 60.9 & 11 &  2 \\
Full               & 75.4 &  0 &  1 & 47.8 &  8 &  2 \\
\bottomrule
\end{tabular}
\end{table}

\paragraph{Adjacent-pair McNemar tests.}
Four pre-registered adjacent contrasts along the ladder
(Table~\ref{tab:info-ladder-mcnemar}), corrected with
Holm--Bonferroni ($m{=}4$).

\begin{table}[h]
\centering
\caption{\textbf{Adjacent-pair McNemar tests} along the information
ladder.}
\label{tab:info-ladder-mcnemar}
\small
\begin{tabular}{lrrrr}
\toprule
Contrast & Ben & Harm & $p_{\text{raw}}$ & $p_{\text{corr}}$ \\
\midrule
Q-only $\to$ Answers bare   & 40 &  7 & $<$0.001 & $<$0.001 \\
Answers bare $\to$ Ans+meta &  2 &  7 & 0.180    & 0.539    \\
Ans+meta $\to$ Traces bare  & 11 &  5 & 0.210    & 0.539    \\
Traces bare $\to$ Full      &  5 &  6 & 1.000    & 1.000    \\
\bottomrule
\end{tabular}
\end{table}

\noindent
Only the first contrast is significant: the jump from zero
proposer information to vote distribution accounts for essentially
all recoverable accuracy.  Subsequent steps do not produce
individually significant gains against their immediate predecessor,
though trace-level conditions collectively dominate answer-level
conditions in flip ratio (2.20--5.50 vs.\ 1.00--2.25).

\begin{figure}[h]
\centering
\includegraphics[width=0.95\columnwidth]{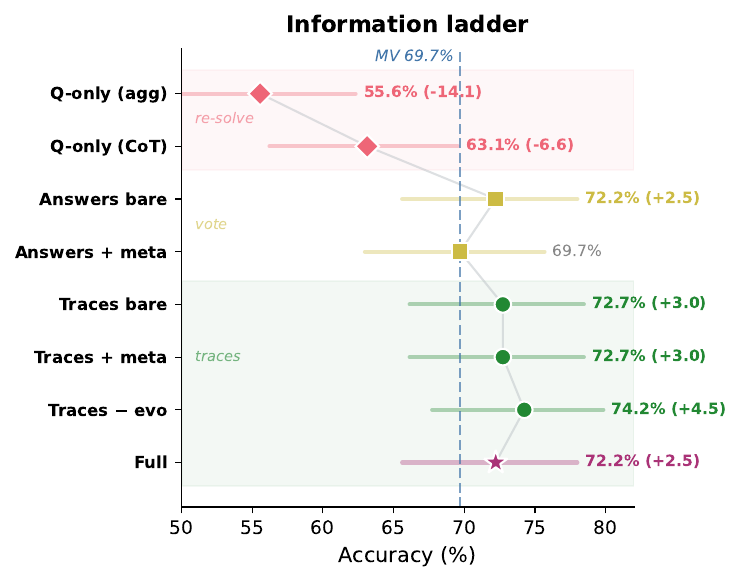}
\caption{\textbf{Information ladder} on GPQA-Diamond ($n{=}198$).
Accuracy with Wilson 95\% CIs, ordered by increasing information.
The dashed line marks majority vote (69.7\%).  Re-solve conditions
(red) fall well below MV; trace-level conditions (green) cluster
above it.}
\label{fig:info-ladder-supp}
\end{figure}

\begin{figure}[h]
\centering
\includegraphics[width=0.95\columnwidth]{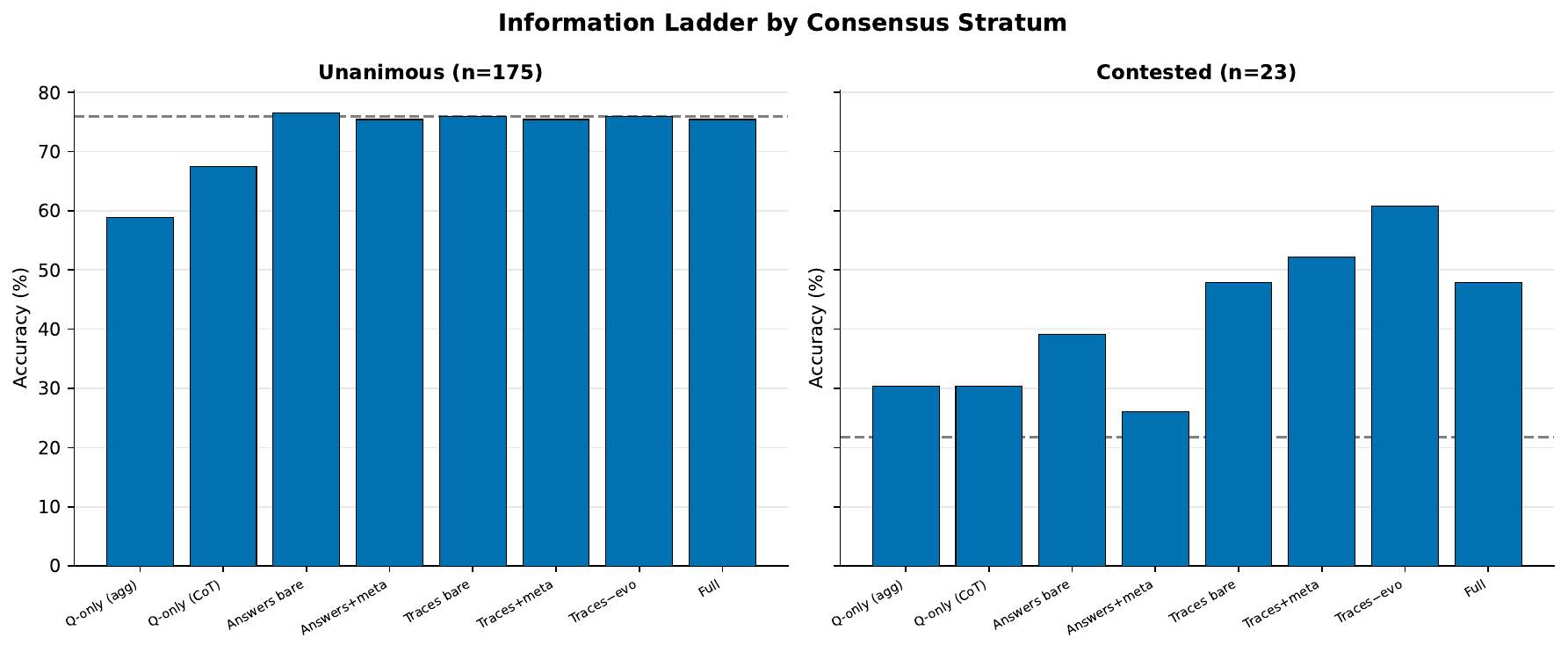}
\caption{\textbf{Consensus-stratified information ladder.}
On unanimous problems (left, $n{=}175$), all conditions cluster near
MV except re-solve, which collapses.  On contested problems (right,
$n{=}23$), trace conditions nearly triple MV accuracy.}
\label{fig:info-ladder-strat-fig}
\end{figure}

\begin{figure}[h]
\centering
\includegraphics[width=0.95\columnwidth]{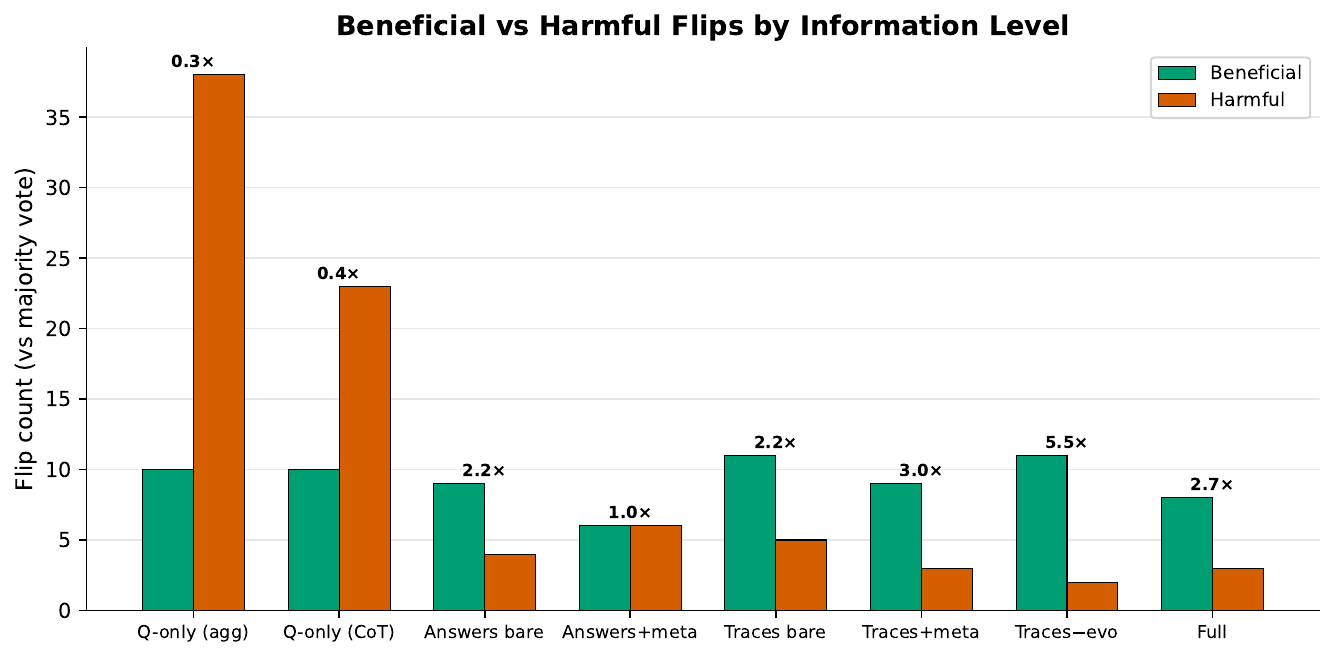}
\caption{\textbf{Beneficial-to-harmful flip ratio} across the
information ladder.  Re-solve conditions ($<1{\times}$) are
corruption-dominated; trace conditions ($2$--$5.5{\times}$) are
recovery-dominated.}
\label{fig:flip-ratio-supp}
\end{figure}

\subsection{Selection Baselines: Full Results}
\label{app:selection-baselines}

We compare SC-MoA's synthesis aggregation against two selection
baselines that choose the best \emph{existing} proposal rather than
generating a new one, using the same Phase~1 proposals ($N{=}5$,
$k{=}2$).

\paragraph{Dawid--Skene EM~\citep{dawid1979}.}
Multinomial Dawid--Skene estimates a per-agent confusion matrix
$\pi_j[c_{\text{true}}, c_{\text{obs}}]$ via EM across all 198 GPQA
problems, then selects the answer with highest reliability-weighted
posterior per problem.  This adds 0 extra LLM calls---it is a pure
post-hoc reweighting of the existing agent votes.

\paragraph{Pairwise LLM judge~\citep{zheng2023judging}.}
For each problem, all $\binom{5}{2}{=}10$ proposal pairs are
presented to the same model (gpt-oss-120b) with a judge prompt
requesting a binary ``A'' or ``B'' verdict.  Borda count aggregates
pairwise wins; the proposal with the highest Borda score is selected.
This adds 10 LLM calls per problem.

\begin{table}[h]
\centering
\caption{\textbf{Selection vs.\ synthesis on GPQA-Diamond}
($n{=}198$).  McNemar tests are against SC-MoA synthesis.}
\label{tab:selection-baselines}
\small
\begin{tabular}{lccccl}
\toprule
Method & Acc.\ (\%) & 95\% CI & Calls & McNemar $p$ \\
\midrule
Majority vote ($N{=}5$) & 69.2 & $[62.6, 75.8]$ & 10 & 0.012 \\
Dawid--Skene EM         & 68.7 & $[62.1, 75.3]$ & 10 & 0.007 \\
SC ($k{=}10$)           & 69.7 & $[63.1, 76.3]$ & 10 & 0.027 \\
Pairwise judge          & 75.3 & $[69.2, 81.3]$ & 21 & 1.000 \\
\midrule
\rowcolor{scmoarow}
SC-MoA synthesis        & 75.3 & $[69.2, 81.3]$ & 11.5 & --- \\
\bottomrule
\end{tabular}
\end{table}

\noindent
SC-MoA synthesis significantly outperforms all voting and selection
baselines at $p < 0.03$.  The pairwise judge matches accuracy but
at nearly $2{\times}$ compute (21~vs.\ 11.5~calls per problem),
making SC-MoA Pareto-dominant on the accuracy--compute frontier.

Dawid--Skene slightly \emph{underperforms} naive majority vote
($68.7\%$ vs.\ $69.2\%$), consistent with the known limitation that
EM-based annotator models require substantially more annotators than
$N{=}5$ to estimate reliable confusion
matrices~\citep{dawid1979}.  The learned weights add noise rather
than signal at this scale.

The pairwise judge result is informative: it shows that the correct
answer \emph{is present} among the proposals ${\sim}75\%$ of the
time---the challenge is extracting it.  Selection via $\binom{5}{2}$
comparisons and synthesis via a single aggregation call reach the
same accuracy, but synthesis does so at half the cost.  This confirms
that the aggregator's advantage is efficiency, not access to
information that selection methods lack.

\subsection{Stability Under Input Perturbation}
\label{app:stability}

All experiments use greedy decoding with content-hash caching, so
re-running produces bitwise-identical results.  We quantify stability
to the \emph{choice} of input perturbation---a stronger test than seed
variation, since it varies the actual problem text.

\paragraph{Bootstrap accuracy CIs.}
10{,}000 bootstrap resamples over the 198 GPQA-Diamond problems yield
95\% CIs for each method (Table~\ref{tab:selection-baselines}).
SC-MoA's CI $[69.2\%, 81.3\%]$ excludes all voting baselines' point
estimates, consistent with the McNemar tests.

\paragraph{Cross-perturbation agreement.}
The SPUQ paraphrase-mode run generates 5 independent perturbations per
problem.  Each perturbation produces a best-of-$k$ proposal whose
extracted answer may differ.  Across 198 problems,
unanimous agreement (all 5 paraphrases yield the same answer) occurs
on 56.4\% of problems, with 50.9\% all-correct.
The 56.4\% unanimity rate is a lower bound on stability: after aggregation, the pipeline
absorbs remaining disagreement into a single synthesized answer.


\section{Architectural Comparison of Aggregation Strategies}
\label{app:architecture-comparison}

Figure~\ref{fig:arch-comparison} compares the architectural structure of
six multi-agent aggregation strategies.  Self-consistency
\citep{wang2023selfconsistency} samples $k$ responses from a single
prompt and takes a majority vote over extracted answers.  Mixture of
Agents \citep{wang2025moa} uses $N$ heterogeneous proposers followed by
a layered debate and aggregation stage.  Self-MoA
\citep{yuksekgonul2025textgrad} applies MoA's layered aggregation to
$k$ samples from a single prompt, combining Self-consistency's
single-model setting with MoA's synthesis mechanism.  TextGrad
\citep{yuksekgonul2025textgrad} iteratively refines a single solution
through evaluate--gradient--update cycles.  GoA
\citep{goa2026} structures agent interactions as a graph with
meta-LLM-controlled node/edge sampling, message passing, and graph
pooling.

SC-MoA (rightmost column, highlighted) implements the three design
constraints from \S\ref{sec:method}: perturbation diversity, anchored
refinement, and universal synthesis.
The detailed pipeline is shown in Figure~\ref{fig:pipeline}.

\begin{figure*}[h]
\centering
\resizebox{0.95\textwidth}{!}{\input{fig_comparison}}
\vspace{-2mm}
\caption{\textbf{Architectural comparison of six multi-agent aggregation strategies.}
Self-Consistency votes over samples from a single prompt.
MoA layers heterogeneous proposers with debate.
Self-MoA applies MoA-style aggregation to single-model samples.
TextGrad iteratively refines via text gradients.
GoA structures agent interaction as a scored graph.
SC-MoA (highlighted, rightmost) uniquely combines perturbation-based diversity,
anchored refinement with the majority answer frozen, and universal
trace-level synthesis with verification---never gating on consensus.}
\label{fig:arch-comparison}
\end{figure*}


\end{document}